\documentclass[journal, onecolumn, 10pt, letterpaper]{IEEEtran}
\usepackage[letterpaper, top=2cm, bottom=2.1cm, left=2cm, right=2cm]{geometry}

\usepackage{graphicx} 
\usepackage{amsmath}
\usepackage{amssymb}
\usepackage[normalem]{ulem}
\usepackage{algorithm}
\usepackage{xcolor}
\usepackage{bbm}
\usepackage{mathtools}
\usepackage{hyperref}
\usepackage{mathrsfs}
\usepackage{algpseudocode}
\usepackage{systeme}
\usepackage{changepage}
\usepackage{amsthm}
\usepackage{thmtools}
\usepackage{thm-restate}
\usepackage{enumitem}
\usepackage{cleveref}
\usepackage[normalem]{ulem}

\usepackage{mathtools}

\usepackage{tikz}
\usetikzlibrary{fit}
\usetikzlibrary{decorations.pathreplacing}

\DeclarePairedDelimiter\floor{\lfloor}{\rfloor}

\def \P {{\mathcal{P}}}
\def \Ev {{\mathcal{E}}}

\def \I {{\mathbbm{1}}}
\def \vY {{\mathbf{Y}}}

\def \vI {{\mathbf{I}}}
\def \vJ {{\mathbf{J}}}

\newcommand{\Alt}{\mathrm{Alt}}
\newcommand{\KL}{\mathrm{KL}}
\newcommand{\Sel}{\textsc{Select}}
\newcommand{\Alg}{\mathrm{A}^3\mathrm{CNP}}

\mathtoolsset{showonlyrefs}

\newcommand{\Ber}{\mathrm{Ber}}

\usepackage{preamble_IEEE}

\begin{document}

\title{Almost Asymptotically Optimal Active   Clustering
Through Pairwise Observations} 


\author{%
  \IEEEauthorblockN{Rachel~S.~Y.~Teo\IEEEauthorrefmark{1} ~~ P.~N.~Karthik\IEEEauthorrefmark{2} ~~ Ramya~Korlakai~Vinayak\IEEEauthorrefmark{3} ~~ Vincent~Y.~F.~Tan\IEEEauthorrefmark{1}} \\
  \IEEEauthorblockA{\IEEEauthorrefmark{1}National University of Singapore ~~ \IEEEauthorrefmark{2} Indian Institute of Technology Hyderabad ~~  \\ \IEEEauthorrefmark{3} University of Wisconsin-Madison \\
Emails: \quad rachel.tsy@u.nus.edu ~~ pnkarthik@ai.iith.ac.in ~~ ramya@ece.wisc.edu ~~ vtan@nus.edu.sg}
}

\maketitle

\begin{abstract}
We propose a new analysis framework for  clustering $M$ items into an unknown number of $K$ distinct groups using noisy and actively collected   responses. At each time step, an agent is allowed to query pairs of items and observe bandit binary feedback. If the pair of items belongs to the same (resp.\ different) cluster, the observed feedback is $1$ with probability $p>1/2$ (resp.\ $q<1/2$). Leveraging the ubiquitous change-of-measure technique, we establish a fundamental lower bound on the expected number of queries needed to achieve a desired confidence in the clustering accuracy, formulated as a sup-inf optimization problem. Building on this theoretical foundation, we design an asymptotically optimal algorithm in which the stopping criterion involves an empirical version of the inner infimum---the Generalized Likelihood Ratio (GLR) statistic---being compared to a threshold. We develop a computationally feasible variant of the GLR statistic and show that its performance gap to the lower bound can be accurately empirically estimated and  remains within a constant multiple of the lower bound.
\end{abstract} 

\section{Introduction}

Clustering is a fundamental task in unsupervised learning, and is concerned with partitioning a set of items into groups such that items within the same group are more similar to one another than to items in different groups \cite{driver1932quantitative,RUSPINI196922,jain1999data,xu2005survey}. Classical approaches such as $K$-means \cite{macqueen1967kmeans}, hierarchical clustering \cite{johnson1967hierarchical}, and spectral methods \cite{ng2002spectral} typically rely on explicit similarity measures to uncover cluster structure. An alternative model---clustering with the help of a noisy oracle---proposed by \cite{mazumdar2017clustering}, uses crowdsourcing to uncover underlying structure in the data. 

In this model, we are given a set of $M$ unlabeled items to be clustered into $K$ clusters, where $K$ is unknown. There is an oracle which, when queried with two items, specifies if they belong to the same cluster. However, the answer could be wrong with a certain probability. The motivation for this model lies in its practical relevance, for example to crowdsourced entity resolution  \cite{fellegi1969theory, elmagarmid2007duplicate, getoor2012entity}, signed edge prediction in social networks \cite{leskovec2010predicting, burke2008mopping, brzozowski2008friends, chen2016edge} and correlation clustering \cite{bansal2004correlation,ailon2018approximate, saha2019correlation}. 

Clustering with noisy queries is closely related to the Stochastic Block Model (SBM), a widely recognized framework for analyzing graph clustering and community detection problems \cite{holland1983stochastic, condon2001graph, abbeSandon2015community, abbe2018community}. In the SBM, a set of $M$ vertices (items) is divided into $K$ hidden partitions, and random graphs are generated based on these vertex memberships. An edge is formed between any two vertices with a probability $p$ if they belong to the same cluster, and with probability $q$ if they do not, where $q<p$. The goal is to recover the underlying clusters from a random graph sampled according to the SBM.

However, our framework differs in several key aspects from SBM studies. First, we obtain {\em multiple adaptively sampled} queries for each pair of items, while the SBM provides a {\em single snapshot} of the graph's connectivity that reflects the relationships among the items. Second, akin to fixed-confidence best arm identification \cite{garivier2016optimal}, we investigate the asymptotic regime where the probability of error in recovering the true clustering approaches zero. This regime is natural in high-assurance entity resolution tasks such as patient record linkage across hospital information systems: here, $M$ denotes the (finite) set of patient records, pairwise queries correspond to costly manual or expert checks, and an erroneous {\em merge} (conflating two patients) or {\em split} (fragmenting one patient into multiple identities) can have severe downstream clinical consequences. Consequently, system requirements are often specified as a stringent target error probability level, making the asymptotics of vanishing error probabilities more relevant than the traditional limit of letting $M \to \infty$ in active clustering. Lastly, the probabilities $p=p_M$ and $q=q_M$ in SBMs are functions of $M$, typically scaling as $\Theta\big(\frac{\log M}{M}\big)$. In our context, however, $p$ and $q$ are treated as fixed but unknown constants.


{\bf Contributions.} We introduce a theoretically grounded algorithm that minimizes the expected number of queries in the regime of vanishing error probabilities. 
This setting aligns closely with \cite{ChenVinayakHassibi2023}. Our main contributions are:
\begin{itemize}[leftmargin=*]
    \item We formulate the clustering with noisy oracle problem as one with bandit feedback, with an arm representing a pair of items to query. We define an expression for an instance of the problem, characterized by the unique partition of the $M$ items and the  probabilities of the oracle's feedbac  $0 \le q < 1/2< p \le 1$.  
    \item We derive an asymptotic, instance-dependent lower bound on the expected sample complexity, expressed as a weighted binary KL divergence sup-inf optimization  that captures the hardness of the instance relative to its closest alternatives. Leveraging the structure of the problem, we significantly reduce the search space of alternatives, enabling the design of a principled sampling rule.
    \item We develop an asymptotically optimal algorithm that, although statistically optimal, relies on a computationally intractable stopping rule. In response, we propose a practical, computationally feasible variant whose performance, while suboptimal, admits  a multiplicative upper bound on its deviation from optimality (the asymptotic lower bound) as well as an empirical approximation that converges to the theoretical upper bound.
\end{itemize}

{\bf Related Works.} Mazumdar and Saha \cite{mazumdar2017clustering} pioneered the study of clustering with noisy queries, introducing a computationally intractable algorithm that approaches their information-theoretic lower bound on the sample complexity. They also proposed a more efficient but sub-optimal algorithm with quadratic growth in sample complexity relative to the number of clusters. Building on this, \cite{mukherjee2023recovering} proposed algorithms to recover clusters of size $\Omega(K \log M)$ with fixed error probability, while \cite{10.1145/3366423.3380045} enhanced methods for the two-cluster case and \cite{peng2021query} reduced dependencies on $p-q$. Recently, \cite{xia2022optimal} established a new lower bound and presented a polynomial-time algorithm achieving an upper bound up to an additive term by linking to multi-armed bandits, albeit using a different approach of forming sub-clusters. Our work addresses a practical scenario with unknown $p$ and $q$ and allows repeated queries. Relatedly, \cite{ChenVinayakHassibi2023} proposed a computationally efficient active clustering algorithm based on the law of iterated logarithms (LIL) for multi-arm bandits, ensuring high-probability recovery with nearly optimal sample complexity.  Previous efforts using passive strategies have limited effectiveness with small clusters, focusing instead on all item pairs or randomly or carefully selected subsets of pairs \cite{NIPS2014_9a2cb485, vinayak2016crowdsourced, gomes2011crowdclustering, 9529053}.

Our approach builds on techniques from \emph{fixed-confidence pure exploration}, particularly those achieving asymptotic optimality as error probability vanishes~\cite{garivier2016optimal, jedra2020optimal}. Similar to~\cite{yang2024}, we frame our online clustering task as a pure exploration problem using pairwise comparisons, akin to dueling bandits~\cite{yue2012}.  While some studies focus on clustering structures in multi-armed bandit algorithms for regret minimization \cite{nguyen2014dynamic, gentile2014online, li2016collaborative, bouneffouf2019optimal, singh2020multi}, our emphasis lies in uncovering the actual partition of the arms based on pairwise feedback.

\section{Problem Setup and Preliminaries}
\label{sec:setup}
Consider $M$ items, each of which belongs to one of $K \in [M]$ clusters, where $[M] \coloneqq \{1, \ldots, M\}$. Their cluster assignments are specified by $c_i \in [K]$ for $i \in [M]$. For $\ell \in [K]$, we define $g_\ell \coloneqq \{i \in [M]: c_i=\ell\}$ as the set of items in cluster $\ell$. We assume that the number of clusters $K$ is not known beforehand. Let $I = \{(i,j) \in [M] \times [M]: i < j \}$ be an index set that represents all  possible distinct pairs of items. We denote by $\P_{I}$ the set of distributions on~$I$. 

At each time  $t \in \mathbb{N}$, we query whether items $\vI_t = i_t$ and $\vJ_t = j_t$ belong to the same cluster and obtain an observation $\vY_{i_t j_t}$ given by
\begin{align}
    \vY_{i_t j_t}  \sim v_{i_t j_t} = \begin{cases}
        \Ber(p), & \text{if $c_{i_t}=c_{j_t}$}, \\
        \Ber(q), & \text{if $c_{i_t}\ne c_{j_t}$},
    \end{cases}
    \label{eq:feedback}
\end{align}
for $0 \le q < 1/2< p \le 1$. That is, the observation  $\vY_{i_t j_t} $ obtained upon querying any pair of items follows a Bernoulli distribution with parameter $p$ (resp.~$q$) if the items are in the same cluster (resp.~distinct clusters). 
The parameters $p$ and $q$ are assumed to be unknown. 
We let $N_{ij}(t)$ denote the number of feedback observations from queries about 
 the pair $(i,j) \in I$  up to time $t$. 

Let $\mathcal{S}^M \coloneqq \{ C \in \R^{M \times M}: C^\top = C\} $ be the set of symmetric matrices of size $M$ by $M$ and $\mathcal{C} = \big\{  C\in \mathcal{S}^M  : \forall (i,j) \in I, c_{ij} = \I \{c_i = c_j\} p + \I \{c_i \ne c_j\} q  \;\mbox{ for some }\; 0\le q < 1/2 < p\le 1 \big\}$. The set $\mathcal{C}$ characterizes all {\em problem instances}, and consists of all possible clustering of the $M$ items combined with the different values of $p$ and $q$. 

Our objective  for the above {\em crowdsourced clustering problem}  is to find the correct clustering of the items based on the observations.    We study the {\em   fixed-confidence} setting where, given a confidence level $\delta \in (0,1)$, we wish to design an algorithm that (almost) minimizes the expected number of time steps, denoted as $\E [\tau_\delta]$, required to identify the correct clustering with probability  $\ge 1-\delta$ (a {\em $\delta$-correct algorithm}). In particular, we seek to design and analyze algorithms that are close to being asymptotically optimal, in a sense to be made precise. 

{\bf Equivalence Relation:} Let $C_=$ and $C_{\ne}$ be $\{0,1\}$-valued matrices such that $(c_=)_{ij} = \I \{c_i = c_j\}$ and $(c_{\ne})_{ij} = \I \{c_i \ne c_j\}$. Define the relation $\sim$ on $\mathcal{C}$ as follows: For $C, C' \in \mathcal{C}$, let $C \sim C'$ if $C_= = C'_=$ and $C_{\ne} = C'_{\ne}$. It is easy to verify that $\sim$ is an equivalence relation. This partitions the set of all possible instances as $\mathcal{C} = \bigcup_{K=1}^{M} \{C' \in \mathcal{C}: \exists\, C \in \mathcal{C} \text{ with } K \text{ clusters and } C' \sim C\} := \bigcup_{K \le M} \mathcal{C}^K$, where $\mathcal{C}^K$ represents the set of all instances having exactly $K$ clusters. 
We say that $C$ has $K$ clusters if $C$ represents cluster assignments such that there are exactly $K$ different clusters. Each partition $\mathcal{C}^K$ is characterized by the number of clusters $K$ that each instance represents. 
Let $\mathcal{C}/\sim$ denote the set of all equivalence classes, and let $[C] \in  \mathcal{C}/\sim$ denote the equivalence class associated with instance~$C$. Let $f: \widetilde{\mathcal{C}} \rightarrow \mathscr{P}\big(\mathscr{P}(M)\big)$ be defined as $f([C])=\{g_1,g_2,\ldots,g_K\}$ for $C \in \mathcal{C}^K$, where $g_1,g_2,\ldots,g_K$ represent the cluster assignments in $C$, and $\mathscr{P}(M)$ denotes the power set of $[M]$.  Observe that a cluster assignment of the form $\{g_1, \ldots, g_K\}$ is an element of $\mathscr{P}\big(\mathscr{P}(M)\big)$.  For ease of understanding, we provide an example in Appendix \ref{app:eg}. 

{\bf Notation:} Let $\mathcal{F}_t = \sigma ( \{(\vI_s,\vJ_s) , \vY_{\vI_s\vJ_s} \}_{s=1}^t  )$  be the $\sigma$-algebra generated by the observations up to time~$t$. 
Let $\tau_\delta$ be a stopping time with respect to the filtration $\{ \mathcal{F}_t \}_{t \in \mathbb{N}}$.

\section{Lower Bound}
\label{sec:lower_bound}
We establish an instance-dependent lower bound on the expected sample complexity $\E_C [ \tau_\delta]$ in the active crowdsourced clustering problem. For an instance $C\in \mathcal{C}$, let
\begin{align*}
\Alt(C) \coloneqq \bigcup_{\substack{[C']\in \widetilde{\mathcal{C}} :~ 
[C'] \ne [C]}} [C']
\end{align*} 
denote the set of {\em alternative} instances that represent an incorrect clustering different from that of $C$. 
\begin{restatable}{theorem}{lowerbound}
\label{thm:lower_bound}
    For a   confidence level $\delta \in (0,1)$ and instance $C\in \mathcal{C}$, any $\delta$-correct algorithm 
    satisfies 
    \begin{align*}
       \E_C [ \tau_\delta] \ D^*(C) \, \ge \, {\rm KL} \big(\mathrm{Ber}(\delta) , \mathrm{Ber}(1-\delta) \big), 
    \end{align*}
    where 
    \begin{align}
    D^*(C)   \coloneqq \sup_{\lambda \in \P_{I}} \inf_{C' \in \Alt(C)}  \sum_{(i,j) \in I}  \lambda_{ij} \KL(v_{ij}, v'_{ij}),     \label{eqn:d_star_C}
    \end{align}
    and $v'_{ij}$ is the   distribution for $C'$ analogous to \eqref{eq:feedback} (with corresponding parameters $p' > 1/2 > q'$). Furthermore,
    \begin{align}
    \liminf_{\delta \rightarrow 0}\frac{\E_C[\tau_\delta]}{\ln(1/\delta)} \ge D^*(C)^{-1}. 
    \label{eq:lower-bound}
    \end{align}
\end{restatable}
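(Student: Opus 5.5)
We will prove Theorem~\ref{thm:lower_bound} with the standard change-of-measure argument of \cite{garivier2016optimal}, adapted to arms indexed by pairs $(i,j)\in I$. Fix a $\delta$-correct algorithm with $\E_C[\tau_\delta]<\infty$; otherwise there is nothing to prove. Fix also any alternative $C'\in\Alt(C)$, and assume first that $\KL(v_{ij},v'_{ij})<\infty$ for every pair.

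The first step is the transportation (data-processing) inequality of Kaufmann, Capp\'e and Garivier. For any event $E\in\mathcal{F}_{\tau_\delta}$,
\begin{align*}
\sum_{(i,j)\in I}\E_C[N_{ij}(\tau_\delta)]\,\KL(v_{ij},v'_{ij}) \;\ge\; \KL\big(\Ber(\P_C(E)),\Ber(\P_{C'}(E))\big).
\end{align*}
The left side is the expected log-likelihood ratio at the stopping time, computed via Wald's identity. The sampling rule is $\mathcal{F}$-predictable, so its contributions cancel in the likelihood ratio, and only the Bernoulli observation terms remain.

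We take $E=\{\hat{\mathcal{G}}=f([C])\}$, the event that the output clustering equals the true one. By $\delta$-correctness, $\P_C(E)\ge1-\delta$. Since $[C']\ne[C]$, the clustering of $C'$ differs from that of $C$, so $\P_{C'}(E)\le\delta$. For $\delta<1/2$, monotonicity of the binary KL then gives a right side of at least $\KL(\Ber(\delta),\Ber(1-\delta))$. For $\delta\ge1/2$ this quantity is still symmetric and nonnegative, and the bound holds trivially through monotonicity, or by the symmetric argument.

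The second step normalizes. Write $\lambda_{ij}=\E_C[N_{ij}(\tau_\delta)]/\E_C[\tau_\delta]$, which lies in $\P_I$ because $\sum N_{ij}(\tau_\delta)=\tau_\delta$. The inequality becomes
\begin{align*}
\E_C[\tau_\delta]\sum\lambda_{ij}\KL(v_{ij},v'_{ij})\ge\KL(\Ber(\delta),\Ber(1-\delta)).
\end{align*}
This holds for every $C'\in\Alt(C)$. It also holds trivially when some KL term with $\lambda_{ij}>0$ is infinite. That case arises only when $p=1$ or $q=0$ with $p'\ne1$ or $q'\ne0$, and it is harmless because the left side is then infinite. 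Taking the infimum over $C'$ and bounding by the supremum over $\lambda\in\P_I$ yields $\E_C[\tau_\delta]D^*(C)\ge\KL(\Ber(\delta),\Ber(1-\delta))$.

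For \eqref{eq:lower-bound}, we use $\KL(\Ber(\delta),\Ber(1-\delta))=(1-2\delta)\ln\frac{1-\delta}{\delta}$. Dividing by $\ln(1/\delta)$, this ratio tends to $1$ as $\delta\to0$. So provided $0<D^*(C)<\infty$, the liminf bound follows. We must also check $D^*(C)<\infty$: the uniform $\lambda$ together with a fixed alternative having parameters $p'=p$ and $q'=q$ and a single item moved between clusters gives a finite infimum. If $D^*(C)$ were $0$ or $\infty$, the statement would read accordingly, with $D^*(C)^{-1}=\infty$ or $0$ respectively.

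The main obstacle is the careful treatment of degenerate parameters ($p=1$ or $q=0$) and the justification of Wald's identity for the random, adaptive counts $N_{ij}(\tau_\delta)$. For the latter, we will cite the lemma of \cite{garivier2016optimal} directly; it needs only $\E_C[\tau_\delta]<\infty$ and the absolute continuity of $v_{ij}$ with respect to $v'_{ij}$ on the sampled pairs. Where absolute continuity fails, the corresponding alternatives contribute $+\infty$ to the inner objective and so do not affect the infimum.
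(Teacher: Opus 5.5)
Your proposal follows the paper's proof essentially step for step: the transportation lemma of Kaufmann et al.\ applied to the event that the output is the true clustering, normalization by $\E_C[\tau_\delta]$ to get an allocation in $\P_I$, infimum over $C'\in\Alt(C)$ and then supremum over $\lambda$, and finally $d(\delta,1-\delta)/\ln(1/\delta)\to 1$. Two side remarks are wrong and should be corrected, although neither affects the main argument.

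First, your claim that for $\delta\ge 1/2$ the bound ``holds trivially'' is false. The step $d(\Pr_C(E),\Pr_{C'}(E))\ge d(\delta,1-\delta)$ uses monotonicity of the binary KL. It therefore needs $\Pr_{C'}(E)\le\delta\le 1-\delta\le\Pr_C(E)$, i.e.\ $\delta\le 1/2$. For $\delta>1/2$ the inequality can genuinely fail. Take $M=2$: querying the single pair once and declaring ``same cluster'' iff $Y=1$ is $\delta$-correct for every $\delta\ge 1/2$. Yet for the ``same'' instance with $p=0.6$, $\E_C[\tau_\delta]\,D^*(C)=d(p,1/2)\approx 0.02$, while $d(0.9,0.1)\approx 1.76$. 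The paper's proof tacitly uses the same restriction, so the non-asymptotic bound should be read for $\delta\le 1/2$. The asymptotic statement is unaffected.

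Second, your description of when a KL term is infinite is reversed. $\KL(\Ber(1),\Ber(p'))=\ln(1/p')$ is finite. The infinite terms are exactly those where the alternative's law is a point mass ($p'=1$ or $q'=0$) that differs from the true law. Your handling (the left side is infinite, so the inequality is trivial) still covers these cases. Your final paragraph also states the correct direction of absolute continuity, $v_{ij}\ll v'_{ij}$, so nothing breaks.

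Likewise, finiteness of $D^*(C)$ is not actually needed. Where you do argue it, evaluating at the uniform $\lambda$ is not enough. Instead, any fixed alternative with $p'\in(1/2,1)$ and $q'\in(0,1/2)$ bounds the objective uniformly in $\lambda$.
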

The proof of Theorem \ref{thm:lower_bound} can be found in Appendix \ref{app:prf_lower_bd}.
\begin{remark}
\label{rmk:lower_bound}
    {\em We compute the  expected sample complexity in terms of $\Delta = p-q$ and $K$ on a simple example. Consider an instance in which the number of clusters $K=M/2$, the (even) number of items $M>4$, and each cluster has exactly two items, i.e., $|g_i|=2$ for all $i$. Evaluating $D^*(C)$ explicitly, we find that
    \begin{align}
      \liminf_{\delta \rightarrow 0}\frac{\E_C[\tau_\delta]}{\ln(1/\delta)}  \ge \underbrace{\biggl[  \Delta \log\frac{1+\Delta}{1-\Delta} \biggr]^{-1}}_{\Theta(1/\Delta^2)  \text{ as }  \Delta\to0 } \underbrace{\frac{K^4-5K^2+4K}{K^3-K^2-2}}_{\Theta(K)   \text{ as }  K \to\infty}.  \label{eqn:explicit_lb}
    \end{align}  
    We highlight that the framework used to derive our  bound differs from those employed in the literature. Here,  the expected number of pairwise queries $\E_C[\tau_\delta]$ is normalized by $\ln(1/\delta)$, and $\delta\to 0$ in~\eqref{eqn:explicit_lb}. This approach is akin to  work on fixed-confidence best arm identification \cite{garivier2016optimal}.  In contrast, other studies such as \cite{mazumdar2017clustering} and \cite{ChenVinayakHassibi2023} derive high-probability bounds for $\tau_\delta$ without normalization, and their results typically scale as $\tau_\delta=\widetilde{O}(MK/\Delta^2)$. In these analyses, the failure probability is also chosen to decay polynomially in $(M,K)$. Consequently, while the dependence of our bound in~\eqref{eqn:explicit_lb} on $\Delta$ aligns with that of previous works, the dependence on $K$ appears to be less consistent. 
     as in our current example, $MK = \Theta(K^2)$. 
Nonetheless, direct comparison is not possible due to  the normalization  $\ln(1/\delta)$ and the  limit as $\delta\to0$.}
\end{remark}

The quantity $D^*(C)$ in \eqref{eqn:d_star_C} captures the ``hardness'' to identify the clustering in $C$ as a function of the KL divergences between the Bernoulli distributions corresponding to $C$ (see~\eqref{eq:feedback}) and that of its most confusing alternative $C'$. By definition, $D^*(C)$ is the solution to a sup-inf   problem, and $D^*(C)^{-1}$ is the instance-dependent lower bound. 
Solving for $D^*(C)$ yields $\Lambda^*(C)$, the set of all optimal allocations  across  pairs. Sampling according to any $\lambda^* \in \Lambda^*(C)$ at each time step provides the most informative choices of pairs to query to achieve the sample complexity lower bound in \eqref{eq:lower-bound}, effectively demonstrating its tightness as $\delta \to 0$.
We will use $\lambda^*$ to guide the sampling rule in our algorithm in Section \ref{sec:achievability} to provide the most informative choices of pairs to query to achieve the sample complexity lower bound in \eqref{eq:lower-bound}.

To do so, we need to solve for the infimum over $\Alt(C)$ appearing in the expression for $D^*(C)$ in \eqref{eqn:d_star_C}. However, this is  computationally expensive, because there are a substantial number of different equivalence classes $[C']$ in $\Alt(C)$, with each equivalence class governed by a unique clustering of the $M$ items.  The total number of equivalence classes, also known as the Bell number \cite{moser1955asymptotic}, grows super-exponentially in $M$. 
Additionally, within each equivalence class, there exist uncountably many alternative instances $C'$ parametrized by $(p',q')$ satisfying $0\le q'< 1/2< p' \le 1$. This leads to practical challenges in solving the infimum over $\Alt(C)$ and obtaining a closed-form expression for $\Lambda^*(C)$. We derive a simplified expression for the objective  in $D^*(C)$, and show that the infimum over $\Alt(C)$ is equal to the infimum over a   smaller set  $\min(C)$. Using these insights, we solve for $\Lambda^*(C)$. 

Below, we describe our simplification technique in more detail. For a fixed $C$ and $C' \in \Alt(C)$, let 
\begin{alignat*}{2}
    N_p &\coloneqq \{ (i,j) \in I: c_i = c_j \} \subset I, \\
    N_q &\coloneqq \{ (i,j) \in I: c_i \ne c_j \} \subset I, \\
    N_1 &\coloneqq \{ (i,j) \in I: c_i = c_j \text{ and } c'_i \ne c'_j \} \subset N_p, \\
    N_2 &\coloneqq \{ (i,j) \in I: c_i \ne c_j  \text{ and } c'_i = c'_j \} \subset N_q.
\end{alignat*}
Here, $N_p$ (resp.\ $N_q$) denotes the pairs of items belonging to the same cluster (resp.\ different clusters) in $C$, while $N_1$ (resp.\ $N_2$) denotes the pairs of items that belong to the same cluster (resp.\ different clusters) in $C$ but different clusters (resp.\ same cluster) in $C'$. For ease of notation, we also introduce mappings $\Lambda_{N_1}, \Lambda_{N_2}: \Alt(C) \rightarrow \mathscr{P}(I)$ such that $\lambda_{N_1}(C') = N_1$ and $\lambda_{N_2}(C') = N_2$. 
Notice that 
\begin{alignat*}{2}
    \KL(v_{ij},v'_{ij})&=d(p,q') \quad \forall (i,j) \in N_1, \\
    \KL(v_{ij},v'_{ij})&=d(q,p') \quad \forall (i,j) \in N_2, \\
    \KL(v_{ij},v'_{ij})&=d(p,p') \quad \forall (i,j) \in N_p\setminus N_1, \\ \KL(v_{ij},v'_{ij})&=d(q,q') \quad \forall (i,j) \in N_q \setminus N_2,
\end{alignat*}
where $p'$ and $q'$ are the Bernoulli parameters of $C'$. Using the above notations, we have
\begin{align}
\label{eqn:sup-inf}
    D^*(C) = \sup_{\lambda \in \P_{I}} \inf_{C' \in \Alt(C)} h(\lambda, C'),
    \end{align} 
    where $h(\lambda, C')$ is defined as the sum  
    \begin{align}
        h(\lambda, C')&=d(p,q')  \sum_{(i,j)\in \Lambda_{N_1}(C') } \lambda_{ij}+ d(q,q') \sum_{(i,j)\in N_q \setminus \Lambda_{N_2}(C') }\lambda_{ij} \nonumber\\* 
       &\quad+  d(q,p')   \sum_{(i,j)\in \Lambda_{N_2}(C')}\lambda_{ij} + d(p,p')\sum_{(i,j)\in N_p \setminus \Lambda_{N_1}(C') } \lambda_{ij}.
    \end{align}



\begin{restatable}{theorem}{innerinf}
\label{thm:inner_inf}  
Fix $M>K>1$. For $C \in \mathcal{C}^K$, let $\min(C)  \coloneqq \big(\mathcal{C}^{K-1} \cup \mathcal{C}^{K+1}\big) \cap \Alt(C)_{1}$, 
where $\Alt(C)_1$ is the set of alternative instances whose clustering may be obtained from that of $C$ by either splitting a cluster in $C$ into two clusters, or merging two clusters in $C$ into one.
Then,  we can shrink the $\Alt(C)$ set in~\eqref{eqn:sup-inf} to a considerably smaller subset $\min(C)$
\begin{align}
    D^*(C) = \sup_{\lambda \in \P_{I}}    \inf_{C' \in \Alt(C)} h(\lambda, C') 
    = \sup_{\lambda \in \P_{I}}  \inf_{C' \in \min(C)} h(\lambda, C').  \label{eq:inner_inf_minC}
\end{align}
\end{restatable}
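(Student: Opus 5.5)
The plan is to prove the two inequalities in \eqref{eq:inner_inf_minC} separately. Since $\min(C)\subseteq \Alt(C)$, the infimum over $\min(C)$ is at least the infimum over $\Alt(C)$ for every $\lambda$. For the reverse inequality it suffices to show the following pointwise domination claim: for every $\lambda\in\P_I$ and every $C'\in\Alt(C)$ there is $C''\in\min(C)$ with $h(\lambda,C'')\le h(\lambda,C')$. Then the two inner infima agree for each $\lambda$, and taking $\sup_\lambda$ gives \eqref{eq:inner_inf_minC}.

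To construct $C''$ from $C'$ (with parameters $p',q'$), we split into two cases according to whether $N_1=\Lambda_{N_1}(C')$ is empty.
\begin{itemize}
\item \emph{$N_1\neq\emptyset$.} Some cluster $g$ of $C$ is cut by the clustering of $C'$. Pick a $C'$-cluster $G$ with $\emptyset\neq A:=g\cap G\subsetneq g$, and let $C''$ split $g$ into $A$ and $g\setminus A$, leaving all other clusters unchanged. Then $C''\in\mathcal{C}^{K+1}\cap\Alt(C)_1$, with $\Lambda_{N_2}(C'')=\emptyset$. Moreover, every pair in $\Lambda_{N_1}(C'')$ (one item in $A$, one in $g\setminus A$) is separated in $C'$, so $\Lambda_{N_1}(C'')\subseteq N_1$.
\item \emph{$N_1=\emptyset$.} Every cluster of $C'$ is a union of clusters of $C$. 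Since $C'\neq C$ as clusterings, some $C'$-cluster contains two distinct $C$-clusters $g_a,g_b$. Let $C''$ merge only $g_a$ and $g_b$. This uses $K>1$, and gives $C''\in\mathcal{C}^{K-1}\cap\Alt(C)_1$, with $\Lambda_{N_1}(C'')=\emptyset$ and $\Lambda_{N_2}(C'')\subseteq N_2$.
\end{itemize}
In both cases $C''$ disagrees with $C$ only on pairs where $C'$ also disagrees with $C$, and in the same direction.

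The main obstacle is choosing the parameters $(p'',q'')$ of $C''$. Simply copying $(p',q')$ can fail, because a pair in $N_2\setminus\Lambda_{N_2}(C'')$ switches its cost from $d(q,p')$ to $d(q,q')$, and if $q'<q$ the latter need not be smaller. The fix is to clip toward the truth:
\[
p''=\min(p,p'), \qquad q''=\max(q,q'),
\]
which still satisfies $q''<1/2<p''$.

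The verification is then term by term, using only that $d(p,\cdot)$ is decreasing on $[0,p]$ and increasing on $[p,1]$, and similarly for $d(q,\cdot)$ around $q$.
\begin{itemize}
\item Clipping moves each parameter toward $p$ (resp.\ $q$), so $d(p,p'')\le d(p,p')$ and $d(q,q'')\le d(q,q')$.
\item For a pair where $C''$ agrees with $C$ but $C'$ does not:
  \begin{itemize}
  \item In $N_1$, we need $d(p,p'')\le d(p,q')$. This holds because $q'<1/2<p''\le p$.
  \item In $N_2$, we need $d(q,q'')\le d(q,p')$. This holds because $q\le q''<1/2<p'$.
  \end{itemize}
\item For pairs where both disagree with $C$, we need $d(p,q'')\le d(p,q')$ and $d(q,p'')\le d(q,p')$. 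Both follow from monotonicity, since $q''\ge q'$ lies below $p$ and $p''\le p'$ lies above $q$.
\end{itemize}
Since each $\lambda_{ij}\ge 0$, summing these termwise inequalities gives $h(\lambda,C'')\le h(\lambda,C')$. This proves the domination claim and hence the theorem.
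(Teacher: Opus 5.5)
Your proof is correct, and it takes a genuinely different route from the paper's.

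\emph{The paper's route.} It proceeds in three steps.
First, it solves the inner infimum over the parameters of each equivalence class in closed form (Lemma~\ref{lem:pqstar}): the optimal $(p'',q'')$ are weighted averages of $p$ and $q$, clipped at $1/2$.
Second, it treats the masses $\lambda_{N_1},\lambda_{N_2}$ on the disagreement sets as free continuous variables and differentiates the resulting value $z(\lambda_{N_1},\lambda_{N_2})$. This shows $z$ is increasing in both arguments (Lemma~\ref{lem:inc_z}).
Third, it reduces combinatorially in two stages. Alternatives with $K$, $K\pm 2,\dots$ clusters are replaced by ones with $K\pm1$ clusters (Lemma~\ref{lem:K+1K-1}). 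Then $K\pm1$-cluster alternatives that are not a single split or merge are replaced by elements of $\Alt(C)_1$. Each replacement exhibits a class whose disagreement sets are contained in those of $C'$.

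\emph{Your route.} Your clipping $p''=\min(p,p')$, $q''=\max(q,q')$ proves the needed monotonicity under inclusion of disagreement sets directly. It uses only the unimodality of $d(p,\cdot)$ and $d(q,\cdot)$ and the separation $q''<1/2<p''$, with no optimization over parameters and no calculus. Your dichotomy on whether $\Lambda_{N_1}(C')$ is empty then lands in $\min(C)$ in a single step.

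\emph{What each buys.} Your argument is shorter and fully elementary. It holds pointwise for every $\lambda$ and every parameter pair $(p',q')$, and it avoids the case analysis on whether the clipping at $1/2$ is active.
The paper's longer route yields the explicit form of $\inf_{C''\in[C']}h(\lambda,C'')$ for each class of $\min(C)$. That form is what is used afterwards: to compute $\lambda^*(\sigma;C_t)$, to establish continuity in Lemma~\ref{lem:sigma_cont}, and in the proof of Proposition~\ref{prop:uniform}. So your argument can replace the reduction step, but not Lemma~\ref{lem:pqstar} itself.
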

The proof of Theorem \ref{thm:inner_inf} can  be found in Appendix \ref{app:proof_inner_inf}.
 
Theorem~\ref{thm:inner_inf} shows that if $C$ has $K$ clusters, the closest alternative instance to $C$ must have either $K+1$ or $K-1$ clusters in it, resulting from merging two clusters in $C$ into one or splitting a cluster in $C$ into two. Fixing $\lambda \in \P_{I}$, one may solve for the optimal $(p',q')$ within each equivalence class of $\min(C)$ in terms of $\lambda$ and the parameters $(p,q)$ governing~$C$. Taking a minimum over the finite number of equivalence classes of $\min(C)$ then yields $D^*(C)$.

Despite the above simplification, no closed-form solution for $\Lambda^*(C)$ exists in general, except in the special case when $K=1$ (all items belong to a single cluster) or $K=M$ (each cluster consists of a single item). In these special cases, the optimal allocation is unique (i.e., $\Lambda^*(C)$ is a singleton set) and equal to the uniform distribution over $I$,  as noted in the below result.
\begin{proposition}
\label{prop:uniform}
   For $C \in \mathcal{C}^K$, $K=1$ or $K=M$, let
\begin{align*}
\Lambda^*(C) = \argmax_{\lambda \in \P_{I}} \inf_{C' \in \Alt(C)} h(\lambda, C').
\end{align*} 
Then, $\Lambda^*(C)$ is the  uniform distribution over $I$. 
\end{proposition}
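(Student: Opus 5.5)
The plan is to treat the two cases $K=M$ and $K=1$ in parallel. In each case I will show that the inner infimum, for an arbitrary $\lambda$, is a fixed increasing function of $\min_{(i,j)\in I}\lambda_{ij}$ (or of a quantity bounded above by it). The uniform distribution is the unique maximiser of that minimum, so it will be the unique maximiser of the inner infimum.

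Start with $K=M$, so every item is a singleton cluster. Here $N_p=\emptyset$ and $N_q=I$. Theorem~\ref{thm:inner_inf} does not apply literally, since it needs $M>K>1$. I will therefore argue directly. Any $C'\in\Alt(C)$ merges at least one pair, so $N_2\neq\emptyset$ and $N_1=\emptyset$. This gives
\[
h(\lambda,C') = d(q,p')\sum_{(i,j)\in N_2}\lambda_{ij} + d(q,q')\sum_{(i,j)\in I\setminus N_2}\lambda_{ij}.
\]
Choosing $q'=q$ kills the second term. For a fixed $N_2$, the infimum over $p'>1/2$ is then $d(q,1/2)\sum_{N_2}\lambda_{ij}$, approached as $p'\downarrow 1/2$. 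To confirm it is the infimum and not just an upper bound, note that $d(q,p')\ge d(q,1/2)$ for every $p'>1/2>q$, and the second term is nonnegative. Taking $N_2$ to be a single pair $\{(i,j)\}$ is feasible: merge $i$ and $j$ and keep everything else as singletons. Since $N_2$ must be nonempty, the infimum equals $d(q,1/2)\min_{(i,j)\in I}\lambda_{ij}$.

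Next take $K=1$. Now $N_p=I$, every alternative splits the single cluster, $N_1\neq\emptyset$, and $N_2=\emptyset$. The same argument, with $p'=p$ and $q'\uparrow 1/2$, gives inner infimum $d(p,1/2)\cdot\min_{S}\sum_{S}\lambda_{ij}$. Here $S$ ranges over edge cuts of the complete graph on $[M]$ induced by bipartitions (or finer partitions); each such $S$ is a set $N_1$. This is the main obstacle, because a single pair cannot be cut alone. The smallest cuts are the singleton cuts $\{i\}$ versus the rest, each containing $M-1$ pairs. So the quantity to maximise is $\min_i \sum_{j\neq i}\lambda_{ij}$, possibly minimised further over larger cuts.

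To handle it I proceed in three steps.
\begin{enumerate}
\item Under uniform $\lambda=1/\binom{M}{2}$, every cut with $s(M-s)$ pairs has weight at least $(M-1)/\binom{M}{2}=2/M$. The value of uniform $\lambda$ is therefore $d(p,1/2)\cdot 2/M$.
\item For any $\lambda$, averaging over the singleton cuts gives $\min_i\sum_{j}\lambda_{ij}\le \frac1M\sum_i\sum_{j}\lambda_{ij}=2/M$. So uniform $\lambda$ is optimal.
\item For uniqueness, equality in step 2 forces every vertex weighted degree to equal $2/M$. Equality for the $2$-vs-$(M-2)$ cuts forces more. The cut $\{i,j\}$ versus the rest has weight $4/M-2\lambda_{ij}$, and it must be at least $2/M$, so $\lambda_{ij}\le 1/M$.
\end{enumerate}
Step 3 alone does not force uniformity, so I expect to need a sharper argument, for instance a symmetrisation or strict concavity argument. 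The one I would try first combines concavity of $\lambda\mapsto\inf_{C'}h(\lambda,C')$ with invariance of the problem under permutations of $[M]$. Averaging any optimiser over the symmetric group yields the uniform distribution, so optimality of uniform follows. Uniqueness would then come from checking that every non-uniform $\lambda$ with all degrees equal to $2/M$ has some cut strictly below the uniform value; that check is the delicate point.

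For $K=M$ the analogous steps are easy: $\min_{ij}\lambda_{ij}\le 1/|I|$, with equality iff $\lambda$ is uniform. This gives uniqueness immediately.
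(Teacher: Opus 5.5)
\textbf{Gap: uniqueness for $K=1$.} Your argument for $K=M$ is complete, including uniqueness. For $K=1$, your steps 1 and 2 correctly show that the uniform distribution is optimal, with value $d(p,1/2)\cdot 2/M$. This is more direct than the paper, whose case split on the size $i$ of the minimising cut is unnecessary, since averaging over the $M$ singleton cuts already bounds the optimum by $2/M$. The gap is the uniqueness check you defer as ``the delicate point.'' It cannot be closed, because uniqueness is false for $K=1$ once $M\ge 4$.

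Take any Hamiltonian cycle $1\to 2\to\cdots\to M\to 1$ on the items. Put $\lambda_{ij}=1/M$ on its $M$ edges and $0$ on all other pairs. Every item then has weighted degree $2/M$. Every bipartition of $[M]$ is crossed by at least two cycle edges, and hence so is every nontrivial partition. So $\lambda_{N_1}(C')\ge 2/M$ for every $C'\in\Alt(C)$, and this non-uniform $\lambda$ attains the same value $d(p,1/2)\cdot 2/M$ as the uniform distribution. Thus $\Lambda^*(C)$ is not a singleton. For $M=4$ this is the $4$-cycle with weights $1/4$, which meets your bound $\lambda_{ij}\le 1/M$ exactly.

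Your symmetrisation idea cannot rescue uniqueness. The map $\lambda\mapsto\min_S\sum_{(i,j)\in S}\lambda_{ij}$ is a minimum of linear functionals, so it is concave but not strictly concave. Its maximisers therefore form a polytope, and averaging an optimiser over permutations only re-proves that the uniform distribution lies in that polytope.

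What can actually be proved is this. For $K=M$, $\Lambda^*(C)$ is exactly the uniform distribution. For $K=1$, the uniform distribution belongs to $\Lambda^*(C)$, and it is the unique maximiser only when $M\le 3$; for $M=3$ equal weighted degrees force uniformity.

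The paper's own proof has the same defect. The step ``Suppose for a contradiction that the optimal $\lambda$ is not uniform. Then $\min_{g_1}\lambda_{N_1}(g_1)>(M-1)/\binom{M}{2}$'' does not follow, because a non-uniform optimiser need only tie with the uniform one. The ensuing contradiction rules out values strictly above $2/M$, so it establishes optimality of the uniform allocation, not uniqueness.
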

The proof of Proposition \ref{prop:uniform} can be found in Appendix~\ref{app:uniform}.

From Proposition \ref{prop:uniform}, it follows that when all items belong to a single cluster or when each item forms its own cluster, the optimal sampling rule should follow a uniform sampling across all pairs of items. 
This is because in this scenario,  each pair $(i,j) \in I$ yields Bernoulli observations with parameter $p$ (resp.\ $q$), thus rendering every pair equally informative. 

\section{Achieving the Lower Bound Up to a Constant}
\label{sec:achievability}

The lower bound in~\eqref{eq:lower-bound} suggests that any algorithm that samples items according to a distribution $\lambda^* \in \Lambda^*(C)$ at each time step achieves  the lower bound  as $\delta \to 0$. However, because the underlying instance $C$ is not known beforehand, it must be estimated along the way. The key issue here, as we note below, is that the estimated instance may not necessarily belong to the set $\mathcal{C}$ of all feasible instances. We carefully design an algorithm that, at each time step, projects the estimated instance onto $\mathcal{C}$, uses the projection to sample a pair of items, and proceeds until it is sufficiently confident of the underlying clustering. We improvise this algorithm to address the computational bottlenecks, and design a computationally efficient variant that is almost asymptotically optimal.
\subsection{Modified D-Tracking-based Sampling Rule}
\label{subsec:sampling-rule}

\begin{algorithm}
\caption{\quad  \textcolor{black}{$\mathrm{Proj}_{\mathcal{C}}(\hat{C}_t)$}}
\label{alg:proj}
\begin{algorithmic}[1]
\Require $\hat{C}_t$
\State Set $(\hat{c}_{t_=})_{ij} = \I \{(\hat{c}_t)_{ij} \ge 0.5 \}$ and  $(\hat{c}_{t_{\ne}})_{ij}=\I \{(\hat{c}_t)_{ij} < 0.5 \}$, for all $(i,j) \in I$
    \State Solve  
        \begin{align}
        \label{eq:proj_hatC}
            [C_t] = \argmin_{[C'] \in \widetilde{C}} \sum_{(i,j) \in I} |(\hat{c}_{t_=})_{ij}-(c'_=)_{ij}|+|(\hat{c}_{t_{\ne}})_{ij}-(c'_{\ne})_{ij}|.
         \end{align}
        \State Obtain $p_t$ and $q_t$ via
        \begin{align*}
        p_t &= \frac{\sum_{(i,j)\in I} \I \{(c_{t_=})_{ij}=1\} (\hat{c}_t)_{ij} \cdot N_{ij}(t)}{\sum_{(i,j)\in I} \I \{(c_{t_=})_{ij}=1\} N_{ij}(t)}, \quad
        q_t  = \frac{\sum_{(i,j)\in I} \I \{(c_{t_{\ne}})_{ij}=1\} (\hat{c}_t)_{ij} \cdot N_{ij}(t)}{\sum_{(i,j)\in I} \I \{(c_{t_{\ne}})_{ij}=1\} N_{ij}(t)}.
            \end{align*}
            \State Set $C_t = p_tC_{t_=} + q_t C_{t_{\ne}} \in \mathcal{C}$
        \State \textbf{Output:} $C_t$
\end{algorithmic}
\end{algorithm}

We now design a sampling rule that uses estimates of the unknown instance $C$ to suggest the most informative pair of items to query at each time step. 
Let $\widehat{C}_t$ be the empirical instance at time $t$ whose elements are given by 
\begin{align}
(\widehat{c}_t)_{ji}=(\widehat{c}_t)_{ij} =\frac{1}{  N_{ij}(t)}\sum_{s \le t} \vY_{i_s j_s} \I \{I_s = i, J_s = j\}.     
\end{align}
A na\"ive approach might be to use  $\widehat{C}_t$ as a proxy for $C$ to solve for $D^*(C)$ and obtain $\Lambda^*(C_t)$, and subsequently sample $(i,j) \sim \lambda^*$ for some $\lambda^* \in \Lambda^*(C_t)$. This approach fails if $\widehat{C}_t \notin \mathcal{C}$. Recall that any $C \in \mathcal{C}$ must necessarily satisfy $c_{ji}=c_{ki}$ for all $i,j,k \in [M]$ belonging to the same cluster. However, $(\widehat{c}_t)_{ji}$ and $(\widehat{c}_t)_{ki}$ may potentially be non-identical. 
 Hence, it is not appropriate to substitute  $\widehat{C}_t$ into~\eqref{eq:inner_inf_minC} directly. 
To address this, we construct a subroutine  $\mathrm{Proj}_{\mathcal{C}}$, detailed in Algorithm~\ref{alg:proj}, to find the element of $\mathcal{C}$ that is ``closest'' to $\widehat{C}_t$,   projecting $\widehat{C}_t$ onto $\mathcal{C}$. We  let $C_t \coloneqq \mathrm{Proj}_{\mathcal{C}}(\widehat{C}_t)$.
Inspired by Russo and Proutiere \cite{Russo_Proutiere_2023}, wor $\sigma > 0$, we let
\begin{align*}
\lambda^*(\sigma;C_t) \coloneqq
\argmax_{\lambda \in \P_{I}} \min_{C' \in \min(C_t)} h(\lambda, C') -\frac{  \sigma}{2} \|\lambda\|^2.
\end{align*} 
Because of the strict convexity of the regulariser term $\|\cdot\|^2$, the optimal allocation $\lambda^*(\sigma;C_t)$ is unique. We propose sampling from a convex mixture of $\lambda^*(\sigma;C_t)$ and the uniform distribution on $I$ to ensure that $N(t)/t$ converges to a strictly positive distribution as $t \to \infty$. For $\epsilon\in ( 0,1)$, let  our mixture distribution be
\begin{align}
    \lambda_\epsilon^*(\sigma;C_t) \coloneqq
(1-\epsilon) \lambda^*(\sigma;C_t) + \epsilon \cdot u , \label{eqn:mixture}
\end{align}
where $u$ is the uniform distribution on $I$, i.e., $u_{ij}=1/|I|$ for all $(i,j)\in I$. 
We use the original \textit{D-Tracking} rule in \cite{garivier2016optimal} for sampling.
\color{black}
Introducing $U_t = \{(i,j)\in I: N_{ij}(t) < (\sqrt{t}-{\binom{M}{2}} /2)_+ \}$, we sample items $(\vI_{t+1}, \vJ_{t+1})$ as    
\begin{align}
\label{eq:sample_rule_mix}
(\vI_{t+1}, \vJ_{t+1}) =
\left\{  \begin{array}{cc}
    \displaystyle\argmin_{(i,j)\in U_t} N_{ij}(t) & U_t\ne\emptyset \\
     \displaystyle\argmax_{(i,j)\in I} \,\, t\lambda_\epsilon^*(\sigma;C_t)_{ij}  - N_{ij}(t)  & U_t = \emptyset
\end{array}
 \right.  .
\end{align} 
\begin{proposition}
\label{prop:mix_optsample}
Under the sampling rule~\eqref{eq:sample_rule_mix} and under the instance $C \in \mathcal{C}$, $N(t)/t$ converges a.s.\ to $\lambda_\epsilon^*(\sigma;C) $, i.e., 
\begin{align*}
    \Pr\left(\lim_{t \rightarrow \infty} \frac{N(t)}{t} = \lambda_\epsilon^*(\sigma; C) \right)=1,
\end{align*}
where $N(t) = [N_{ij}(t): (i,j) \in I]$, $\epsilon \in (0,1)$ and $\sigma>0$.
\end{proposition}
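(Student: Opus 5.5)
The argument follows the standard D-Tracking template of \cite{garivier2016optimal}. It has three parts: forced exploration, consistency of the projected estimate, and continuity of the target allocation.

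\emph{Step 1: forced exploration.} The forcing set $U_t$ guarantees $N_{ij}(t) \ge (\sqrt{t}-\binom{M}{2}/2)_+ - 1$ for every pair $(i,j)\in I$, as in Lemma~7 of \cite{garivier2016optimal}. In particular $N_{ij}(t)\to\infty$ for all pairs. By the strong law of large numbers applied pair by pair, $\widehat{C}_t \to C$ almost surely.

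\emph{Step 2: consistency of the projection.} Since $q<1/2<p$, after some almost surely finite random time every entry satisfies $(\widehat{c}_t)_{ij}\ge 0.5$ exactly when $c_i=c_j$. The thresholded matrices $(\widehat{C}_{t_=},\widehat{C}_{t_{\ne}})$ then coincide with $(C_=,C_{\ne})$. These already describe a valid clustering, so the minimiser in \eqref{eq:proj_hatC} has zero cost and equals $[C]$; uniqueness follows because any other class has positive cost. Hence the estimated clustering is eventually exact. Moreover, $p_t$ and $q_t$ are weighted averages of $(\widehat{c}_t)_{ij}$ over $N_p$ and $N_q$ respectively, so $p_t\to p$ and $q_t\to q$. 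Together these give $C_t\to C$ almost surely, and $\min(C_t)$ is eventually the fixed finite family of classes $\min(C)$.

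\emph{Step 3: continuity of the target.} This is the step I expect to be the main obstacle. We need $\lambda^*(\sigma;C_t)\to\lambda^*(\sigma;C)$. Once the clustering is fixed, the objective $F(\lambda,(p,q))=\min_{C'\in\min(C)}h(\lambda,C')-\frac{\sigma}{2}\|\lambda\|^2$ involves an inner infimum over $(p',q')$ inside each of finitely many classes. Each such infimum is a concave function of $\lambda$ and is jointly continuous in $(\lambda,p,q)$. Continuity in $(p,q)$ requires care at the boundaries $p=1$ or $q=0$, where $d(\cdot,\cdot)$ can be infinite. I would handle this by showing that the infimum is attained at $(p',q')$ in a compact set bounded away from the degenerate values, for instance at the boundary $p'=1/2$ or $q'=1/2$. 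The objective is $\sigma$-strongly concave in $\lambda$ over the compact simplex, so its maximiser is unique. Berge's maximum theorem, combined with uniqueness, then gives continuity of $(p,q)\mapsto\lambda^*(\sigma;\cdot)$. The mixture $\lambda^*_\epsilon$ inherits this continuity, so $\lambda^*_\epsilon(\sigma;C_t)\to\lambda^*_\epsilon(\sigma;C)$ almost surely.

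\emph{Step 4: tracking.} Once $U_t=\emptyset$, the D-Tracking rule has the standard guarantee (Lemma~8 of \cite{garivier2016optimal}, adapted to the forcing term). If a sequence of target distributions $\lambda(t)$ converges to $\lambda^\star$, then $N(t)/t\to\lambda^\star$. The proof controls the overshoot $N_{ij}(t)-t\lambda_{ij}^\star$ by $\epsilon' t$ plus lower-order terms, and then uses the sum constraint $\sum_{(i,j)\in I} N_{ij}(t)=t$ to obtain the matching lower bound. The mixture with $\epsilon u$ makes $\lambda^\star$ strictly positive, so every coordinate is eventually tracked by the argmax branch rather than only by forcing. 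Applying this on the almost-sure event from Steps 1–3 yields $N(t)/t\to\lambda_\epsilon^*(\sigma;C)$ with probability one.
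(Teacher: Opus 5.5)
Your proposal is correct and follows essentially the paper's route: almost-sure consistency of $\widehat{C}_t$ via forced exploration and the SLLN, exact recovery of $[C]$ by thresholding with $p_t\to p$ and $q_t\to q$, continuity of the unique regularized maximizer in $(p,q)$ via Berge's maximum theorem and uniqueness (Lemma~\ref{lem:sigma_cont}), and finally the D-Tracking convergence lemma of \cite{garivier2016optimal}. One small correction to your boundary remark: when $q=0$ or $p=1$, the inner minimizers from Lemma~\ref{lem:pqstar} lie in $[q,1/2]$ and $[1/2,p]$, and they can hit the degenerate value when the corresponding weight vanishes, so they are not uniformly bounded away from $\{0,1\}$; joint continuity in $(\lambda,p,q)$ nonetheless holds (upper semicontinuity because the inner value is an infimum over interior $(p',q')$ of functions continuous in $(\lambda,p,q)$, and lower semicontinuity from that of $d$ together with compactness), a point the paper simply asserts.
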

The proof of Proposition \ref{prop:mix_optsample} can be found in Appendix~\ref{app:prf_mix_optsample}.

\subsection{Generalized Likelihood Ratio Based Stopping Rule}
\label{sec:stopping_rule}
Having established a sampling rule in the previous section, we now  outline a stopping criterion to determine when the algorithm should {\em stop} and output a clustering of the items with error probability no more than $\delta$. A natural stopping rule to consider is based on the Generalized Likelihood Ratio (GLR) test as adopted by many existing works on pure exploration problems \cite{rivera2024near, reda2021dealing, jedra2020optimal}. We show that this stopping rule is $\delta$-correct and asymptotically optimal in the limit of $\delta \to 0$.  However, the intractability of the statistic leads to difficulties in implementing the stopping rule in its native form. To address this, in Section~\ref{subsec:computationally-efficient-stopping-rule}, we propose an alternative stopping rule that is computationally efficient and achieves almost-asymptotic optimality, with an explicitly controlled gap to the lower bound.

We now introduce the GLR statistic for testing $C_t$ against its alternative instances, along with the threshold suggested in~\cite{kaufmann2021mixture}. The GLR statistic is given by
\begin{align}
\label{eq:glr}
Z(t) \coloneqq \inf_{C' \in \Alt(C_t)} \sum_{(i,j)\in I} N_{ij}(t) \, \KL((\widehat{v}_t)_{ij}, v'_{ij}),
\end{align}
where $\widehat{v}_t$ denotes the Bernoulli distribution under $\widehat{C}_t$ (see~\eqref{eq:feedback}), and the corresponding threshold is given by
$\beta(t,\delta) := 3 \sum_{(i,j)\in I} \ln (1+ \ln N_{ij}(t)) + {\binom{M}{2}} \, \mathcal{C}_{\mathrm{exp}}\big(\frac{\ln(1/\delta)}{{\binom{M}{2}}}\big)$,
where the exact mathematical expression for $\mathcal{C}_{\mathrm{exp}}(\cdot)$ is  given in \cite[Theorem~7]{kaufmann2021mixture}. Then,  
the GLR stopping rule is 
\begin{align}
\label{eq:stopping_rule}
\tau_\delta \coloneqq \inf \{t \in \mathbb{N}: Z(t) > \beta(t,\delta)\}.
\end{align} 
The below result demonstrates that the  clustering under the GLR-based   rule is incorrect with probability no more than~$\delta$. 
\begin{proposition}
\label{prop:stopping_rule_delta}
The stopping rule in~\eqref{eq:stopping_rule} is $\delta$-correct, i.e., 
\begin{align*}
\Pr \big(\tau_\delta < \infty,~ f([C_{\tau_\delta}]) \neq f([C])\big) \le\ \delta. 
\end{align*}
\end{proposition}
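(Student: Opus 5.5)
The proof is the standard GLR argument for $\delta$-correctness: on the error event the true instance lies in the alternative set used at the stopping time, so the GLR statistic is bounded by the true log-likelihood-ratio deviation. That deviation is then controlled uniformly in time by the mixture-martingale concentration bound of \cite{kaufmann2021mixture}.

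\emph{Step 1 (reduction).} Suppose $\tau_\delta<\infty$ and $f([C_{\tau_\delta}])\neq f([C])$. Then $[C]\neq[C_{\tau_\delta}]$, so $C\in\Alt(C_{\tau_\delta})$. Since $Z(\tau_\delta)$ is an infimum over $\Alt(C_{\tau_\delta})$, we may plug in $C'=C$ to get
\begin{align}
\beta(\tau_\delta,\delta) < Z(\tau_\delta) \le \sum_{(i,j)\in I} N_{ij}(\tau_\delta)\,\KL\big((\widehat v_{\tau_\delta})_{ij}, v_{ij}\big),
\end{align}
where $v_{ij}=\Ber(c_{ij})$ is the true law of pair $(i,j)$. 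A subtlety is that $\widehat v_t$ is defined through $\widehat C_t$, whose entries $(\widehat c_t)_{ij}$ are the empirical means. Hence $\KL((\widehat v_t)_{ij},v_{ij})=d((\widehat c_t)_{ij},c_{ij})$, and the right-hand side is exactly the sum of per-pair Bernoulli self-normalized deviations. Pairs with $N_{ij}(t)=0$ contribute zero, and the forced-exploration clause of \eqref{eq:sample_rule_mix} guarantees $N_{ij}(t)\ge1$ eventually, so $\ln(1+\ln N_{ij})$ is well defined once it matters.

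\emph{Step 2 (uniform deviation bound).} We therefore obtain
\begin{align}
\Pr\big(\tau_\delta<\infty,\ f([C_{\tau_\delta}])\neq f([C])\big)\le \Pr\Big(\exists t\in\mathbb N:\ \sum_{(i,j)\in I} N_{ij}(t)\, d\big((\widehat c_t)_{ij},c_{ij}\big) > \beta(t,\delta)\Big).
\end{align}
The $\binom M2$ pairs form independent one-parameter exponential-family (Bernoulli) arms, sampled adaptively. Theorem~7 of \cite{kaufmann2021mixture} states that for such arms, with $|I|=\binom M2$ arms, the above probability is at most $\delta$ when the threshold is $3\sum_{(i,j)\in I}\ln(1+\ln N_{ij}(t))+|I|\,\mathcal C_{\exp}\big(\ln(1/\delta)/|I|\big)$. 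This is precisely $\beta(t,\delta)$, which gives the claim.

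\emph{Main obstacle.} The delicate point is checking that the hypotheses of that theorem hold here. Its proof builds a product of per-arm mixture martingales, and this requires that the observations of each pair, conditional on $\mathcal F_{t-1}$ and the adaptive choice $(\vI_t,\vJ_t)$, are i.i.d.\ $\Ber(c_{ij})$; this holds by \eqref{eq:feedback}. It also requires the sampling rule to be predictable. That is true because $C_t$, and hence $\lambda^*_\epsilon(\sigma;C_t)$, are $\mathcal F_t$-measurable, with ties in the argmin/argmax broken deterministically.

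One should also note that the infimum's restriction to $p'>1/2>q'$ is harmless, since the true $C$ itself satisfies $p>1/2>q$ and so is a feasible choice of $C'$. Finally, the projection step in Algorithm~\ref{alg:proj} plays no role: only $[C_{\tau_\delta}]\neq[C]$ is used, so the argument goes through for any $\mathcal F_t$-measurable choice of estimated clustering.
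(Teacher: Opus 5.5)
Your proposal is correct and follows the paper's proof. On the error event, $C \in \Alt(C_{\tau_\delta})$, so plugging $C'=C$ into the GLR infimum bounds $Z(t)$ by $\sum_{(i,j)\in I} N_{ij}(t)\, d\big((\widehat c_t)_{ij}, c_{ij}\big)$, and the time-uniform deviation bound of \cite[Theorem~7]{kaufmann2021mixture} with $x=\ln(1/\delta)$ then gives the claim; your extra checks (predictability, feasibility of the true $C$ as an alternative, irrelevance of the projection) are simply made explicit where the paper leaves them implicit.
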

The proof of Proposition \ref{prop:stopping_rule_delta} can be found in Appendix \ref{app:stopping_rule_delta}.

In Theorem~\ref{thm:upper_bound}, we show   that the sampling rule  in \eqref{eq:sample_rule_mix}, together with the stopping rule in~\eqref{eq:stopping_rule}, achieves an expected sample complexity that matches with the instance-dependent lower bound $D^*(C)^{-1}$ presented in Theorem \ref{thm:lower_bound} as the parameters $\epsilon,\sigma \to 0$, and hence almost asymptotically optimal. In particular, we show that the expected stopping time grows at most logarithmically with $\delta$, and that its normalized growth rate is upper bounded by $D^*(C)^{-1}$. When taken together with the lower bound, this establishes the asymptotic optimality. 
\begin{restatable}{theorem}{UpperBound}
\label{thm:upper_bound}
For any instance $C \in \mathcal{C}$ and fixed parameters $\sigma>0$ and $\epsilon \in (0,1)$, the strategy that employs the sampling rule in \eqref{eq:sample_rule_mix} and the stopping rule in~\eqref{eq:stopping_rule} satisfies
\begin{align}
    \limsup_{\delta \rightarrow 0} \frac{\E_C[\tau_\delta]}{\ln(1/\delta)} \le D_\epsilon^*(\sigma;C)^{-1},
    \label{eq:upper-bound-GLR}
\end{align}
where the constant $0<D_\epsilon^*(\sigma;C)<\infty$ is given by
\begin{equation}
    \bigg[\inf_{C' \in \Alt(C)} \sum_{(i,j)\in I} \lambda_\epsilon^*(\sigma;C) \,\KL(v_{ij}, v'_{ij}) \bigg] - \frac{\sigma}{2}  \|\lambda_\epsilon^*(\sigma;C)\|^2.
    \label{eq:D-epsilon-sigma-star-C}
\end{equation}
By the maximum theorem \cite{ausubel1993generalized},  it can be shown that
\begin{equation}
    \lim_{ \epsilon , \sigma \to 0 } D_\epsilon^*(\sigma;C) = D^*(C).
    \label{eq:berges-maximum}
\end{equation}
\end{restatable}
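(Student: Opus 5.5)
The plan follows the standard Garivier--Kaufmann template for asymptotic optimality of tracking plus GLR stopping, adapted to the projected instance $C_t$ and the regularised allocation. Write $T^*:=D_\epsilon^*(\sigma;C)^{-1}$. For a small $\xi>0$ and each $T\in\mathbb{N}$, I would define a good event
\[
\mathcal{E}_T(\xi)=\bigcap_{t=\lceil T^{1/4}\rceil}^{T}\Bigl\{\|\widehat C_t-C\|_\infty\le \xi,\ \|N(t)/t-\lambda_\epsilon^*(\sigma;C)\|_\infty\le \xi\Bigr\}.
\]
The proof then splits into two parts. First, $\sum_T \Pr(\mathcal{E}_T^c)<\infty$. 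Second, on $\mathcal{E}_T$ the statistic grows linearly, so $\tau_\delta\le T$ once $T$ is of order $\ln(1/\delta)\,T^*(1+o(1))$. Combining these gives $\E[\tau_\delta]\le T_0(\delta)+\sum_T\Pr(\mathcal{E}_T^c)$, hence
\[
\limsup_{\delta\to 0}\frac{\E_C[\tau_\delta]}{\ln(1/\delta)}\le \frac{1}{D_\epsilon^*(\sigma;C)-\eta(\xi)},
\]
and we finish by letting $\xi\to0$.

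\textbf{Concentration.} The forced-exploration part of \eqref{eq:sample_rule_mix} guarantees $N_{ij}(t)\ge\sqrt t-\binom M2/2$. Chernoff bounds together with a union bound over $t\in[T^{1/4},T]$ and over the pairs then give $\Pr(\|\widehat C_t-C\|_\infty>\xi)\le B\,T\exp(-cT^{1/8})$, which is summable in $T$. For $\xi<\min(p-1/2,\,1/2-q)$, thresholding at $1/2$ recovers $C_=$ exactly. So $\mathrm{Proj}_{\mathcal{C}}$ returns the true partition, and $(p_t,q_t)$ are $\xi$-close to $(p,q)$; thus $C_t\to C$ on the event. For the allocation, I need continuity of $C\mapsto\lambda^*(\sigma;C)$. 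The objective is $\sigma$-strongly concave in $\lambda$ and jointly continuous in $(\lambda,p,q)$ within the fixed partition, where $\min(C_t)$ is fixed. By the maximum theorem the unique maximiser is continuous, so $\|\lambda_\epsilon^*(\sigma;C_t)-\lambda_\epsilon^*(\sigma;C)\|\le\xi'$ for $t\ge T^{1/4}$. The D-tracking lemma of \cite{garivier2016optimal} then gives $\|N(t)/t-\lambda_\epsilon^*(\sigma;C)\|\le 3(|I|-1)\xi'$ for $t\ge T_\xi$; this is the quantitative form of Proposition~\ref{prop:mix_optsample}.

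\textbf{Linear growth of $Z(t)$.} On $\mathcal{E}_T$, $[C_t]=[C]$, so $\Alt(C_t)=\Alt(C)$ and
\[
Z(t)=t\,g\bigl(N(t)/t,\widehat C_t\bigr),\qquad g(w,\widehat C)=\inf_{C'\in\Alt(C)}\sum_{(i,j)\in I} w_{ij}\KL(\widehat v_{ij},v'_{ij}).
\]
I need $g$ to be continuous at $(\lambda_\epsilon^*,C)$. The delicate point is that $\Alt(C)$ is not compact, since $q'\to1/2$ and $p'\to1/2$ are allowed. I would handle this using that $g$ is an infimum of functions that are uniformly equicontinuous in $(w,\widehat v)$ on the region where $\widehat v$ stays bounded away from $\{0,1\}$ (if $p=1$ or $q=0$, one must restrict to the near-optimal $C'$). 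Together with Theorem~\ref{thm:inner_inf}, which reduces to finitely many classes each with a compact optimal $(p',q')$ range, this gives $g(N(t)/t,\widehat C_t)\ge g(\lambda_\epsilon^*,C)-\eta(\xi)$ with $\eta(\xi)\to0$. Since $g(\lambda_\epsilon^*,C)\ge D_\epsilon^*(\sigma;C)$ (the regulariser term is nonnegative), we get $Z(t)\ge t\,(D_\epsilon^*-\eta)$. The threshold satisfies $\beta(t,\delta)\le \ln(1/\delta)+O(\ln\ln(1/\delta))+O(\ln\ln t)$, using $\mathcal{C}_{\exp}(x)\approx x+O(\ln x)$. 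Hence $\tau_\delta\le T$ for every $T$ with $T(D_\epsilon^*-\eta)\ge\beta(T,\delta)$, and the standard lemma bounding the smallest such $T$ gives $T_0(\delta)\le (1+o(1))\ln(1/\delta)/(D_\epsilon^*-\eta)$.

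\textbf{Limit \eqref{eq:berges-maximum} and main obstacle.} Apply the maximum theorem to $(\lambda,\epsilon,\sigma)\mapsto\min_{\min(C)}h(\lambda_\epsilon,C')-\tfrac\sigma2\|\lambda\|^2$. This is continuous on the compact simplex $\P_I$, so the value converges to $D^*(C)$ as $(\epsilon,\sigma)\to0$. The mixing term and the regulariser each change the objective by at most $O(\epsilon)+O(\sigma)$, since $h$ is bounded on the relevant compact parameter set. I expect the main obstacle to be the continuity of $g$ and of $\lambda^*(\sigma;\cdot)$ in the presence of the non-compact alternative parameters $(p',q')$, together with the boundary cases $p=1$ or $q=0$ where the KL terms blow up. I would first handle these by using Theorem~\ref{thm:inner_inf} to restrict to finitely many classes and explicitly solving the inner minimisation over $(p',q')$, which yields weighted averages. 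This gives continuous closed-form expressions.
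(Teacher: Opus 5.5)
Your proposal is correct and follows essentially the same Garivier--Kaufmann template as the paper's proof: a concentration event on which thresholding recovers the true partition so that $\Alt(C_t)=\Alt(C)$, Berge continuity of $\lambda^*(\sigma;\cdot)$ plus the D-tracking lemma, $Z(t)=t\,g(N(t)/t,\widehat C_t)$ with continuity of $g$, and inversion of a logarithmic bound on $\beta(t,\delta)$. The differences are refinements rather than a new route: you drop the regulariser from $g$ instead of using $Z(t)\ge t\hat M(t)$, you use the sharper estimate $\beta(t,\delta)\le\ln(1/\delta)+O(\ln\ln(1/\delta))+O(\ln\ln t)$ in place of $\ln(Dt^\alpha/\delta^\gamma)$ with $\gamma\downarrow 1$, and you actually sketch \eqref{eq:berges-maximum}, which the paper only asserts.

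Two small fixes are needed.
\begin{itemize}
\item The tracking condition should be required in $\mathcal{E}_T(\xi)$ only for $t\ge\sqrt T$. It is not controlled directly: it is derived from concentration after a delay that depends on $T^{1/4}$ and $\xi$.
\item Exactly as in the paper, $D_\epsilon^*(\sigma;C)>0$ is only guaranteed for small $\sigma$, e.g.\ $\sigma<2\epsilon A(C)/|I|$, using concavity of $\lambda\mapsto\inf_{C'}\sum_{(i,j)}\lambda_{ij}\KL(v_{ij},v'_{ij})$. Your unregularised bound, however, holds for every $\sigma$.
\end{itemize}
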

The proof of Theorem \ref{thm:upper_bound} can be found in Appendix \ref{app:prf_stopping_rule}.

\subsection{A Computationally Feasible Algorithm: $\Alg$}
\label{subsec:computationally-efficient-stopping-rule}

While the GLR statistic in~\eqref{eq:glr} defines an almost asymptotically optimal stopping rule, it is computationally infeasible. The key bottleneck is the computation of the infimum in~\eqref{eq:glr}, which entails simultaneously optimizing over both $N_{ij}(t)$ and $(\widehat{c}_t)_{ij}$, leading to a large number of combinations to optimize over. While the preceding problem seems similar to the sup-inf problem in~\eqref{eqn:sup-inf}, the key difference that renders one solution inapplicable to the other are the distinct values of $(\widehat{c}_t)_{ij}$ over all $(i,j) \in I$ to minimize over in $Z(t)$ as compared to simply two values, $p$ and $q$, in $D^*(C)^{-1}$. Moreover, using $\widehat{C}_t$ (the empirical estimate of $C$) instead of its projection $C_t$ is essential for proving the $\delta$-correctness property in Proposition~\ref{prop:stopping_rule_delta}. Thus, merely substituting $(c_t)_{ij}$ in place of $(\widehat{c}_t)_{ij}$ in the objective function of \eqref{eq:glr} does not alleviate the issue at hand.

We propose a computationally efficient test statistic that preserves the $\delta$-correctness property and achieves near–asymptotic optimality, up to a multiplicative factor that can be  efficiently estimated. In essence, we obviate the complexity of the GLR statistic with respect to $N_{ij}(t)$ by ``factoring out'' the smallest count of queries, $\min_{(i',j')\in I} \, N_{i'j'}(t)$. Let
\begin{align}
\label{eq:ALTstopping_stat}
\widehat{Z}(t) \coloneqq 
 \bigg[\min_{(i',j')\in I} N_{i'j'} (t) \bigg]  \bigg[\inf_{C' \in \Alt(C_t)} \sum_{(i,j)\in I} \KL((\widehat{v}_t)_{ij}, v'_{ij}) \bigg],
\end{align}  
and the modified stopping rule 
be given by 
\begin{align}
\label{eq:ALTstopping_rule}
    \widehat{\tau}_\delta = \inf \{t \in \mathbb{N}: \widehat{Z}(t) > \beta(t,\delta)\}. 
\end{align}
Because $Z(t) \ge \widehat{Z}(t)$ a.s., the stopping rule with $\widehat{Z}(t)$ inherits the $\delta$-correctness property of Proposition \ref{prop:stopping_rule_delta}.  
 This is stated below without proof. 
\begin{restatable}{corollary}{ALTStoppingRuleDelta}
\label{prop:ALTstopping_rule_delta}
The stopping rule in~\eqref{eq:ALTstopping_rule} is $\delta$-correct, i.e., 
\begin{align*}
\Pr \big(\widehat{\tau}_\delta < \infty, ~ f([C_{\widehat{\tau}_\delta}]) \neq f([C]) \big) \le \delta. 
\end{align*}
\end{restatable}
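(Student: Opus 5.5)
The plan is to reduce the claim to Proposition~\ref{prop:stopping_rule_delta} by a pathwise domination argument. First I will show that $\widehat{Z}(t) \le Z(t)$ for every $t$ and every sample path. Fix $t$ and abbreviate $n_{\min}(t) \coloneqq \min_{(i',j')\in I} N_{i'j'}(t)$. For any $C' \in \Alt(C_t)$, each summand $\KL((\widehat{v}_t)_{ij}, v'_{ij})$ is nonnegative (possibly $+\infty$), and $N_{ij}(t) \ge n_{\min}(t)$. Hence
\begin{align}
\sum_{(i,j)\in I} N_{ij}(t)\, \KL((\widehat{v}_t)_{ij}, v'_{ij}) \;\ge\; n_{\min}(t) \sum_{(i,j)\in I} \KL((\widehat{v}_t)_{ij}, v'_{ij}) .
\end{align}
Both statistics take the infimum over the same set $\Alt(C_t)$, since $C_t = \mathrm{Proj}_{\mathcal{C}}(\widehat{C}_t)$ is shared. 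Taking the infimum over $C'$ on both sides therefore gives $Z(t) \ge \widehat{Z}(t)$. Two edge cases need a remark. If $n_{\min}(t)=0$, then $\widehat{Z}(t)=0$ by the convention $0\cdot\infty = 0$, or we simply note that $\widehat{C}_t$ is undefined and neither rule stops. If some KL term is infinite, the inequality still holds in $[0,\infty]$.

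Next, since both rules use the same threshold $\beta(t,\delta)$, the event $\{\widehat{Z}(t) > \beta(t,\delta)\}$ is contained in $\{Z(t) > \beta(t,\delta)\}$ for every $t$. So it is tempting to conclude $\widehat{\tau}_\delta \ge \tau_\delta$ and transfer the bound directly. The difficulty is that the recommended clusterings are $C_{\widehat{\tau}_\delta}$ and $C_{\tau_\delta}$, which are evaluated at different times. The error events therefore do not nest in an obvious way.

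This is the main obstacle, and I would handle it by revisiting how Proposition~\ref{prop:stopping_rule_delta} is proved rather than citing its conclusion. The standard argument (cf.\ \cite{kaufmann2021mixture}) actually establishes the time-uniform statement
\begin{align}
\Pr\Big(\exists\, t \in \mathbb{N}:\; f([C_t]) \ne f([C]) \text{ and } Z(t) > \beta(t,\delta)\Big) \le \delta .
\end{align}
The reason is that whenever $f([C_t]) \neq f([C])$, the true instance $C$ lies in $\Alt(C_t)$. In that case
\begin{align}
Z(t) \le \sum_{(i,j)\in I} N_{ij}(t)\,\KL((\widehat{v}_t)_{ij}, v_{ij}),
\end{align}
and the mixture-martingale concentration result of \cite[Theorem~7]{kaufmann2021mixture} bounds, uniformly over $t$, the probability that the right-hand side exceeds $\beta(t,\delta)$.

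Given this uniform statement, the corollary follows directly. On the event $\{\widehat{\tau}_\delta < \infty,\ f([C_{\widehat{\tau}_\delta}]) \ne f([C])\}$, the time $t=\widehat{\tau}_\delta$ satisfies $f([C_t]) \neq f([C])$ and $Z(t) \ge \widehat{Z}(t) > \beta(t,\delta)$. This event is thus contained in the event displayed above, whose probability is at most $\delta$.
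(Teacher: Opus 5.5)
Your proof is correct and follows the paper's route. The paper deduces the corollary from the pathwise domination $Z(t)\ge\widehat{Z}(t)$ together with Proposition~\ref{prop:stopping_rule_delta}, whose proof in Appendix~\ref{app:stopping_rule_delta} in fact bounds the same time-uniform event $\{\exists t: C_t\not\sim C,\ Z(t)>\beta(t,\delta)\}$ via \cite[Theorem~7]{kaufmann2021mixture}. You point out explicitly that this time-uniform bound must be used, rather than the conclusion about $\tau_\delta$, because $C_{\widehat{\tau}_\delta}$ and $C_{\tau_\delta}$ are evaluated at different times; this makes precise a step the paper leaves implicit.
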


In  Proposition \ref{prop:ALTstopping_rule_simplify}  in  Appendix~\ref{app:computationally_feas}, we derive a closed-form expression for $\widehat{Z}(t)$,
and show that the search over $\Alt(C_t)$ in~\eqref{eq:ALTstopping_stat} can be reduced to a search over finitely many equivalence classes.  Hence, the new stopping statistic in~\eqref{eq:ALTstopping_stat} becomes   a {\em computationally tractable} and  practical alternative to the GLR-based statistic $Z(t)$ (whose computation is infeasible due to lack of a closed-form expression). 
However, this  gain in computational efficient comes with a potential drawback---namely, at the  possible  expense of asymptotic optimality. To characterize how far we may have deviated from optimality, we seek an upper bound on the expected sampling complexity as previously obtained for the GLR statistic in Theorem \ref{thm:upper_bound}.

\begin{restatable}{theorem}{mixALTUpperBound}
\label{thm:mix_ALTupper_bound}
For any instance $C \in \mathcal{C}$ and sufficiently small $\sigma >0$, under the sampling rule in~\eqref{eq:sample_rule_mix} and the alternate stopping rule in~\eqref{eq:ALTstopping_rule}, the expected sample complexity satisfies
    \begin{align}
       \limsup_{\delta \rightarrow 0} \frac{\E_C[\widehat{\tau}_\delta]}{\ln(1/\delta)} \le \widetilde{D}_\epsilon^*(\sigma;C)^{-1} , 
       \label{eq:sigma_upper-bound-A3CNP}
    \end{align}
where the constant $0<\widetilde{D}_\epsilon^*(\sigma;C)<\infty$ is given by 
\begin{equation}
   \widetilde{D}_\epsilon^*(\sigma;C)= \bigg[ \min_{(i',j')\in I} {\lambda_\epsilon^*(\sigma;C)}_{i'j'} \bigg]  \bigg[ \inf_{C' \in \Alt(C)}\sum_{(i,j)\in I}\!\!\! \KL(v_{ij}, v'_{ij}) \bigg] - \frac{\sigma}{2}  \|\lambda_\epsilon^*(\sigma;C)\|^2.
    \label{eq:sigma-tilde-D-star-C}
\end{equation}
\end{restatable}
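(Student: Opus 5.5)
We adapt the standard Garivier–Kaufmann argument for the upper bound on the stopping time, as already used for Theorem~\ref{thm:upper_bound}. We replace $Z(t)$ by the factored statistic $\widehat{Z}(t)$ and track how the factorization alters the limit. Throughout, write $\lambda_\epsilon^* \coloneqq \lambda_\epsilon^*(\sigma;C)$. Define
\begin{align}
g(N,\widehat{C}) \coloneqq \Big[\min_{(i',j')\in I} N_{i'j'}\Big]\inf_{C'\in\Alt(\mathrm{Proj}_{\mathcal{C}}(\widehat{C}))}\sum_{(i,j)\in I}\KL(\widehat{v}_{ij},v'_{ij}),
\end{align}
so that $\widehat{Z}(t)=g(N(t),\widehat{C}_t)$.

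\textbf{Step 1 (concentration event).} Fix $\xi>0$. Following the proof of Theorem~\ref{thm:upper_bound}, define for $T\in\mathbb{N}$ the event
\begin{align}
\mathcal{E}_T=\bigcap_{t\ge T^{1/4}}^{T}\Big\{\|\widehat{C}_t-C\|_\infty\le\xi,\ \|N(t)/t-\lambda_\epsilon^*\|_\infty\le\xi\Big\}.
\end{align}
The forced-exploration part of \eqref{eq:sample_rule_mix} gives $N_{ij}(t)\ge\sqrt{t}-\binom{M}{2}/2$. Combined with Chernoff bounds and the continuity argument behind Proposition~\ref{prop:mix_optsample} (continuity of $C\mapsto\lambda^*(\sigma;C)$, which follows from uniqueness under the strongly concave regularizer, together with local constancy of $\mathrm{Proj}_{\mathcal{C}}$), this yields $\sum_T\Pr(\mathcal{E}_T^c)<\infty$.

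For $\xi$ small, note that $\widehat{C}_t$ within $\xi$ of $C$ forces $[C_t]=[C]$, because $p>1/2>q$ makes the thresholding in Algorithm~\ref{alg:proj} exact. Hence $\Alt(C_t)=\Alt(C)$.

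\textbf{Step 2 (lower bound on the statistic on $\mathcal{E}_T$).} Set
\begin{align}
G(\mu,\widehat{C})=\Big[\min_{(i',j')}\mu_{i'j'}\Big]\inf_{C'\in\Alt(C)}\sum_{(i,j)\in I}\KL(\widehat{v}_{ij},v'_{ij}),
\end{align}
so that $\widehat{Z}(t)=t\,G(N(t)/t,\widehat{C}_t)$ on $\mathcal{E}_T$. We need continuity of $G$ at $(\lambda_\epsilon^*,C)$. The min factor is trivially continuous. For the infimum factor, use Theorem~\ref{thm:inner_inf}-type reasoning (or Proposition~\ref{prop:ALTstopping_rule_simplify}) to reduce the infimum to finitely many equivalence classes. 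Within each class, $(p',q')$ is optimized over a compact parameter set of a function jointly continuous in $(\widehat{c}_{ij},p',q')$. A finite minimum of continuous functions is continuous. Therefore
\begin{align}
\widehat{Z}(t)\ge t\,(G(\lambda_\epsilon^*,C)-\eta(\xi)), \qquad \eta(\xi)\to0.
\end{align}
Since $G(\lambda_\epsilon^*,C)\ge\widetilde{D}_\epsilon^*(\sigma;C)$ (the penalty term is nonnegative), we get $\widehat{Z}(t)\ge t(\widetilde{D}_\epsilon^*(\sigma;C)-\eta(\xi))$. Positivity of this quantity requires $\sigma$ small enough that $\frac{\sigma}{2}\|\lambda_\epsilon^*\|^2\le\frac{\sigma}{2}$ stays below the first term. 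The first term is bounded below by $\frac{\epsilon}{|I|}$ times a positive constant, so this is where the hypothesis ``sufficiently small $\sigma$'' enters.

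\textbf{Step 3 (stopping-time bound).} On $\mathcal{E}_T$, we have
\begin{align}
\min(\widehat{\tau}_\delta,T)\le\sqrt{T}+\sum_{t=\sqrt{T}}^{T}\I\{t(\widetilde{D}_\epsilon^*-\eta)\le\beta(T,\delta)\}\le\sqrt{T}+\frac{\beta(T,\delta)}{\widetilde{D}_\epsilon^*-\eta}.
\end{align}
Let $T_0(\delta)$ be the smallest $T$ with $\sqrt{T}+\beta(T,\delta)/(\widetilde{D}_\epsilon^*-\eta)\le T$. Then $\widehat{\tau}_\delta\le T$ on $\mathcal{E}_T$ for $T\ge T_0(\delta)$, so
\begin{align}
\E[\widehat{\tau}_\delta]\le T_0(\delta)+\sum_T\Pr(\mathcal{E}_T^c).
\end{align}
Because $\beta(T,\delta)=\ln(1/\delta)+O(\ln\ln T)+o(\ln(1/\delta))$ by the asymptotics of $\mathcal{C}_{\exp}$, we obtain
\begin{align}
\limsup_{\delta\to0}\frac{T_0(\delta)}{\ln(1/\delta)}\le(\widetilde{D}_\epsilon^*-\eta(\xi))^{-1}.
\end{align}
Letting $\xi\to0$ gives \eqref{eq:sigma_upper-bound-A3CNP}.

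\textbf{Main obstacle.} The hardest step is Step 1's summability together with the continuity in Step 2. In particular, the infimum over the infinite set $\Alt(C)$ has to behave continuously in $\widehat{C}$ even though $\widehat{C}_t\notin\mathcal{C}$. I would handle this by first proving that the infimum equals a finite minimum over the neighbouring equivalence classes $\min(C)$, with closed-form optimal $(p',q')$ given by weighted averages of the $\widehat{c}_{ij}$, which are manifestly continuous. Summability then reuses the tracking lemma from Proposition~\ref{prop:mix_optsample}.
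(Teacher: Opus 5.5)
Your proposal is correct and follows essentially the same route as the paper's proof, which reruns the Garivier--Kaufmann argument of Theorem~\ref{thm:upper_bound}. Both use a concentration event on which $\Alt(C_t)=\Alt(C)$ and $N(t)/t$ tracks $\lambda_\epsilon^*(\sigma;C)$, a continuity-based lower bound $\widehat Z(t)\ge t\bigl(\widetilde D_\epsilon^*(\sigma;C)-\eta\bigr)$, and inversion of the threshold $\beta$. Your version is, if anything, slightly tighter: you drop the penalty via $G(\lambda_\epsilon^*,C)\ge\widetilde D_\epsilon^*(\sigma;C)$, and you use the sharp $\ln(1/\delta)+o(\ln(1/\delta))$ growth of $\beta$ rather than the paper's $\gamma>1$ slack. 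Two small fixes: the tracking part of your event should only be asserted for $t\ge\sqrt{T}$, where it follows deterministically from concentration on $[T^{1/4},T]$ via the D-tracking lemma. Also, the reduction to $\min(C)$ in your last paragraph is unnecessary, since $\Alt(C)/\sim$ is already finite and each class's infimum over $(p',q')$ is continuous in $\widehat C$.
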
  
The proof of Theorem \ref{thm:mix_ALTupper_bound} can be found in Appendix \ref{app:mix_ALTupper_bound}.

Putting together the D-Tracking sampling rule in~\eqref{eq:sample_rule_mix} and the practical stopping rule  in~\eqref{eq:ALTstopping_rule}, we summarize  our algorithm, $\Alg$: \underline{A}lmost \underline{A}symptotically Optimal \underline{A}ctive \underline{C}lustering with \underline{N}oisy \underline{P}airwise Observations.  
$\Alg$ begins by sampling every pair of items once, following which sampling proceeds according to~\eqref{eq:sample_rule_mix}. At time step $t$, the estimated instance $\widehat{C}_t$ is first projected onto $\mathcal{C}$   to obtain $C_t = \mathrm{Proj}_{\mathcal{C}}(\widehat{C}_t)$.  After solving for $\lambda^*(\sigma;C_t)$, the sampling distribution $\lambda_\epsilon^*(\sigma;C_t)$ is obtained as a   mixture of $\lambda^*(\sigma;C_t)$ and the uniform distribution as in \eqref{eqn:mixture},  and item pair 
$(\vI_{t+1}, \vJ_{t+1}) \sim \lambda_\epsilon^*(\sigma;C_t)$ is queried.
Upon receiving a binary reward from the query, the statistic $\widehat{Z}(t)$ is evaluated.
If $\widehat{Z}(t) \geq \beta(t, \delta)$, $\Alg$ stops and outputs 
$f(C_t)$, the clustering under $C_t$, at the stopping time $\widehat{\tau}_\delta=t$.

\subsection{On the Suboptimality Gap of $\Alg$ }
\label{sec:subopt_gap}

We now study the instance-dependent \emph{multiplicative} suboptimality gap incurred by replacing the GLR statistic in~\eqref{eq:glr} with the computationally feasible surrogate in $\Alg$, leading to~\eqref{eq:sigma_upper-bound-A3CNP} for fixed  $\epsilon\in(0,1)$ and $\sigma>0$. In particular, we compare  $D_\epsilon^\star(\sigma;C)$ and $\widetilde D_\epsilon^\star(\sigma;C)$ through the ratio
\begin{equation}
    {\rm SG}_{\epsilon}(\sigma;C)
    \;\coloneqq\;
    \frac{D_\epsilon^\star(\sigma;C)}{\widetilde D_\epsilon^\star(\sigma;C)}.
\end{equation}
We emphasize that the \emph{actual} gap with respect to the information-theoretic benchmark $D^\star(C)$ appearing in the lower bound in \eqref{eq:lower-bound} is
$D^\star(C)/\widetilde D_\epsilon^\star(\sigma;C)$.
Nevertheless, ${\rm SG}_\epsilon(\sigma;C)$ is a principled and convenient proxy: by~\eqref{eq:berges-maximum}, the numerator $D_\epsilon^\star(\sigma;C)$ approaches $D^\star(C)$ as $\epsilon,\sigma \to 0$, so that for sufficiently small $\epsilon$ and $\sigma$, the ratios
$D^\star(C)/\widetilde D_\epsilon^\star(\sigma;C)$ and
$D_\epsilon^\star(\sigma;C)/\widetilde D_\epsilon^\star(\sigma;C)$
are expected to be close (heuristically, and as corroborated empirically in our experiments in Section \ref{sec:experiments}).

To obtain an explicit instance-dependent upper bound on ${\rm SG}_\epsilon(\sigma;C)$, define the maximum and minimum of the sampling allocation $\lambda_\epsilon^\star(\sigma;C)$ respectively as
$\overline{m} \coloneqq \max_{(i,j)\in I}\big(\lambda_\epsilon^\star(\sigma;C)\big)_{ij}$ and 
$\underline{m} \coloneqq \min_{(i,j)\in I}\big(\lambda_\epsilon^\star(\sigma;C)\big)_{ij}$,  
and introduce the instance-dependent separation constant
\[
A(C) \;\coloneqq\; \inf_{C'\in \Alt(C)} \sum_{(i,j)\in I} \KL \big(v_{ij},v'_{ij}\big).
\]
Using the elementary bounds
$$\underline m \sum_{ i,j } \KL(v_{ij},v'_{ij})
\le \sum_{ i,j } (\lambda_\epsilon^\star)_{ij}\KL(v_{ij},v'_{ij})
\le \overline m \sum_{ i,j } \KL(v_{ij},v'_{ij})$$
together with the definitions of $D_\epsilon^\star(\sigma;C)$ and $\widetilde D_\epsilon^\star(\sigma;C)$, we obtain
\begin{equation}\label{eq:SG_basic_bound}
{\rm SG}_\epsilon(\sigma;C)
\;\le\;
\frac{\overline{m}A(C) }{\underline{m}A(C)  - \frac{\sigma}{2}\,\big\|\lambda_\epsilon^\star(\sigma;C)\big\|_2^2},
\end{equation}
whenever the denominator is positive.

It remains to control $\|\lambda_\epsilon^\star(\sigma;C)\|_2^2$. Recalling the definition of $\lambda_\epsilon^*(\sigma;C)$ from \eqref{eqn:mixture},
and using $\|\lambda^\star(\sigma;C)\|_2^2\le 1$, $\langle \lambda^\star(\sigma;C),u\rangle = 1/|I|$, and $\|u\|_2^2=1/|I|$, we have
\begin{align}
    \big\|\lambda_\epsilon^\star(\sigma;C)\big\|_2^2
    &= \big\|(1-\epsilon)\lambda^\star(\sigma;C) + \epsilon u\big\|_2^2 \nonumber\\
    &\le (1-\epsilon)^2+ \frac{2\epsilon-\epsilon^2}{|I|}. \label{eq:lambda_eps_norm_bound}
\end{align}
Substituting~\eqref{eq:lambda_eps_norm_bound} into~\eqref{eq:SG_basic_bound} yields the explicit bound
\begin{equation}\label{eq:SG_explicit_bound}
    {\rm SG}_\epsilon(\sigma;C)
    \;\le\;
    \frac{\overline{m}/\underline{m}}
    {1-\frac{\sigma}{2\underline{m}A(C) }\Big((1-\epsilon)^2+ \frac{2\epsilon-\epsilon^2}{|I|}\Big)}.
\end{equation}
The bound in~\eqref{eq:SG_explicit_bound} elucidates two  sources of the gap: the \emph{spread} $\overline m/\underline m$ of the sampling proportions, and the size of the regularization relative to $\underline m A(C) $. In particular, when $\sigma$ is small enough that the  correction term in the denominator is negligible, the bound is governed primarily by $\overline m/\underline m$.

Motivated by~\eqref{eq:SG_explicit_bound} (with $\epsilon$ and $\sigma$ small), we use the    data-dependent proxy:
\begin{align}
\widehat{\rm SG}_\epsilon(\sigma;C_t)
\;\coloneqq\;
\frac{\max_{(i,j)\in I}\big(\lambda_\epsilon^\star(\sigma;C_t)\big)_{ij}}
{\min_{(i,j)\in I}\big(\lambda_\epsilon^\star(\sigma;C_t)\big)_{ij}}. \label{eqn:data_dep}
\end{align}
When $\Alg$ is run with sufficiently small $\epsilon$ and $\sigma$, we observe that $\widehat{\rm SG}_\epsilon(\sigma; C_t)$ tracks ${\rm SG}_\epsilon(\sigma;C)$ well, providing a simple diagnostic for the dominant spread term $\overline m/\underline m$.

\begin{figure}[t!]
     \centering
 \includegraphics[width=0.5\linewidth]{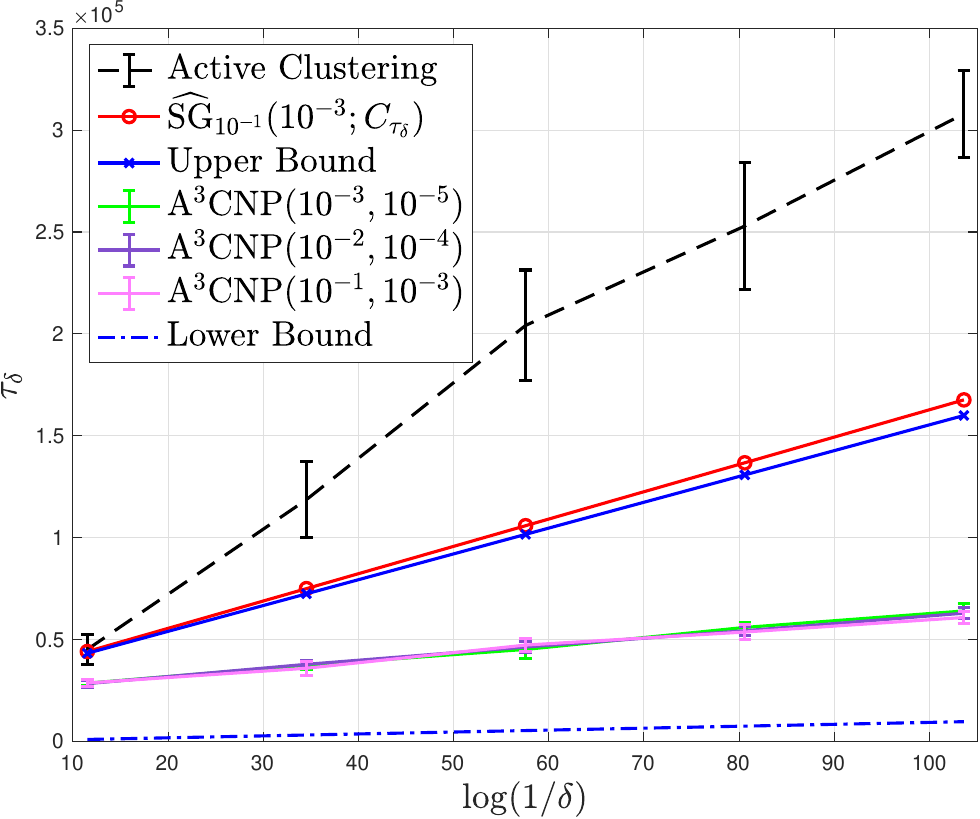}
     \caption{The asymptotic ($\delta \to 0$) sample complexity of $\Alg$, with varying $\epsilon$ (first argument) and $\sigma$ (second argument) values, relative to the active clustering algorithm of \cite{ChenVinayakHassibi2023}. Also included in the plot are the theoretical lower~\eqref{eq:lower-bound} and upper bound~\eqref{eq:sigma_upper-bound-A3CNP} and the data-dependent proxy $\widehat{\rm SG}_\epsilon(\sigma; C_{\tau_\delta})$ in~\eqref{eqn:data_dep} evaluated at the stopping time $\tau_\delta$ using $\epsilon=10^{-1}$ and $\sigma=10^{-3}$. }
     \label{fig:tns}
 \end{figure}
 
\section{Experimental Results} \label{sec:experiments}

To validate the performance of $\Alg$, we run   experiments and benchmark against the algorithm of Chen et al.~\cite{ChenVinayakHassibi2023}, which addresses the same problem setting. We work with $M=6$, $K=3$, using the difficult case where the probabilities are relatively close to each other with $p=0.6$ and $q=0.4$, and cluster assignments $\{\{1,2\}, \{3,4,5\}, \{6\}\}$. Additionally, taking cues from \cite{kaufmann2021mixture}, we use a smaller threshold of $\tilde{\beta}(t, \delta)\coloneqq 3  R \ \log (1+ \log(t/R)) +  R\ \mathcal{C}_{\mathrm{exp}} (\log ((\overline{M}-1)/\delta)/R)$ for experimentation. Here, $\overline{M}$ represents  the total number of possible clusterings of $M$ items (with an a priori unknown number of underlying clusters), and $R=M-1$ represents the {\em rank} of the problem---the worst-case number of variables that are required to describe any given equivalence class of instances;
 for a discussion on rank, see Appendix~\ref{app:sec:rank-of-active-clustering}. 
As noted in \cite[Proposition~21]{kaufmann2021mixture}, the GLR-based stopping rule, used  with the above threshold, remains $\delta$-correct. 


In Figure~\ref{fig:tns}, we compare the stopping times of $\Alg$ with those of the active clustering algorithm of~\cite{ChenVinayakHassibi2023}, averaged over $10$ independent trials. We additionally plot the lower bound from Theorem~\ref{thm:lower_bound} and the upper bound in~\eqref{eq:SG_explicit_bound}, obtained by scaling the right-hand side of~\eqref{eq:SG_explicit_bound} by $D^*(C)^{-1}\ln(1/\delta)$. Across all tested values of $\epsilon$ and $\sigma$, the slope of $\Alg$ closely matches that of the lower bound and is markedly flatter than the upper bound, demonstrating near-optimal scaling. Furthermore, the stopping time of $\Alg$ varies only marginally with $\epsilon$ and $\sigma$, underscoring its strong robustness to these parameters. The data-dependent proxy $\widehat{\rm SG}_\epsilon(\sigma;C_{\tau_\delta})$ (evaluated at the stopping time $\tau_\delta$) in~\eqref{eqn:data_dep} consistently provides an accurate approximation of the upper bound. Overall, $\Alg$ substantially and consistently outperforms the algorithm of Chen et al.~\cite{ChenVinayakHassibi2023}, yielding significantly faster (smaller) stopping times across this experimental setting.

\section{Conclusion and Future Research}
In this work, we developed a principled framework for active clustering with noisy pairwise observations in the fixed-confidence regime. By casting the problem as a pure-exploration bandit task, we derived an instance-dependent lower bound on the expected query complexity and showed how this bound can be approached by adaptive sampling and stopping rules. Our analysis culminated in the  $\Alg$ algorithm, which combines statistically efficient sampling with a computationally tractable stopping criterion, and the gap of its hardness parameter to the that of the lower bound is characterized as the target error probability tends to zero.

Beyond the specific algorithmic contributions, this paper highlights a broader perspective on active clustering as an information-acquisition problem. The characterization of the hardest alternative instances as simple merge-or-split perturbations of the true clustering suggests that fine-grained structural understanding can play a decisive role in designing efficient adaptive procedures. This viewpoint opens the door to a new class of clustering algorithms that dynamically concentrate effort on the most ambiguous structural decisions, rather than uniformly refining all pairwise relationships.

Looking ahead, several directions appear especially promising. Extending the framework to richer and more realistic feedback models—such as heterogeneous noise levels, contextual or side information, or structured query costs—would bring active clustering closer to real deployment settings. On the theoretical side, developing computationally efficient stopping rules that are fully asymptotically optimal remains an open challenge, as does understanding optimal tradeoffs in large-scale regimes where querying all pairs is infeasible. More broadly, we believe that the tools introduced here will be useful beyond clustering, offering a foundation for studying adaptive structure discovery problems at the intersection of bandits, information theory, and unsupervised learning.


\appendices

\section{Example Demonstration of Equivalence Relation (Section~\ref{sec:setup})}
\label{app:eg}
In essence, the equivalence relation defined in Section \ref{sec:setup} simply forms equivalence classes over instances that represent the same cluster assignments, but with different Bernoulli parameters for reward distributions. For example, let $C$, $C' \in \mathcal{C}$ 
\begin{align*}
  C =  \begin{pmatrix}
        p & p & q & q & q & q  \\
        p & p & q & q & q & q \\
        q & q & p & q & q & q \\
        q & q & q & p & p & p \\
        q & q & q & p & p & p 
        \\
        q & q & q & p & p & p 
\end{pmatrix}; \hspace{1cm}
C' =  \begin{pmatrix}
        p' & p' & q' & q' & q' & q'  \\
        p' & p' & q' & q' & q' & q' \\
        q' & q' & p' & q' & q' & q' \\
        q' & q' & q' & p' & p' & p' \\
        q' & q' & q' & p' & p' & p' 
        \\
        q' & q' & q' & p' & p' & p' 
\end{pmatrix},
\end{align*}
where $ 1 \ge p,p' > 0.5 > q,q \ge 0$ and $p\ne p'$, $q \ne q'$. The binary matrices $C_{=}$, $C'_{=}$, $C_{\ne}$ and $C'_{\ne}$ are as follows 
\begin{align*}
  C_{=}  = C'_{=}  \begin{pmatrix}
        1 & 1 & 0 & 0 & 0 & 0  \\
        1 & 1 & 0 & 0 & 0 & 0 \\
        0 & 0 & 1 & 0 & 0 & 0 \\
        0 & 0 & 0 & 1 & 1 & 1 \\
        0 & 0 & 0 & 1 & 1 & 1 
        \\
        0 & 0 & 0 & 1 & 1 & 1 
\end{pmatrix}; \hspace{1cm}
C_{\ne} = C'_{\ne} =  \begin{pmatrix}
        0 & 0 & 1 & 1 & 1 & 1  \\
        0 & 0 & 1 & 1 & 1 & 1 \\
        1 & 1 & 0 & 1 & 1 & 1 \\
        1 & 1 & 1 & 0 & 0 & 0 \\
        1 & 1 & 1 & 0 & 0 & 0 
        \\
        1 & 1 & 1 & 0 & 0 & 0 
\end{pmatrix}.  
\end{align*}
This can be observed easily by replace all $p$, $p'$ (resp.\ $q$, $q'$) with $1$ (resp.\ $0$) in $C_{=}$ and $C'_{=}$ and vice versa for $C_{\ne}$ and $C'_{\ne}$. Then, $C\sim C'$ and both matrices represent cluster assignments $\{ \{1,2\}, \{3\}, \{4,5,6\}\}$.

\section{Proofs of Results in  Section \ref{sec:lower_bound}}

\subsection{Proof of Theorem \ref{thm:lower_bound}} \label{app:prf_lower_bd}


\begin{proof}
 We begin by assuming that $\E_C [\tau_\delta]$ is finite and from a simplification of the sup-inf optimization problem in Appendix~\ref{app:proof_inner_inf}, it will be easy to see that $D^*(C)>0$. For fixed $C \in \mathcal{C}$ and $C' \in \Alt(C)$, we can consider them as bandit models with ${\binom{M}{2}}$ arms, $\{v_{ij}\}_{(i,j)\in I}$ and $\{v'_{ij}\}_{(i,j)\in I}$ respectively. Then, by \cite[Lemma 1]{kaufmann2016complexity}, we have
\begin{align*}
\sum_{(i,j) \in I} \E_C [ N_{ij}(\tau_\delta)] \KL(v_{ij}, v'_{ij}) 
\ge d(\delta, 1-\delta). 
\end{align*}
Since $\sum_{(i,j) \in I} \E_C [ N_{ij}(\tau_\delta)] = \E_C [ \tau_\delta]$ and $\{\frac{\E_C [ N_{ij}(\tau_\delta)]}{\E_C [ \tau_\delta]}\}_{(i,j)\in I}$ forms a probability distribution over~$I$, 
\begin{align*}
 d(\delta, 1-\delta) &\le \inf_{C' \in \Alt(C)} \sum_{(i,j) \in I} \E_C [ N_{ij}(\tau_\delta)] \KL(v_{ij}, v'_{ij}) \\
&= \E_C [ \tau_\delta] \inf_{C' \in \Alt(C)} \sum_{(i,j) \in I} \frac{\E_C [ N_{ij}(\tau_\delta)]}{\E_C [ \tau_\delta]} \KL(v_{ij}, v'_{ij}) \\
&\le \E_C [ \tau_\delta] \sup_{\lambda \in \P_{I}} \inf_{C' \in \Alt(C)} \sum_{(i,j) \in I} \lambda_{ij} \KL(v_{ij}, v'_{ij}) \\
&= \E_C [ \tau_\delta] D^*(C). 
\end{align*} 
If $\E_C [ \tau_\delta]$ is infinite, the inequality holds trivially. Then, as $\lim_{\delta \to 0} \frac{d(\delta, 1-\delta)}{\ln(1/\delta)} = 1$, 
\begin{align*}
\liminf_{\delta \rightarrow 0}\frac{\E_C[\tau_\delta]}{\ln(1/\delta)} \ge D^*(C)^{-1} 
\end{align*}
holds, as desired. 
\end{proof}

\subsection{Proof of Theorem \ref{thm:inner_inf}}
\label{app:proof_inner_inf}


\begin{proof}
Using the equivalence relation defined in Section~\ref{sec:setup}, we can write the sup-inf optimization problem as
\begin{align*}
    \sup_{\lambda \in \P_{I}} \inf_{C' \in \Alt(C)} h(\lambda, C') = \sup_{\lambda \in \P_{I}} \inf_{[C'] \in \Alt(C)/\sim}  \inf_{C'' \in [C']} h(\lambda, C''), 
\end{align*}
where recall that $\Alt(C)/\sim$ denotes the set of all equivalence classes $\Alt(C)$.
By Lemma \ref{lem:pqstar},
\begin{align*}
&\sup_{\lambda \in \P_{I}} \inf_{[C'] \in \Alt(C)/\sim}  \inf_{C'' \in [C']} h(\lambda, C'') \\ &= \sup_{\lambda \in \P_{I}} \inf_{[C'] \in \Alt(C)/\sim} \alpha   -b_1 \log (1-q') - b_2 \log (q')   - b_3 \log (1-p')  -  b_4 \log (p')
\\ &= \sup_{\lambda \in \P_{I}} \min_{[C'] \in \Alt(C)/\sim} z(\lambda_{N_1}, \lambda_{N_2}), 
\end{align*}
where $\alpha, b_1, b_2, b_3, b_4, p'$ and $q'$ are as defined in the lemma and the inf can be written as a min since $\Alt(C)/\sim$ is finite. The objective function above is equivalent to $z(\lambda_{N_1}, \lambda_{N_2})$ as defined in Lemma \ref{lem:inc_z} (below) and is increasing in $\lambda_{N_1}, \lambda_{N_2}$ for a fixed $\lambda$ and $C \in \mathcal{C}$, where $\lambda_{N_1} = \sum_{(i,j)\in \Lambda_{N_1}([C'])} \lambda_{ij}$ and $\lambda_{N_2} = \sum_{(i,j)\in \Lambda_{N_2}([C'])} \lambda_{ij}$. 

Then, we analyze the inner minimum 
\begin{align*}
 \min_{[C'] \in \Alt(C)/\sim} z(\lambda_{N_1}, \lambda_{N_2}),.
\end{align*} 
In this case, when minimizing $\lambda_{N_1} + \lambda_{N_2}$ for any arbitrary $\lambda \in \P_{I}$ over $\Lambda_{N_1}([C']),\Lambda_{N_2}([C'])$ that are always disjoint, we must have 
\begin{align*}
&\lambda_{N_1} + \lambda_{N_2} \le \lambda_{N'_1} + \lambda_{N'_2}= \sum_{(i,j)\in \Lambda_{N_1}([C''])} \lambda_{ij} +  \sum_{(i,j)\in \Lambda_{N_2}([C''])} \lambda_{ij} \\
&\overset{\text{(i)}}{\iff} \Lambda_{N_1}([C']) \subset \Lambda_{N_1}([C'']) \text{ and } \Lambda_{N_2}([C']) \subset \Lambda_{N_2}([C'']) \\
&\overset{\text{(ii)}}{\iff} \lambda_{N_1} \le \lambda_{N'_1} \text{ and } \lambda_{N_2} \le \lambda_{N'_2} \\
&\overset{\text{(iii)}}{\Rightarrow} z(\lambda_{N_1},\lambda_{N_2}) \le z(\lambda_{N'_1},\lambda_{N'_2}).
\end{align*}

Implications (ii) and (iii) are obvious from the definitions of $\Lambda_{N_1}([C'])$, $\Lambda_{N_2}([C'])$ and the fact that $z$ is increasing in $\lambda_{N_1}$, $\lambda_{N_2}$. From (ii), $\Leftarrow$ of (i) is clear. So, we simply need to show $\Rightarrow$ of (i). Suppose $\lambda_{N_1} + \lambda_{N_2} \le \lambda_{N'_1} + \lambda_{N'_2}$ but $\Lambda_{N_1}([C']) \not\subset \Lambda_{N_1}([C'']) \text{ or } \Lambda_{N_2}([C']) \not\subset \Lambda_{N_2}([C''])$. Without loss of generality, assume that $\Lambda_{N_1}([C']) \not\subset \Lambda_{N_1}([C''])$ so that we can find $(i,j) \in \Lambda_{N_1}([C']) \setminus (\Lambda_{N_1}([C']) \cap \Lambda_{N_1}([C'']))$. Then, there exists $\lambda' \in \P_{I}$ and small enough $\epsilon$, $\gamma > 0$ such that 
\begin{align*}
\lambda'_{N'_1} + \lambda'_{N'_2}=\epsilon >\lambda'_{\Lambda_{N_1}([C']) \cap \Lambda_{N_1}([C''])} + \lambda'_{\Lambda_{N_2}([C']) \cap \Lambda_{N_2}([C''])}
&\ge 0,\\
\lambda'_{N_p\setminus (\Lambda_{N_1}([C']) \cup \Lambda_{N_1}([C'']))}+\lambda'_{N_q\setminus (\Lambda_{N_2}([C']) \cup \Lambda_{N_2}([C'']))}=\gamma &> 0
\end{align*}
and 
\begin{align*}
\lambda'_{N_1} + \lambda'_{N_2} = 1 - \gamma - \epsilon + \lambda'_{\Lambda_{N_1}([C']) \cap \Lambda_{N_1}([C''])} + \lambda'_{\Lambda_{N_2}([C']) \cap \Lambda_{N_2}([C''])} > \epsilon.
\end{align*}
 Hence, a contradiction. 

Due to implications 1, 2 and 3 above, it should be clear that for an arbitrarily fixed $\lambda \in \P_{I}$ and $C \in \mathcal{C}^K$, minimizing $\lambda_{N_1}+\lambda_{N_2}$ leads to minimizing $z(\lambda_{N_1},\lambda_{N_2})$. We will make use of this fact repeatedly in Lemma~\ref{lem:K+1K-1}, enabling a further simplification of the inner minimum,
\begin{align*}
\sup_{\lambda \in \P_{I}} \min_{[C'] \in \Alt(C)/\sim} z(\lambda_{N_1}, \lambda_{N_2}) = \sup_{\lambda \in \P_{I}} \min_{[C'] \in (\Alt(C) \cap (\mathcal{C}^{K-1} \cup \mathcal{C}^{K+1}))/\sim} z(\lambda_{N_1}, \lambda_{N_2}). 
\end{align*}

Finally, to conclude, we need to show that for $[C'] \in (\Alt(C) \cap (\mathcal{C}^{K-1} \cup \mathcal{C}^{K+1}))/\sim$, if $[C'] \notin \Alt(C)_1/\sim$, then there exists a $[\bar{C}] \in \Alt(C)_1/\sim$ such that $\lambda_{\bar{N_1}} +\lambda_{\bar{N_2}}= \sum_{(i,j)\in \Lambda_{N_1}([\bar{C}])} \lambda_{ij}+\sum_{(i,j)\in \Lambda_{N_2}([\bar{C}])} \lambda_{ij} \le  \lambda_{N_1} +\lambda_{N_2}$. Following the approach of Lemma \ref{lem:K+1K-1}, we will consider 2 cases: first, when $[C']$ has $K+1$ clusters; and second, when $[C']$ has $K-1$ clusters. \\
\underline{Case 1: $[C'] \in (\Alt(C) \cap \mathcal{C}^{K+1})/\sim$} \\
If $[C'] \notin \Alt(C)_1/\sim$, then 
\[
f([C']) \ne \{g_1,\cdots,g_{i-1},g'_{i_1},g'_{i_2},g_{i+1},\cdots,g_K\},
\] where $g'_{i_1}\cup g'_{i_2}=g_i$ and there exists at least one $g'_j \not\subset g_k$ for all $k \in [K]$.\footnote{If $g'_j \subset g_k$ for all $j \in [K+1]$, then there cannot be more than two $g'_j$ that are proper subsets of some $g_k$, forcing $[C'] \in \Alt(C)_1$.} This implies that there exists items $m_1 > m_2 \in g'_j$ such that $m_1$ and $m_2$ are from different $g_k$. Hence, $(m_2, m_1) \in \Lambda_{N_2}([C'])$. Further, since $|f([C'])|=K+1$, we also cannot have $g_i \subset g'_k$ for all $g_i \in f(C)$ because then we would have less than $K+1$ sets in $f([C'])$. Then, there must exist at least one $g_i \in f(C)$ such that there exists $g'_{k_1}, g'_{k_2} \in f([C'])$ where $|g'_{k_1} \cap g_i|, |g'_{k_2} \cap g_i|>0$. Then, let $g_{k_1,i} \coloneqq g'_{k_1} \cap g_i$ and $I_{k_1,i}\coloneqq \{(i,j) \in g_{k_1,i} \times (g_i \setminus g_{k_1,i}) \cup (g_i \setminus g_{k_1,i}) \times g_{k_1,i}: j > i\} \subset \lambda_{N_1}(C')$. Since $[\bar{C}] \coloneqq f^{-1}(\{g_1,\cdots,g_{i-1},g_{k_1,i},g_i \setminus g_{k_1,i},g_{i+1},\cdots,g_K\}) \in \Alt(C)_1$ and $\lambda_{\bar{N_1}}+\lambda_{\bar{N_2}}=\sum_{(i,j)\in I_{k_1,i}} \lambda_{ij}$, we must have $\lambda_{\bar{N_1}}+\lambda_{\bar{N_2}} \le \lambda_{N_1}+\lambda_{N_2}$. \\

\underline{Case 2: $[C'] \in (\Alt(C) \cap \mathcal{C}^{K-1})/\sim$} \\
If $[C'] \notin \Alt(C)_1/\sim$, then 
\[
f([C']) \ne \{g_1,\cdots,g_{i_1-1},g_{i_1+1},\cdots,g_{i_2-1},g_{i_2+1},\cdots,g_K,g'_i\},
\] where $g'_{i} = g_{i_1}\cup g_{i_2}$ and there exists at least one $g_j \not\subset g'_k$, for all $k\in [K-1]$.\footnote{
If for all $g_j\in f(C)$, $g_j \subset g'_k$, then none of the $g_j$ has been split and in order for $|f([C'])|=K-1$, we must have exactly two $g_{i_1},g_{i_2} \in f(C)$ merged to form $g'_i$. Hence, $[C'] \in \Alt(C)_1/\sim$. 
} This implies that there is some $g'_k\in f([C'])$ such that $|g_j|>|g_j\cap g'_k|>0$ and letting $g_{k,j} \coloneqq g'_{k} \cap g_j$ and $I_{k,j}\coloneqq \{(i,j) \in g_{k,j} \times (g_j \setminus g_{k,j}) \cup (g_j \setminus g_{k,j}) \times g_{k,j}: j > i\} \subset \Lambda_{N_1}([C'])$. 
Then, for $[\bar{C}] \coloneqq f^{-1}(\{g_1,\cdots,g_{j-1},g_{k,j},g_j \setminus g_{k,j},g_{j+1},\cdots,g_K\}) \in \Alt(C)_1$, we have $\lambda_{\bar{N_1}}+\lambda_{\bar{N_2}}=\sum_{(i,j)\in I_{k,j}} \lambda_{ij} \le \lambda_{N_1}+\lambda_{N_2}$. 
\end{proof}
\begin{lemma} 
\label{lem:pqstar}
For a fixed $\lambda \in \P_{I}$, $C \in \mathcal{C}$ and any $[C']$, solving $$\inf_{C'' \in [C']}
h(\lambda,C'') = \inf_{1 \ge p''>0.5>q''\ge0}
h(\lambda,C'')$$ yields
\begin{align*}
p'' =  \max \Big\{0.5,  \frac{\lambda_{N_1}'}{\lambda_{N_1}'+\lambda_{N_2}}p + \frac{\lambda_{N_2}}{\lambda_{N_1}'+\lambda_{N_2}}q \Big\}; \quad q'' =  \min \Big\{0.5,  \frac{\lambda_{N_1}}{\lambda_{N_1}+\lambda_{N_2}'}p + \frac{\lambda_{N_2}'}{\lambda_{N_1}+\lambda_{N_2}'}q \Big\},
\end{align*}
where with a slight abuse of notation, $\lambda_{N_1} = \sum_{(i,j)\in \Lambda_{N_1}([C'])} \lambda_{ij}$, $\lambda'_{N_1} = \sum_{(i,j)\in N_p} \lambda_{ij} -\lambda_{N_1}$, $\lambda_{N_2} = \sum_{(i,j)\in \Lambda_{N_2}([C'])} \lambda_{ij}$ and $\lambda'_{N_2} = \sum_{(i,j)\in N_q} \lambda_{ij} -\lambda_{N_2}$.  
\end{lemma}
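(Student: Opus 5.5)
Fix $\lambda$, $C$ and the class $[C']$. Then the four index sets $\Lambda_{N_1}([C'])$, $N_p\setminus\Lambda_{N_1}([C'])$, $\Lambda_{N_2}([C'])$ and $N_q\setminus\Lambda_{N_2}([C'])$ are fixed, and so are their $\lambda$-masses $\lambda_{N_1},\lambda'_{N_1},\lambda_{N_2},\lambda'_{N_2}$. Writing $d(x,y)$ for the binary KL divergence, the objective becomes
\begin{align*}
h(\lambda,C'') = \lambda'_{N_1} d(p,p'') + \lambda_{N_2} d(q,p'') + \lambda_{N_1} d(p,q'') + \lambda'_{N_2} d(q,q'').
\end{align*}
The key observation is that this splits into a function of $p''$ alone, $\phi(p'')=\lambda'_{N_1}d(p,p'')+\lambda_{N_2}d(q,p'')$, plus a function of $q''$ alone, $\psi(q'')=\lambda_{N_1}d(p,q'')+\lambda'_{N_2}d(q,q'')$. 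The only coupling in the feasible set $1\ge p''>0.5>q''\ge 0$ is through the threshold $1/2$, and that threshold separates them. So the infimum is $\inf_{p''\in(1/2,1]}\phi+\inf_{q''\in[0,1/2)}\psi$, and I would handle each piece separately.

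For $\phi$, I would expand $d(x,y)=x\log\frac{x}{y}+(1-x)\log\frac{1-x}{1-y}$. Up to a constant, this gives $\phi(y)=-b\log y-b'\log(1-y)$ with $b=\lambda'_{N_1}p+\lambda_{N_2}q$ and $b'=\lambda'_{N_1}(1-p)+\lambda_{N_2}(1-q)$. This is a convex function on $(0,1)$, since it is a nonnegative combination of $-\log y$ and $-\log(1-y)$. Setting the derivative to zero gives the unconstrained minimizer
\begin{align*}
y^\star=\frac{b}{b+b'}=\frac{\lambda'_{N_1}p+\lambda_{N_2}q}{\lambda'_{N_1}+\lambda_{N_2}},
\end{align*}
which is the usual weighted-mean formula. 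This constant-plus-log form is also the $\alpha-\sum b_k\log(\cdot)$ representation used in the proof of Theorem~\ref{thm:inner_inf}. By convexity, the minimizer restricted to $(1/2,1]$ is $y^\star$ when $y^\star>1/2$. Otherwise $\phi$ is increasing on $(1/2,1]$, and the infimum is approached as $p''\downarrow 1/2$. The formula $\max\{0.5,y^\star\}$ should therefore be read as the infimizing value, possibly attained only in the closure. Note also that $y^\star\le p\le 1$ automatically, so the upper constraint is never active. The same argument applied to $\psi$ gives $q''=\min\{0.5,(\lambda_{N_1}p+\lambda'_{N_2}q)/(\lambda_{N_1}+\lambda'_{N_2})\}$, with $q''\ge q\ge 0$ automatically.

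The main obstacle is the boundary and degenerate cases rather than the calculus. First, the constraints $p''>0.5>q''$ are strict, so when the weighted mean falls on the wrong side of $1/2$ the infimum is not attained. I would state that the value equals $\phi(1/2)$ (respectively $\psi(1/2)$) by continuity of $\phi$ and $\psi$ up to $1/2$. Second, the weights can vanish, e.g.\ $\lambda'_{N_1}+\lambda_{N_2}=0$. Then $\phi\equiv 0$, any $p''$ is optimal, and I would adopt the convention that the displayed formula picks any admissible value. Third, when $q=0$ or $p=1$, the terms $d(q,\cdot)$ or $d(p,\cdot)$ can be infinite at endpoints. Here I would check that $y^\star$ stays in the open interval whenever the corresponding weight is positive, which the convention $0\log 0=0$ handles.
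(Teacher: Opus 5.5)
Your proposal is correct and follows essentially the paper's argument: the paper also rewrites $h$ as $\alpha - b_1\log(1-q'') - b_2\log q'' - b_3\log(1-p'') - b_4\log p''$, optimizes $p''$ and $q''$ separately, and obtains the weighted means $b_4/(b_3+b_4)$ and $b_2/(b_1+b_2)$, clipped at $1/2$. The only difference is that the paper justifies the clipping by a five-case sign analysis of the derivative rather than by convexity; your explicit remarks on non-attainment at the strict boundary $1/2$ and on vanishing weights spell out what the paper covers by loose wording and the convention $b_2/(b_1+b_2)=0$ when $b_1=b_2=0$.
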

\begin{proof} Recall the definition of $h(\lambda,C)$ in~\eqref{eqn:sup-inf}. 
Since for all $C'', C''' \in [C']$, $\Lambda_{N_1}(C'')=\Lambda_{N_1}(C''')$ and $\Lambda_{N_2}(C'')=\Lambda_{N_2}(C''')$, $\lambda_{N_1}, \lambda_{N_2}, \lambda'_{N_1}$ and $\lambda'_{N_2}$ are constant. Then, solving the optimization problem in the lemma simply reduces to solving for the optimal $p''$ and $q''$. Expanding $h(\lambda, C'')$, we have
\begin{align*}
h(\lambda, C'') &=  \lambda_{N_1} d(p,q'') +\lambda_{N_2}'  d(q,q'') + \lambda_{N_2} d(q,p'') + \lambda_{N_1}'  d(p,p'')  \\
&= \lambda_{N_1}  \left[ (1-p) \log \left( \frac{1-p}{1-q''} \right) +  p \log \left( \frac{p}{q''} \right) \right] +  \lambda_{N_2}' \left[ (1-q) \log \left( \frac{1-q}{1-q''} \right) +  q \log \left( \frac{q}{q''} \right)\right] \\
&\quad +  \lambda_{N_2} \left[ (1-q) \log \left( \frac{1-q}{1-p''} \right) +  q \log \left( \frac{q}{p''} \right) \right] +  \lambda_{N_1}' \left[ (1-p) \log \left( \frac{1-p}{1-p''} \right) +  p \log \left( \frac{p}{p''} \right)\right] \\
&= \lambda_{N_1} (1-p) \log (1-p) - \lambda_{N_1} (1-p) \log (1-q''') +  \lambda_{N_1}  p \log (p) - \lambda_{N_1}  p \log (q''') \\
&\quad +  \lambda_{N_2}'  (1-q) \log (1-q)- \lambda_{N_2}'  (1-q) \log (1-q''') +  \lambda_{N_2}' q \log (q) -  \lambda_{N_2}' q \log(q''') \\
&\quad + \lambda_{N_2} (1-q) \log (1-q)  - \lambda_{N_2} (1-q) \log (1-p'') + \lambda_{N_2} q \log(q)  - \lambda_{N_2} q \log (p'') \\
&\quad +  \lambda_{N_1}' (1-p) \log (1-p) - \lambda_{N_1}' (1-p) \log (1-p'') +  \lambda_{N_1}'  p \log (p) -  \lambda_{N_1}'  p \log  (p'') \\
&=  (\lambda_{N_1}+\lambda'_{N_1})[ (1-p) \log (1-p) +  p \log (p) ] +  (\lambda_{N_2}+\lambda_{N_2}') [ (1-q) \log (1-q)  +  q \log (q)] \\
&\quad  -[ \lambda_{N_1} (1-p)  + \lambda_{N_2}'  (1-q)] \log (1-q'') - [\lambda_{N_1}  p+  \lambda_{N_2}' q] \log (q'') \\
& \quad - [ \lambda_{N_1}' (1-p) + \lambda_{N_2} (1-q)] \log (1-p'')  -  [\lambda_{N_1}'  p  + \lambda_{N_2} q ]\log (p'') \\
&= \alpha   -b_1 \log (1-q'') - b_2 \log (q'')   - b_3 \log (1-p'')  -  b_4 \log (p''), 
\end{align*}
where $\alpha\coloneqq (\lambda_{N_1}+\lambda'_{N_1})[ (1-p) \log (1-p) +  p \log (p) ] +  (\lambda_{N_2}+\lambda_{N_2}') [ (1-q) \log (1-q)  +  q \log (q)]$, $b_1\coloneqq  \lambda_{N_1} (1-p)  + \lambda_{N_2}'  (1-q)$, $b_2\coloneqq \lambda_{N_1}  p+  \lambda_{N_2}' q$, $b_3 \coloneqq \lambda_{N_1}' (1-p) + \lambda_{N_2} (1-q)$ and $b_4\coloneqq \lambda_{N_1}'  p  + \lambda_{N_2} q $ are constants. 

Defining $g_2(q'') \coloneqq -b_1\log (1-q'') - b_2 \log (q'') $ and $g_4(p'') \coloneqq  - b_3 \log (1-p'')  -  b_4 \log (p'') $ and re-labeling $p''$ to $y$ and $q''$ to $x$ for easy reading, we differentiate $h$ with respect to $x$ and $y$ to get 
\begin{align}
\frac{dh}{dx} &= \frac{dg_2}{dx} =  \frac{b_1}{1-x} - \frac{b_2}{x}  = (b_1+b_2)\frac{x-\frac{b_2}{b_1+b_2}}{x(1-x)} \label{eqn:dhdx} \\
\frac{dh}{dy} &= \frac{dg_4}{dy} =  \frac{b_3}{1-y} - \frac{b_4}{y}  =(b_3+b_4) \frac{y-\frac{b_4}{b_3+b_4}}{y(1-y)}. \label{eqn:dhdy}
\end{align} 
Since $b_1, b_2, b_3, b_4 \ge 0$ and $ 1/2> x \ge 0$ and  $1 \ge y > 1/2 $: \\
Inspecting \eqref{eqn:dhdx}:\\

\underline{Case 1: } $\frac{b_2}{b_1+b_2} \ge1/2$ \\

then, $\frac{dh}{dx} < 0$, $\forall x \in (0,1/2)$ and the infimum occurs at $x=1/2$\\

\underline{Case 2: } $0 < \frac{b_2}{b_1+b_2} < 1/2$ \\ 

then, $\frac{dh}{dx} < 0$, $\forall x \in (0,\frac{b_2}{b_1+b_2})$ and $\frac{dh}{dx} > 0$, $\forall x \in (\frac{b_2}{b_1+b_2},1/2)$ so, the infimum occurs at $x=\frac{b_2}{b_1+b_2}$\\

\underline{Case 3: } $b_2=0, b_1 >0 \implies \frac{b_2}{b_1+b_2}=0$ \\ 

then, $\frac{dh}{dx}=b_1 \frac{1}{1-x} > 0$, $\forall x \in (0,1/2)$ so the infimum occurs at $x=0=\frac{b_2}{b_1+b_2}$. \\

\underline{Case 4: } $b_2>0, b_1 =0 \implies \frac{b_2}{b_1+b_2}=1$ \\

then, $\frac{dh}{dx}=b_2 \frac{-1}{x} < 0$, $\forall x \in (0,1/2)$ so the infimum occurs at $x=0.5$. \\

\underline{Case 5: } $b_2=b_1 =0 \implies g_2(x)=0$.  \\

Inspecting \eqref{eqn:dhdy}: \\

\underline{Case 1: } $0 < \frac{b_4}{b_4+b_3} \le 1/2$ \\

then, $\frac{dh}{dy} > 0$, $\forall y \in (1/2,1)$ and the infimum occurs at $y=1/2$\\

\underline{Case 2: } $\frac{b_4}{b_4+b_3} > 1/2$ \\ 

then, $\frac{dh}{dy} < 0$, $\forall y \in (1/2,\frac{b_4}{b_3+b_4})$ and $\frac{dh}{dy} > 0$, $\forall y \in (\frac{b_4}{b_3+b_4},1]$ so, the infimum occurs at $y=\frac{b_4}{b_3+b_4}$\\

\underline{Case 3: } $b_4=0, b_3 >0 \implies \frac{b_4}{b_3+b_4}=0$ \\ 

then, $\frac{dh}{dy}=b_3 \frac{1}{1-y} > 0$, $\forall y \in (1/2,1)$ so the infimum occurs at $y=0.5$. \\

\underline{Case 4: } $b_4>0, b_3 =0 \implies \frac{b_4}{b_3+b_4}=1$ \\

then, $\frac{dh}{dx}=b_4 \frac{-1}{y} < 0$, $\forall y \in (1/2,1)$ so the infimum occurs at $y=1= \frac{b_4}{b_3+b_4}$. \\

\underline{Case 5: } $b_3=b_4 =0 \implies g_4(y)=0$.  \\

Putting all the cases together, we have \\
\begin{align*}
q'' = x &= \min \Big\{0.5, \frac{b_2}{b_1+b_2}\Big\} =  \min \Big\{0.5,  \frac{\lambda_{N_1}}{\lambda_{N_1}+\lambda_{N_2}'}p + \frac{\lambda_{N_2}'}{\lambda_{N_1}+\lambda_{N_2}'}q \Big\} \\
p'' = y &= \max \Big\{0.5, \frac{b_4}{b_3+b_4}\Big\}=  \max \Big\{0.5,  \frac{\lambda_{N_1}'}{\lambda_{N_1}'+\lambda_{N_2}}p + \frac{\lambda_{N_2}}{\lambda_{N_1}'+\lambda_{N_2}}q \Big\}, 
\end{align*} where the convention is that if $b_1=b_2=0$, $\frac{b_2}{b_1+b_2}=0$ and similarly for $b_3=b_4=0$. It is not possible that $b_1=b_2=b_3=b_4=0$ since $\sum_{(i,j) \in I} \lambda_{ij}=1$. \end{proof}

\begin{lemma}
\label{lem:inc_z}
For a fixed $\lambda \in \P_{I}$, $C \in \mathcal{C}$, we can define a function $z: \R^2 \rightarrow \R$ where 
\begin{align*}
z(\lambda_{N_1},\lambda_{N_2}) &= \alpha - b_1(\lambda_{N_1},\lambda_{N_2})\log(1-q'(\lambda_{N_1},\lambda_{N_2}))\\
    &\quad - b_2(\lambda_{N_1},\lambda_{N_2})\log(q'(\lambda_{N_1},\lambda_{N_2}))\\
    &\quad - b_3(\lambda_{N_1},\lambda_{N_2})\log(1-p'(\lambda_{N_1},\lambda_{N_2}))\\
    &\quad - b_4(\lambda_{N_1},\lambda_{N_2})\log(p'(\lambda_{N_1},\lambda_{N_2})).
\end{align*}
 where
\begin{align*}
p' =  \max \{0.5,  \frac{b_4}{b_3+b_4} \}&; \quad q' =  \min \{0.5,  \frac{b_2}{b_1+b_2} \} \\
b_1=  \lambda_{N_1} (1-p)  + \lambda_{N_2}'  (1-q)&; \quad b_2= \lambda_{N_1}  p+  \lambda_{N_2}' q \\ b_3 = \lambda_{N_1}' (1-p) + \lambda_{N_2} (1-q)&; \quad b_4= \lambda_{N_1}'  p  + \lambda_{N_2} q 
\end{align*}  and $\lambda_{N_1}' =\sum_{(i,j) \in N_p} \lambda_{ij}-\lambda_{N_1}$, $\lambda_{N_2}' =\sum_{(i,j) \in N_q} \lambda_{ij}-\lambda_{N_2}$. $z$ is increasing on $D=\{(\lambda_{N_1},\lambda_{N_2})\in \R^2: p' > 0.5 \text{ or }  q' < 0.5 \}$.
\end{lemma}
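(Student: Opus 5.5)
We will view $z$ as the optimal value of the inner problem from Lemma~\ref{lem:pqstar} and apply an envelope argument. Write $P \coloneqq \sum_{(i,j)\in N_p}\lambda_{ij}$ and $Q\coloneqq\sum_{(i,j)\in N_q}\lambda_{ij}$. These are fixed once $\lambda$ is fixed, so $\lambda'_{N_1}=P-\lambda_{N_1}$ and $\lambda'_{N_2}=Q-\lambda_{N_2}$. By the computation in the proof of Lemma~\ref{lem:pqstar},
\begin{align*}
z(\lambda_{N_1},\lambda_{N_2}) = \min_{1\ge y> 1/2> x\ge 0}\Big[\lambda_{N_1} d(p,x) + (Q-\lambda_{N_2}) d(q,x) + \lambda_{N_2} d(q,y) + (P-\lambda_{N_1}) d(p,y)\Big],
\end{align*}
where the constraint set is extended to include its boundary $x,y=1/2$. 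With the boundary included, the minimizer is exactly $(q',p')$ as stated, and $\alpha$ absorbs the entropy terms. This gives $z$ as the pointwise minimum of a family of functions that are affine in $(\lambda_{N_1},\lambda_{N_2})$. Hence $z$ is concave, and Danskin's theorem applies. The minimizer is unique, because each one-dimensional problem is strictly convex whenever its weights are not both zero. So $z$ is differentiable with
\begin{align*}
\frac{\partial z}{\partial \lambda_{N_1}} = d(p,q') - d(p,p'), \qquad \frac{\partial z}{\partial \lambda_{N_2}} = d(q,p') - d(q,q').
\end{align*}
I would verify this directly as well, by differentiating the closed form through the chain rule. On the interior branch, the terms involving $\partial q'/\partial\lambda$ and $\partial p'/\partial\lambda$ vanish by the first-order conditions $b_1/(1-q')=b_2/q'$ and $b_3/(1-p')=b_4/p'$. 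On the clipped branch, $q'$ or $p'$ is constant, so those terms are zero anyway.

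The main step is showing that both partial derivatives are nonnegative, and strictly positive on $D$. Note that $q'$ is a convex combination of $p$ and $q$ clipped to lie below $1/2$, so $q\le q'\le 1/2$. Likewise $1/2\le p'\le p$. Since $x\mapsto d(p,x)$ is decreasing on $[0,p]$ and $q'\le 1/2\le p'\le p$, we get $d(p,q')\ge d(p,p')$. Similarly, $y\mapsto d(q,y)$ is increasing on $[q,1]$ and $q\le q'\le p'$, so $d(q,p')\ge d(q,q')$. Both derivatives are therefore $\ge 0$. Equality requires $q'=p'=1/2$, which is exactly the complement of $D$. So on $D$ both derivatives are strictly positive, and $z$ is (strictly) increasing in each coordinate there.

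The obstacle I expect is degeneracy and the boundary of the clipping. The weights $b_1+b_2$ or $b_3+b_4$ can vanish, and the constraint set is only half-open. I would handle these cases through continuity of $z$ rather than differentiability. On each of the finitely many regions determined by which clip is active, $z$ is $C^1$ with the derivatives above. Across region boundaries, $z$ is continuous because it is a minimum of continuous functions over a compact set once $x,y=1/2$ are included. Monotonicity along each coordinate line then follows piecewise: apply the mean value theorem on each piece and glue the pieces by continuity. Finally, I would remark that taking the infimum over the open constraint set gives the same value as the minimum over the closed set, so the closure costs nothing.
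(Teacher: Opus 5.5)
Your proposal is correct and is essentially the paper's proof. The paper also differentiates the closed form by the chain rule and drops the $\partial q'/\partial\lambda_{N_k}$, $\partial p'/\partial\lambda_{N_k}$ terms, using the first-order conditions $b_1/(1-q')=b_2/q'$ and $b_3/(1-p')=b_4/p'$ on the interior branch and constancy of $q'$ or $p'$ on the clipped branch. It arrives at exactly $d(p,q')-d(p,p')$ and $d(q,p')-d(q,q')$. It then proves these are nonnegative by comparing $(1-x)^{1-p}x^{p}$ and $(1-x)^{1-q}x^{q}$ at $p'$ and $q'$ against the common value $1/2$ attained at $x=1/2$. That is your monotonicity of $d(p,\cdot)$ on $[0,p]$ and $d(q,\cdot)$ on $[q,1]$ written out. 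Your Danskin/concavity framing and your explicit treatment of degenerate weights and clipping boundaries add rigor the paper skips, but they do not change the route.
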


\begin{proof}
Since $\frac{d \lambda_{N_2}'}{d \lambda_{N_2}}=-1$ and $\frac{b_2}{b_1+b_2}=\frac{\lambda_{N_1}}{\lambda_{N_1}+\lambda_{N_2}'}p + \frac{\lambda_{N_2}'}{\lambda_{N_1}+\lambda_{N_2}'}q$, if $b_2 < b_1$, 
\begin{align*}
\frac{d q'}{d \lambda_{N_1}} &= \frac{p}{\lambda_{N_1}+\lambda_{N_2}'} - \frac{\lambda_{N_1}p}{(\lambda_{N_1}+\lambda_{N_2}')^2} - \frac{\lambda_{N_2}'q}{(\lambda_{N_1}+\lambda_{N_2}')^2} \\
&= \frac{\lambda_{N_1}p+\lambda_{N_2}'p- \lambda_{N_1}p-\lambda_{N_2}'q}{(\lambda_{N_1}+\lambda_{N_2}')^2}\\
&= \frac{\lambda_{N_2}'(p-q)}{(\lambda_{N_1}+\lambda_{N_2}')^2} \\
\frac{d q'}{d \lambda_{N_2}} &= \frac{-q}{\lambda_{N_1}+\lambda_{N_2}'} + \frac{\lambda_{N_2}'q}{(\lambda_{N_1}+\lambda_{N_2}')^2} + \frac{\lambda_{N_1}p}{(\lambda_{N_1}+\lambda_{N_2}')^2} \\
&= \frac{-\lambda_{N_1}q-\lambda_{N_2}'q+\lambda_{N_2}'q+\lambda_{N_1}p}{(\lambda_{N_1}+\lambda_{N_2}')^2}\\
&= \frac{\lambda_{N_1}(p-q)}{(\lambda_{N_1}+\lambda_{N_2}')^2}
\end{align*} 
If $b_2 \ge b_1$, $\frac{d q'}{d \lambda_{N_1}}=\frac{d q'}{d \lambda_{N_2}} = 0$. \\
Since $\frac{d \lambda_{N_1}'}{d \lambda_{N_1}}=-1$ and $\frac{b_4}{b_3+b_4}=\frac{\lambda_{N_1}'}{\lambda_{N_1}'+\lambda_{N_2}}p + \frac{\lambda_{N_2}}{\lambda_{N_1}'+\lambda_{N_2}}q$, if $b_4 > b_3$, 
\begin{align*}
\frac{d p'}{d \lambda_{N_1}} &= \frac{-p}{\lambda_{N_1}'+\lambda_{N_2}} + \frac{\lambda_{N_1}'p}{(\lambda_{N_1}'+\lambda_{N_2})^2} + \frac{\lambda_{N_2}q}{(\lambda_{N_1}'+\lambda_{N_2})^2} \\
&= \frac{-\lambda_{N_1}'p-\lambda_{N_2}p+ \lambda_{N_1}'p+\lambda_{N_2}q}{(\lambda_{N_1}'+\lambda_{N_2})^2}\\
&= \frac{\lambda_{N_2}(q-p)}{(\lambda_{N_1}'+\lambda_{N_2})^2}\\
\frac{d p'}{d \lambda_{N_2}} &= \frac{q}{\lambda_{N_1}'+\lambda_{N_2}} - \frac{\lambda_{N_2}q}{(\lambda_{N_1}'+\lambda_{N_2})^2} - \frac{\lambda_{N_1}'p}{(\lambda_{N_1}'+\lambda_{N_2})^2} \\
&= \frac{\lambda_{N_1}'q+\lambda_{N_2}q- \lambda_{N_2}q-\lambda_{N_1}'p}{(\lambda_{N_1}'+\lambda_{N_2})^2}\\
&= \frac{\lambda_{N_1}'(q-p)}{(\lambda_{N_1}'+\lambda_{N_2})^2}
\end{align*} 
If $b_4 \le b_3$, $\frac{d p'}{d \lambda_{N_1}} =\frac{d p'}{d \lambda_{N_2}} = 0$. \\

Then,
\begin{alignat*}{3}
\frac{d q'}{d \lambda_{N_1}} &=   \begin{cases}
        \frac{\lambda_{N_2}'(p-q)}{(\lambda_{N_1}+\lambda_{N_2}')^2} \ge 0 , & \text{if $b_1-b_2 > 0$} \\
         0, & \text{otherwise} \\
            \end{cases};&\qquad&
  \frac{d q'}{d \lambda_{N_2}} &=   \begin{cases}
        \frac{\lambda_{N_1}(p-q)}{(\lambda_{N_1}+\lambda_{N_2}')^2} \ge 0 , & \text{if $b_1-b_2 > 0$} \\
         0, & \text{otherwise} \\
            \end{cases}\\
              \frac{d p'}{d \lambda_{N_1}}&=   \begin{cases}
      \frac{\lambda_{N_2}(q-p)}{(\lambda_{N_1}'+\lambda_{N_2})^2} \le 0 , & \text{if $b_3-b_4 < 0$} \\
         0, & \text{otherwise} \\
            \end{cases}; &\qquad&
            \frac{d p'}{d \lambda_{N_2}}&=   \begin{cases}
     \frac{\lambda_{N_1}'(q-p)}{(\lambda_{N_1}'+\lambda_{N_2})^2} \le 0 , & \text{if $b_3-b_4 < 0$} \\
         0, & \text{otherwise} \\
            \end{cases}
\end{alignat*}

Since 
\begin{align*}
\frac{d b_1}{d \lambda_{N_1}} &= (1-p); \quad \frac{d b_1}{d \lambda_{N_2}} = -(1-q)\\
\frac{d b_2}{d \lambda_{N_1}} &= p; \quad \frac{d b_2}{d \lambda_{N_2}} = -q\\
\frac{d b_3}{d \lambda_{N_1}} &= -(1-p); \quad \frac{d b_3}{d \lambda_{N_2}} = (1-q)\\
\frac{d b_4}{d \lambda_{N_1}} &= -p; \quad \frac{d b_4}{d \lambda_{N_2}} = q,
\end{align*}
we have
\begin{align}
\nonumber
\frac{d z(\lambda_{N_1},\lambda_{N_2})}{d\lambda_{N_1}}=&-\frac{d b_1}{d \lambda_{N_1}} \log \{ 1-q'(\lambda_{N_1},\lambda_{N_2}) \} -  b_1\frac{d \log \{ 1-q'(\lambda_{N_1},\lambda_{N_2}) \}}{d \lambda_{N_1}} \\ \nonumber
&-\frac{d b_2}{d \lambda_{N_1}} \log \{ q'(\lambda_{N_1},\lambda_{N_2}) \} -  b_2\frac{d \log \{q'(\lambda_{N_1},\lambda_{N_2}) \}}{d \lambda_{N_1}} \\ \nonumber
&-\frac{d b_3}{d \lambda_{N_1}} \log \{ 1-p'(\lambda_{N_1},\lambda_{N_2}) \} -  b_3\frac{d \log \{ 1-p'(\lambda_{N_1},\lambda_{N_2}) \}}{d \lambda_{N_1}} \\ \nonumber
&-\frac{d b_4}{d \lambda_{N_1}} \log \{ p'(\lambda_{N_1},\lambda_{N_2}) \} -  b_4\frac{d \log \{p'(\lambda_{N_1},\lambda_{N_2}) \}}{d \lambda_{N_1}} \\ \nonumber
=&-(1-p) \log \{ 1-q'(\lambda_{N_1},\lambda_{N_2}) \} +  b_1 (1-q'(\lambda_{N_1},\lambda_{N_2}))^{-1} \frac{d q'(\lambda_{N_1},\lambda_{N_2}) }{d \lambda_{N_1}} \\ \nonumber
&-p \log \{ q'(\lambda_{N_1},\lambda_{N_2}) \} -  b_2(q'(\lambda_{N_1},\lambda_{N_2}))^{-1}\frac{dq'(\lambda_{N_1},\lambda_{N_2}) }{d \lambda_{N_1}} \\ \nonumber
&+(1-p) \log \{ 1-p'(\lambda_{N_1},\lambda_{N_2}) \} +  b_3(1-p'(\lambda_{N_1},\lambda_{N_2}))^{-1} \frac{d p'(\lambda_{N_1},\lambda_{N_2}) }{d \lambda_{N_1}} \\ \nonumber
&+p \log \{ p'(\lambda_{N_1},\lambda_{N_2}) \} -  b_4(p'(\lambda_{N_1},\lambda_{N_2}))^{-1}\frac{d p'(\lambda_{N_1},\lambda_{N_2}) }{d \lambda_{N_1}} \\  \label{eq:grad1}
=&(1-p) \log \{ \frac{1-p'(\lambda_{N_1},\lambda_{N_2})}{1-q'(\lambda_{N_1},\lambda_{N_2})} \} +p \log \{\frac{p'(\lambda_{N_1},\lambda_{N_2}) }{ q'(\lambda_{N_1},\lambda_{N_2})} \} \\ \label{eq:grad2}
&+ \left(\frac{b_1}{1-q'(\lambda_{N_1},\lambda_{N_2})}-\frac{b_2}{q'(\lambda_{N_1},\lambda_{N_2})}\right)\frac{d q'(\lambda_{N_1},\lambda_{N_2}) }{d \lambda_{N_1}} \\ \label{eq:grad3}
&+ \left( \frac{b_3}{1-p'(\lambda_{N_1},\lambda_{N_2})}-\frac{b_4}{p'(\lambda_{N_1},\lambda_{N_2})}\right) \frac{d p'(\lambda_{N_1},\lambda_{N_2}) }{d \lambda_{N_1}}. 
\end{align}
Then, when $b_1 > b_2$, $q'(\lambda_{N_1},\lambda_{N_2})=\frac{b_2}{b_1+b_2}$, $1-q'(\lambda_{N_1},\lambda_{N_2})=\frac{b_1}{b_1+b_2}$ and we have 
\begin{align*}
\frac{b_1}{1-q'(\lambda_{N_1},\lambda_{N_2})}-\frac{b_2}{q'(\lambda_{N_1},\lambda_{N_2})} = \frac{b_1}{\frac{b_1}{b_1+b_2}}-\frac{b_2}{\frac{b_2}{b_1+b_2}} =0. 
\end{align*} On the other hand, when $b_1 \le b_2$, $\frac{d q'(\lambda_{N_1},\lambda_{N_2}) }{d \lambda_{N_1}}=0$. Performing a similar calculation for the last term, shows that lines~\eqref{eq:grad2} and~\eqref{eq:grad3} are 0.

Next, we analyze the terms in line~\eqref{eq:grad1}, 
\begin{align*}
&(1-p) \log \{ \frac{1-p'(\lambda_{N_1},\lambda_{N_2})}{1-q'(\lambda_{N_1},\lambda_{N_2})} \} +p \log \{\frac{p'(\lambda_{N_1},\lambda_{N_2}) }{ q'(\lambda_{N_1},\lambda_{N_2})} \}\\
=&\log \underbrace{\left( \frac{1-p'(\lambda_{N_1},\lambda_{N_2})}{1-q'(\lambda_{N_1},\lambda_{N_2})} \right)^{(1-p)}}_{<1}+ \log\underbrace{ \left( \frac{p'(\lambda_{N_1},\lambda_{N_2}) }{ q'(\lambda_{N_1},\lambda_{N_2})} \right)^{p}}_{>1}
\end{align*}
Notice that $p' \in [0.5,p] $, $q' \in [q,0.5] $, $1-p' \in [1-p,0.5]$ and $1-q' \in [0.5,1-q]$. To simplify our expression, lets drop the dependancy on  $\lambda_{N_1}$ and $\lambda_{N_2}$ for now. Since log is an increasing function, to show that the expression is positive, we require that
\begin{align*}
 \log \left( \frac{p' }{ q'} \right)^{p} \ge -\log \left( \frac{1-p'}{1-q'} \right)^{(1-p)}
& \iff 
   \log \left( \frac{p' }{ q'} \right)^{p} \ge \log \left( \frac{1-q'}{1-p'} \right)^{(1-p)} \\
   &\iff  \left( \frac{p' }{ q'} \right)^{p} \ge  \left( \frac{1-q'}{1-p'} \right)^{(1-p)} \\
   &\iff  (1-p')^{(1-p)} (p')^{p}\ge  (1-q')^{(1-p)}( q')^{p} 
\end{align*}
Since for $g(x) = (1-x)^{(1-p)}x^p$, $\forall x \in [0,0.5]$, $g(x) \in [0,0.5]$, $0.5 \ge (1-q')^{(1-p)}( q')^{p}$. \\
Then, consider $f(x) = \log \{  (1-x)^{(1-p)} (x)^{p} \} = (1-p) \log (1-x) + p \log x$, where 
\begin{align*}
f'(x) = \frac{-(1-p)}{1-x} + \frac{p}{x}. 
\end{align*}
For $p \ge x \ge 0.5$,
\begin{align*}
 \frac{p}{1-p} \ge  \frac{x}{1-p} \ge  \frac{x}{1-x} \implies  p (1-x)  \ge  x (1-p) \implies \frac{p}{x}  \ge  \frac{1-p} {1-x} \implies f'(x) \ge 0. 
\end{align*}
Then, we have
\begin{align*}
f(x) \ge f(0.5) = \log 0.5 \implies (1-x)^{(1-p)} x^{p} \ge 0.5 
\end{align*}
Hence, 
\begin{align*}
(1-p')^{(1-p)} (p')^{p}\ge 0.5 \ge (1-q')^{(1-p)}( q')^{p} 
\end{align*}
and 
\begin{align*}
\frac{d z(\lambda_{N_1},\lambda_{N_2})}{d\lambda_{N_1}} \ge 0. 
\end{align*}
We note that the above holds with strict equality for $p'> 0.5$ or $q' < 0.5$. \\

Repeating for $\lambda_{N_2}$, 
\begin{align} 
\nonumber
\frac{d z(\lambda_{N_1},\lambda_{N_2})}{d\lambda_{N_2}}=&-\frac{d b_1}{d \lambda_{N_2}} \log \{ 1-q'(\lambda_{N_1},\lambda_{N_2}) \} -  b_1\frac{d \log \{ 1-q'(\lambda_{N_1},\lambda_{N_2}) \}}{d \lambda_{N_2}} \\
\nonumber
&-\frac{d b_2}{d \lambda_{N_2}} \log \{ q'(\lambda_{N_1},\lambda_{N_2}) \} -  b_2\frac{d \log \{q'(\lambda_{N_1},\lambda_{N_2}) \}}{d \lambda_{N_2}} \\
\nonumber
&-\frac{d b_3}{d \lambda_{N_2}} \log \{ 1-p'(\lambda_{N_1},\lambda_{N_2}) \} -  b_3\frac{d \log \{ 1-p'(\lambda_{N_1},\lambda_{N_2}) \}}{d \lambda_{N_2}} \\
\nonumber
&-\frac{d b_4}{d \lambda_{N_2}} \log \{ p'(\lambda_{N_1},\lambda_{N_2}) \} -  b_4\frac{d \log \{p'(\lambda_{N_1},\lambda_{N_2}) \}}{d \lambda_{N_2}} \\
\nonumber
=&(1-q) \log \{ 1-q'(\lambda_{N_1},\lambda_{N_2}) \} +  b_1 (1-q'(\lambda_{N_1},\lambda_{N_2}))^{-1} \frac{d q'(\lambda_{N_1},\lambda_{N_2}) }{d \lambda_{N_2}} \\
\nonumber
&+q \log \{ q'(\lambda_{N_1},\lambda_{N_2}) \} -  b_2(q'(\lambda_{N_1},\lambda_{N_2}))^{-1}\frac{dq'(\lambda_{N_1},\lambda_{N_2}) }{d \lambda_{N_2}} \\
\nonumber
&-(1-q) \log \{ 1-p'(\lambda_{N_1},\lambda_{N_2}) \} +  b_3(1-p'(\lambda_{N_1},\lambda_{N_2}))^{-1} \frac{d p'(\lambda_{N_1},\lambda_{N_2}) }{d \lambda_{N_2}} \\
\nonumber
&-q \log \{ p'(\lambda_{N_1},\lambda_{N_2}) \} -  b_4(p'(\lambda_{N_1},\lambda_{N_2}))^{-1}\frac{d p'(\lambda_{N_1},\lambda_{N_2}) }{d \lambda_{N_2}} \\
\label{eq:grad4}
=&(1-q) \log \{ \frac{1-q'(\lambda_{N_1},\lambda_{N_2})}{1-p'(\lambda_{N_1},\lambda_{N_2})} \} +q \log \{\frac{q'(\lambda_{N_1},\lambda_{N_2}) }{ p'(\lambda_{N_1},\lambda_{N_2})} \} \\ \label{eq:grad5}
&+ \left(\frac{b_1}{1-q'(\lambda_{N_1},\lambda_{N_2})}-\frac{b_2}{q'(\lambda_{N_1},\lambda_{N_2})}\right)\frac{d q'(\lambda_{N_1},\lambda_{N_2}) }{d \lambda_{N_2}} \\ \label{eq:grad6}
&+ \left( \frac{b_3}{1-p'(\lambda_{N_1},\lambda_{N_2})}-\frac{b_4}{p'(\lambda_{N_1},\lambda_{N_2})}\right) \frac{d p'(\lambda_{N_1},\lambda_{N_2}) }{d \lambda_{N_2}}.
\end{align}
Similarly, lines~\eqref{eq:grad4} and~\eqref{eq:grad5} are 0 and we analyze line~\eqref{eq:grad6} as before, 
\begin{align*}
&(1-q) \log \{ \frac{1-q'(\lambda_{N_1},\lambda_{N_2})}{1-p'(\lambda_{N_1},\lambda_{N_2})} \} +q \log \{\frac{q'(\lambda_{N_1},\lambda_{N_2}) }{ p'(\lambda_{N_1},\lambda_{N_2})} \}\\
=&\log \underbrace{\left(  \frac{1-q'(\lambda_{N_1},\lambda_{N_2})}{1-p'(\lambda_{N_1},\lambda_{N_2})} \right)^{(1-q)}}_{>1}+ \log\underbrace{ \left( \frac{q'(\lambda_{N_1},\lambda_{N_2}) }{ p'(\lambda_{N_1},\lambda_{N_2})} \right)^{q}}_{<1} \ge 0 \\
& \iff \left( \frac{1-q'}{1-p'} \right)^{(1-q)} \ge \left( \frac{ p'}{q'} \right) ^{q} \iff (1-q')^{(1-q)} (q' )^q\ge (p')^q(1-p')^{(1-q)}.
\end{align*}
Then, for $g(x) = (1-x)^{(1-q)}x^q$, $\forall x \in [0.5,1]$, $g(x) \in [0,0.5]$, $0.5 \ge (p')^q(1-p')^{(1-q)}$. \\
Again, consider $f(x) = \log \{  (1-x)^{(1-q)} (x)^{q} \} = (1-q) \log (1-x) + q \log x$, where 
\begin{align*}
f'(x) = \frac{-(1-q)}{1-x} + \frac{q}{x}
\end{align*}
For $0.5 \ge x \ge q$,
\begin{align*}
 \frac{1-q}{q} \ge  \frac{1-x}{q} \ge  \frac{1-x}{x} \implies  x (1-q)  \ge  q (1-x) \implies \frac{1-q}{1-x}  \ge  \frac{q} {x} \implies f'(x) \le 0
\end{align*}
Then, we have
\begin{align*}
f(x) \ge f(0.5) = \log 0.5 \implies (1-x)^{(1-q)} (x)^{q} \ge 0.5 
\end{align*}
Hence,
\begin{align*}
(1-q')^{(1-q)} (q' )^q \ge 0.5 \ge (p')^q(1-p')^{(1-q)}
\end{align*} and
\begin{align*}
\frac{d z(\lambda_{N_1},\lambda_{N_2})}{d\lambda_{N_2}} \ge 0.
\end{align*}
Similarly, we note that the above holds with strict equality for $p'> 0.5$ or $q' < 0.5$. Hence, concluding the proof. 
\end{proof} 

\begin{lemma}
\label{lem:K+1K-1}
For a fixed $\lambda \in \P_{I}$, $C \in \mathcal{C}^K$,
\begin{align*} \min_{[C'] \in \Alt(C)/\sim} z(\lambda_{N_1}, \lambda_{N_2}) =
\begin{cases}
 \min_{[C'] \in (\Alt(C) \cap (\mathcal{C}^{K-1} \cup \mathcal{C}^{K+1}))/\sim} z(\lambda_{N_1}, \lambda_{N_2}), &M > K > 1 \\
   \min_{[C'] \in (\Alt(C) \cap \mathcal{C}^{K+1})/\sim} z(\lambda_{N_1}, \lambda_{N_2}), &K=1 \\
\min_{[C'] \in (\Alt(C) \cap \mathcal{C}^{K-1})/\sim} z(\lambda_{N_1}, \lambda_{N_2}), &K=M, 
\end{cases}
\end{align*}
where $\lambda_{N_1} = \sum_{(i,j)\in \Lambda_{N_1}([C'])} \lambda_{ij}$ and $\lambda_{N_2} = \sum_{(i,j)\in \Lambda_{N_2}([C'])} \lambda_{ij}$. 
\end{lemma}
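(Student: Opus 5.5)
The plan is to use the monotonicity reduction already established in the proof of Theorem~\ref{thm:inner_inf}. For fixed $\lambda$ and $C$, if $\Lambda_{N_1}([\bar C])\subset\Lambda_{N_1}([C'])$ and $\Lambda_{N_2}([\bar C])\subset\Lambda_{N_2}([C'])$, then $\lambda_{\bar N_1}\le\lambda_{N_1}$ and $\lambda_{\bar N_2}\le\lambda_{N_2}$. Lemma~\ref{lem:inc_z} then gives $z(\lambda_{\bar N_1},\lambda_{\bar N_2})\le z(\lambda_{N_1},\lambda_{N_2})$. It therefore suffices to show a purely combinatorial fact: every alternative clustering $[C']\in\Alt(C)/\sim$ with a number of clusters $K'\notin\{K-1,K+1\}$ admits a $[\bar C]$ with $K\pm1$ clusters whose disagreement sets are contained in those of $[C']$. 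Here ``disagreement sets'' means the pairs split by $C'$ and the pairs joined by $C'$.

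\textbf{Splitting case.} Suppose some cluster $g_i$ of $C$ is cut by $C'$, i.e.\ there is $g'_k$ with $0<|g'_k\cap g_i|<|g_i|$. Let $S=g'_k\cap g_i$, and let $\bar C$ be the clustering obtained by splitting $g_i$ into $S$ and $g_i\setminus S$, with all other clusters unchanged. Then $\bar C$ has $K+1$ clusters and $\Lambda_{N_2}([\bar C])=\emptyset$. Every pair in $\Lambda_{N_1}([\bar C])$ has one item in $S\subset g'_k$ and the other in $g_i\setminus S$, which lies outside $g'_k$. Such a pair is separated by $C'$, so $\Lambda_{N_1}([\bar C])\subset\Lambda_{N_1}([C'])$. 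This case requires $|g_i|\ge2$, which always holds when some cluster is cut.

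\textbf{Merging case.} Otherwise no cluster of $C$ is cut, so every $g_i$ lies inside a single $g'_k$, and $C'$ is a coarsening of $C$. Since $[C']\neq[C]$, some $g'_k$ contains at least two clusters $g_{i_1},g_{i_2}$. Let $\bar C$ merge exactly these two, giving $K-1$ clusters. Then $\Lambda_{N_1}([\bar C])=\emptyset$, and $\Lambda_{N_2}([\bar C])=g_{i_1}\times g_{i_2}$ (ordered as pairs in $I$), which is contained in $\Lambda_{N_2}([C'])$.

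\textbf{Boundary cases.} If $K=1$, no merge exists and $C'\neq C$ must cut the unique cluster, so the splitting case applies and only $\mathcal{C}^{K+1}$ is needed. If $K=M$, all clusters are singletons and cannot be cut, so $C'$ is a coarsening and only $\mathcal{C}^{K-1}$ is needed. For $M>K>1$ both cases are possible.

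The main obstacle is making sure Lemma~\ref{lem:inc_z}'s monotonicity transfers the coordinatewise set inclusion into an inequality of $z$ values. The lemma asserts only weak monotonicity (a nonnegative gradient), and only on the region $D$ with strictness. I would argue that $z$ is continuous and its partial derivatives are $\ge0$ everywhere on the feasible box. Integrating along the straight segment from $(\lambda_{\bar N_1},\lambda_{\bar N_2})$ to $(\lambda_{N_1},\lambda_{N_2})$ then gives the weak inequality, which is all the minimum requires. Combining the two cases, the minimum over $\Alt(C)/\sim$ is attained within $(\mathcal{C}^{K-1}\cup\mathcal{C}^{K+1})\cap\Alt(C)$, and in fact within $\Alt(C)_1$.
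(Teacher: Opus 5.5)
Your argument is correct, but it is organized differently from the paper's. The paper splits according to the number $K'$ of clusters of $[C']$. For $K'=K$ and $K'=K+2$ it splits a block of $C$ that $C'$ cuts, which is your splitting case. For $K'=K-2$ it instead detaches a whole block $g_k$ of $C$ from the partition of $C'$; this yields a $(K-1)$-cluster alternative that need not lie in $\Alt(C)_1$. The remaining values of $K'$ are dismissed with ``similar results can be obtained'', and the further reduction from $\mathcal{C}^{K\pm1}$ to $\Alt(C)_1$ is done separately in the proof of Theorem~\ref{thm:inner_inf}. The paper also phrases the comparison through the single inequality $\lambda_{\bar N_1}+\lambda_{\bar N_2}\le\lambda_{N_1}+\lambda_{N_2}$, and relies on the claimed equivalence (i) in that proof to recover coordinatewise comparisons.

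Your dichotomy (either some block of $C$ is cut by $C'$, or $C'$ is a coarsening of $C$) covers every $K'$ at once and lands directly in $\Alt(C)_1$. It also yields the two separate inclusions $\Lambda_{N_1}([\bar C])\subset\Lambda_{N_1}([C'])$ and $\Lambda_{N_2}([\bar C])\subset\Lambda_{N_2}([C'])$. These are exactly what a coordinatewise-monotone $z$ needs. So your route proves the lemma and the $\Alt(C)_1$ part of Theorem~\ref{thm:inner_inf} in one step, with nothing left to analogy. The paper's count-based organization only buys a closer match to the $K\pm1$ form of the statement.

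Your worry about monotonicity is justified but mild. The computation in the proof of Lemma~\ref{lem:inc_z} actually gives nonnegative partials everywhere, since the leading term vanishes when $p'=q'=1/2$. So your segment integration works once you note that $z$ is continuous and that the branch switches of $p',q'$ lie on lines. A cleaner route uses Lemma~\ref{lem:pqstar}: the optimal $(q'',p'')$ always lie in the fixed box $[q,1/2]\times[1/2,p]$. On that box the objective is affine in $(\lambda_{N_1},\lambda_{N_2})$, with coefficients $d(p,q'')-d(p,p'')\ge 0$ and $d(q,p'')-d(q,q'')\ge 0$. Hence $z$ is an infimum, over a set independent of $(\lambda_{N_1},\lambda_{N_2})$, of nondecreasing functions, and is therefore nondecreasing with no differentiability issues.
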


\begin{proof} We will consider 3 cases of $[C'] \in \Alt(C)/\sim$ below and show that we can always find some $[\bar{C}] \in (\Alt(C) \cap (\mathcal{C}^{K-1} \cup \mathcal{C}^{K+1}))/\sim$ such that $\lambda_{\bar{N_1}}+\lambda_{\bar{N_2}}=\sum_{(i,j)\in \Lambda_{N_1}([\bar{C}])} \lambda_{ij}+\sum_{(i,j)\in \Lambda_{N_2}([\bar{C}])} \lambda_{ij} \le \lambda_{N_1}+\lambda_{N_2}$. \\

\underline{Case 1:} Consider $M\ge K\ge1$ and $[C'] \in (\Alt(C) \cap \mathcal{C}^{K} )/\sim$, an alternative clustering to $C$ with the same number of clusters, $K$. 
Then, there exists a $g_k$ such that for all $g'_{k'}$, $g_k \not\subset g'_{k'}$. (Because if for each $g_k$, there exists a $g'_{k'}$ such that $g_k \subset g'_{k'}$ and since $[C'] \in \Alt(C)/\sim$, there must exist a $g_j$ such that for all $g'_{k'}$, $g_j \ne g'_{k'}$, then there must be more than $M$ items which is a contradiction.) Then, we can choose a $g'_{k'}$ such that $g_k \cap g'_{k'} \ne \emptyset$ and $g_k \setminus (g_k \cap g'_{k'}) \ne \emptyset$. Let $N'_1 = \{ (i,j): i \in g_k \cap g'_{k'}, j \in g_k \setminus (g_k \cap g'_{k'}), j \ge i  \}\cup \{ (j,i): i \in g_k \cap g'_{k'}, j \in g_k \setminus (g_k \cap g'_{k'}), i \ge j  \} \subset \Lambda_{N_1}([C']) \cup \Lambda_{N_2}([C'])$. Now, we can find a $[\bar{C}] \in (\Alt(C) \cap \mathcal{C}^{K+1})/\sim$ where $\bar{g}_j = g_j, \forall j \ne k$, $\bar{g}_k = g_k \setminus (g_k \cap g'_{k'})$ and $\bar{g}_{K+1} = g_k \cap g'_{k'}$. Hence, we have $\Lambda_{N_1}([\bar{C}]) \cup \Lambda_{N_2}([\bar{C}]) = N'_1 \subset \Lambda_{N_1}([C']) \cup \Lambda_{N_2}([C'])$ and $\lambda_{\bar{N_1}} +\lambda_{\bar{N_2}} \le  \lambda_{N_1} +\lambda_{N_2}$. \\

\underline{Case 2:} Consider $M\ge K > 2$ and $[C'] \in (\Alt(C) \cap \mathcal{C}^{K-2} )/\sim$, an alternative clustering to $C$ with $K-2$ clusters. 
Then, there exists a $g_k$ such that $g_k \subset \bigcup_{k' \in J} g'_{k'}$ for some index set $J \subset [K-2]$ and for some $\tilde{k}' \in J$, $|g'_{\tilde{k}'}|>1$. (Because every $g_k$ is a subset of a sufficient number of $g'_{k'}$ and since $K-2 < M$, at least one $g'_{k'}$ must contain more than one item.) Now, we can find a $[\bar{C}] \in (\Alt(C) \cap \mathcal{C}^{K-1})/\sim$ where $\bar{g}_{K-1} = g_k$ and $\bar{g}_{k'} = g'_{k'} \setminus (g'_{k'} \cap g_k) \ne \emptyset$. Then, we also have $\Lambda_{N_1}([\bar{C}]) \cup \Lambda_{N_2}([\bar{C}]) \subset \Lambda_{N_1}([C']) \cup \Lambda_{N_2}([C'])$ and $\lambda_{\bar{N_1}} +\lambda_{\bar{N_2}} \le  \lambda_{N_1} +\lambda_{N_2}$. \\

\underline{Case 3:} Consider $M-1 > K \ge 1$ and $[C'] \in (\Alt(C) \cap \mathcal{C}^{K+2} )/\sim$, an alternative clustering to $C$ with $K+2$ clusters. 
Then, there exists a $g_k$ such that for all $g'_{k'}$, $g_k \not\subset g'_{k'}$. (Because if for all $g_k$, there exists a $g'_{k'}$ such that $g_k \subset g'_{k'}$, then there must be more than $M$ items or a maximum of $K$ clusters, which is a contradiction.) Now, similar to case 1, we can find a $[\bar{C}] \in \Alt(C)/\sim$ with $K+1$ clusters such that $\lambda_{\bar{N_1}} +\lambda_{\bar{N_2}} \le  \lambda_{N_1} +\lambda_{N_2}$. \\

Similar results can be obtained for $[C'] \in \Alt(C)/\sim$ with more than $K+2$ and less than $K-2$ clusters. Then, as $\lambda_{\bar{N_1}} +\lambda_{\bar{N_2}} \le  \lambda_{N_1} +\lambda_{N_2} \implies z(\lambda_{\bar{N_1}},\lambda_{\bar{N_2}}) \le z(\lambda_{N_1},\lambda_{N_2})$, from these 3 cases, if $M > K > 1$, we only need to consider $[C'] \in (\Alt(C) \cap (\mathcal{C}^{K-1} \cup \mathcal{C}^{K+1}))/\sim$ to solve the minimum. Further, for $K=M$, we only need to consider $[C'] \in (\Alt(C) \cap \mathcal{C}^{M-1} )/\sim$ and for $K=1$, we only need to consider $[C'] \in (\Alt(C) \cap \mathcal{C}^{2} )/\sim$. 
\end{proof}

\subsection{Proof of Proposition \ref{prop:uniform}} \label{app:uniform} 



\begin{proof}
We first consider the case where all $M$ items are in a single cluster. We know from Lemmas~\ref{lem:pqstar},~\ref{lem:K+1K-1} and Case 1 of the proof of Theorem \ref{thm:inner_inf} that 
\begin{align}
\label{eq:K=1}
\max_{\lambda \in \P_{I}} \inf_{C' \in \Alt_1(C)} h(\lambda, C')  = \max_{\lambda \in \P_{I}} \min_{[C'] \in (\Alt_1(C) \cap \mathcal{C}^{2})/\sim} z(\lambda_{N_1}, \lambda_{N_2}),  
\end{align}
where $\lambda_{N_1} = \sum_{(i,j)\in \Lambda_{N_1}([C'])} \lambda_{ij}$ and $\lambda_{N_2} = \sum_{(i,j)\in \Lambda_{N_2}([C'])} \lambda_{ij}$. Since $\Lambda_{N_2}([C'])=\emptyset$, for all $[C'] \in (\Alt_1(C) \cap \mathcal{C}^{2})/\sim$ as $N_q=\emptyset$, we only need to consider $\lambda_{N_1}$ and by Lemma \ref{lem:inc_z}, $z(\lambda_{N_1}, \lambda_{N_2})$ is increasing on $\lambda_{N_1}$. Hence, we simply need to solve~\eqref{eq:K=1} over $\lambda_{N_1}$ instead of $z$. 

To keep the presentation clear, we drop the equivalence class notation for $C'$ and $\Alt_1(C) \cap \mathcal{C}^{2}$ and represent each $C'$ by the size of and items in its smaller cluster. Then, we can partition $\Alt_1(C) \cap \mathcal{C}^2$ into sets with $i$ items in the smaller cluster, for $i \in [\floor*{M/2}]$. For $C' \in \Alt_1(C) \cap \mathcal{C}^2$, relabel $g_1 = \argmin\{|g'_1|, |g'_2|\}$ and $g_2 = \argmax\{|g'_1|, |g'_2|\}$. We reuse the notation of $f$ to map each $C'$ to its respective $g_1$, i.e. $f: \Alt_1(C) \cap \mathcal{C}^2 \rightarrow \bigcup_{C'\in \Alt_1(C) \cap \mathcal{C}^2} \{g_1\}$ and we only need to consider $\lambda_{N_1}$ for all such $g_1$. Note that $g_1$ depends on $C'$ since $g_1$ represents the cluster assignments in $C'$. Let $\mathcal{G}_1=\bigcup_{C'\in \Alt_1(C) \cap \mathcal{C}^2} \{g_1\}$. Then, we can partition $\mathcal{G}_1$ into sets of $g_1$ that contain exactly $i$ items, $\mathcal{G}_1=\bigcup_{i \in [\floor*{M/2}]} \{g_1 \in \mathcal{G}_1: |g_1|=i\}\coloneqq \bigcup_{i \in [\floor*{M/2}]} \mathcal{G}_1^i$. 

For $g_1 \in \mathcal{G}_1^i$, $|\Lambda_{N_1}(f^{-1}(g_1))|=i(M-i)$.\footnote{Picture each matrix $C'$ that represents $g_1$ as a permutation of the simplest block matrix where items $1,\cdots,i$ are in $g_1$}  From the bijective map $f$, we also have a bijection, with a slight abuse of notation, $\lambda_{N_1}: \mathcal{G}_1 \rightarrow \{\sum_{(i,j)\in \Lambda_{N_1}(f^{-1}(g_1))} \lambda_{ij}\}$. If $\lambda$ is uniform, then $\min_{g_1 \in \mathcal{G}_1} \lambda_{N_1}(g_1)=(M-1)/{\binom{M}{2}} \in \lambda_{N_1}(\mathcal{G}_1^1)$, since $\min_{i \in  [\floor*{M/2}]} i(M-i)=M-1$. Here, $\lambda_{N_1}$ maps $g_1 \mapsto \sum_{(i,j)\in \Lambda_{N_1}(f^{-1}(g_1))} \lambda_{ij}$. 

Suppose for a contradiction that the optimal $\lambda$ is \emph{not} uniform. Then, $\min_{g_1 \in \mathcal{G}_1} \lambda_{N_1}(g_1)>(M-1)/{\binom{M}{2}}$ and let $\epsilon' = \min_{g_1 \in \mathcal{G}_1} \lambda_{N_1}(g_1)-(M-1)/{\binom{M}{2}}>0$. Further, let $i \ge 2$ be the index such that the minimum occurs at some $g_1 \in \mathcal{G}_1^{i}$. Without loss of generality, let $\lambda_{ij} = \frac{M-1}{i(M-i){\binom{M}{2}}}+\frac{\epsilon'}{i(M-i)}$ for all $(i,j) \in \Lambda_{N_1}(f^{-1}(g_1))$. \\

\textit{In the following 2 paragraphs, for ease of notation, we will use the convention where $(i,j)$ indexes item $i\in g_1$ and item $j\in g_2$ and not worry that $j > i$. We can do this since for each $(i,j)$ where $j < i$, $(j,i)\in I$ and we may assume to refer to that index instead.}\\

Consider $g_1 \supset g'_1 = \{i_1\} \in \mathcal{G}_1^1$ and $\lambda_{N_1}(g'_1)$. Then, for all $(i_1,j)$ for $j\in g_2 \subset g'_2$, $(i_1,j) \in \Lambda_{N_1}(f^{-1}(g'_1))$ and we can partition $\Lambda_{N_1}(f^{-1}(g_1)) = \{(i_1,j): j \in g_2\} \cup \{(i_1,j): j \in [M]\setminus (\{i_1\}\cup g_2)\}$. So, $\lambda_{N_1}(g'_1) = \frac{M-1}{i{\binom{M}{2}}} + \frac{\epsilon'}{i} + \sum_{\{(i_1,j): j \in [M]\setminus (\{i_1\}\cup g_2)\}} \lambda_{i_1,j}$ and $\min_{g_1 \in \mathcal{G}_1} \lambda_{N_1}(g_1)=\lambda_{N_1}(g_1)=\frac{M-1}{{\binom{M}{2}}}+\epsilon'$. Since $\lambda_{N_1}(g_1) \le \lambda_{N_1}(g'_1)$, $$\sum_{\{(i_1,j): j \in [M]\setminus (\{i_1\}\cup g_2)\}} \lambda_{(i_1,j)} \ge \frac{(i-1)(M-1)}{i{\binom{M}{2}}}+\frac{(i-1)\epsilon'}{i}.$$ Doing this for $i_k=i_1,i_2,\cdots,i_i \in g_1$, we have $$\sum_{\{(i_k,j): j \in [M]\setminus (\{i_k\}\cup g_2)\}} \lambda_{i_k,j} \ge \frac{(i-1)(M-1)}{i{\binom{M}{2}}}+\frac{(i-1)\epsilon'}{i}, $$ where we have counted each pair of items in $g_1$ in such a sum twice. 

Next, consider $g_2 \supset g''_1 = \{j_1\}\in \mathcal{G}_1^1$. Similarly, we can partition $N''_1 = \{(i,j_1): i \in g_1\} \cup \{(i,j_1): i \in [M]\setminus (\{j_1\}\cup g_1)\}$ and $\lambda_{N_1}(g''_1)=\frac{M-1}{(M-i){\binom{M}{2}}} + \frac{\epsilon'}{M-i} + \sum_{(i,j_1): i \in [M]\setminus (\{j_1\}\cup g_1)} \lambda_{(i,j_1)}$. Since $\lambda_{N_1}(g_1) \le \lambda_{N_1}(g''_1)$, $$\sum_{(i,j_1): i \in [M]\setminus (\{j_1\}\cup g_1)} \lambda_{i,j_1} \ge \frac{(M-1)(M-i-1)}{(M-i){\binom{M}{2}}} + \frac{(M-i-1)\epsilon'}{M-i}.$$ Again, doing this for $j_k=j_1,j_2,\cdots,j_{M-i} \in g_2$, we have $$\sum_{(i,j_k): i \in [M]\setminus (\{j_k\}\cup g_1)} \lambda_{i,j_k} \ge \frac{(M-1)(M-i-1)}{(M-i){\binom{M}{2}}} + \frac{(M-i-1)\epsilon'}{M-i}.$$ That is, we will count every pair of items in $g_2$ twice in such a sum. 

Then, we notice that $N_1 \coloneqq \Lambda_{N_1}(f^{-1}(g_1)) \subset g_1 \times g_2$, $\tilde{N}'_k\coloneqq \{(i_k,j): j \in [M]\setminus (\{i_k\}\cup g_2)\} \subset g_1 \times g_1$ and $\tilde{N}''_k\coloneqq \{(i,j_k): i \in [M]\setminus (\{j_k\}\cup g_1)\} \subset g_2 \times g_2$. So, they are disjoint and $$\sum_{(i,j)\in N_1 \cup \bigcup_{k\in [i]} \tilde{N}'_k \cup \bigcup_{k\in [M-i]} \tilde{N}''_k} \lambda_{(i,j)} \ge \frac{M-1}{{\binom{M}{2}}}+\epsilon' + \frac{(i-1)(M-1)}{{\binom{M}{2}}}+(i-1)\epsilon' + \frac{(M-1)(M-i-1)}{{\binom{M}{2}}} + (M-i-1)\epsilon',$$ where $\bigcup_{k\in [i]} \tilde{N}'_k =\{(i,j)\in g_1 \times g_1: j \ne i \}$ and $\bigcup_{k\in [M-i]} \tilde{N}''_k = \{(i,j)\in g_2 \times g_2: j \ne i \}$. Then, let $\tilde{g}_1^2 = \{(i,j)\in g_1 \times g_1: j > i \}$ and $\tilde{g}_2^2 = \{(i,j)\in g_2 \times g_2: j > i \}$ such that we have 
\begin{align*}
    &\sum_{(i,j)\in N_1 \cup \tilde{g}_1^2 \cup \tilde{g}_2^2} \lambda_{ij} \\
    &\quad \ge \frac{M-1}{{\binom{M}{2}}}+\epsilon' + \frac{(i-1)(M-1)}{2{\binom{M}{2}}}+\frac{(i-1)\epsilon'}{2} + \frac{(M-1)(M-i-1)}{2{\binom{M}{2}}} + \frac{(M-i-1)\epsilon'}{2} \\
    &\quad = \frac{M-1}{{\binom{M}{2}}}+\epsilon' + \frac{(M-2)(M-1)}{2{\binom{M}{2}}} + \frac{(M-2)\epsilon'}{2}\\
    &\quad = \frac{(M-1)M/2}{{\binom{M}{2}}}+\epsilon' + \frac{(M-2)\epsilon'}{2} \\
    &\quad = 1+\epsilon' + \frac{(M-2)\epsilon'}{2}> 1.
\end{align*} Since $\sum_{(i,j)\in N_1 \cup \tilde{g}_1^2 \cup \tilde{g}_2^2} \lambda_{i,j} = \sum_{(i,j)\in I} \lambda_{i,j}=1$, we have a contradiction. \\

Then, suppose $i=1$ such that $\min_{g_1 \in \mathcal{G}_1} \lambda_{N_1}(g_1)=\lambda_{N_1}(g_1)$ for some $g_1 \in \mathcal{G}_1^1$. Without loss of generality, let $g_1=\{1\}$ and $\min_{g_1 \in \mathcal{G}_1} \lambda_{N_1}(g_1) = \frac{M-1}{{\binom{M}{2}}} + \epsilon'$ for some $\epsilon'>0$. We notice that $|\mathcal{G}_1^1|=M$ since $\mathcal{G}_1^1=\{\{k\}: k\in [M] \}$. For $N_1^k \coloneqq \Lambda_{N_1}(f^{-1}(\{k\}))$, since every pair $(i,j)\in I$ occurs exactly twice, in $N_1^i$ and $N_1^j$, we must have $\sum_{k \in [M]} \lambda_{N_1}(\{k\}) = 2$. By the definition of $\min_{g_1 \in \mathcal{G}_1} \lambda_{N_1}(g_1)$, $\lambda_{N_1}(\{k\}) \ge \lambda_{N_1}(\{1\}) = \frac{M-1}{{\binom{M}{2}}} + \epsilon'$, for all $k \in [M]\setminus \{1\}$. Hence, $\sum_{k \in [M]} \lambda_{N_1}(\{k\}) \ge \frac{M(M-1)}{{\binom{M}{2}}} + M\epsilon' > 2$, leading to a contradiction. \\

Next, consider the case where all $M$ items are in separate clusters, i.e. $K=M$. Then, similar to before, we know that 
\begin{align}
\label{eq:K=M}
\max_{\lambda \in \P_{I}} \inf_{C' \in \Alt_1(C)} h(\lambda, C')  = \max_{\lambda \in \P_{I}} \min_{[C'] \in (\Alt_1(C) \cap \mathcal{C}^{M-1})/\sim} z(\lambda_{N_1}, \lambda_{N_2}),   
\end{align} and we only need to consider minimizing over $\lambda_{N_2}$. It is easy to see that for all $[C'] \in (\Alt_1(C) \cap \mathcal{C}^{M-1})/\sim$, $\Lambda_{N_1}([C'])=\emptyset$ and $\Lambda_{N_2}([C'])=\{(i,j) \}$, where items $i$ and $j$ are combined to form a cluster and $\bigcup_{[C'] \in (\Alt_1(C) \cap \mathcal{C}^{M-1})/\sim} \Lambda_{N_2}([C']) = I$. If $\lambda^*$ is not the uniform distribution, then there exists an $(i,j) \in I$ such that $\lambda_{ij} < 1/{\binom{M}{2}}$ and $\min_{[C'] \in (\Alt_1(C) \cap \mathcal{C}^{M-1})/\sim} \lambda_{N_2} < 1/{\binom{M}{2}}$. Since the uniform distribution achieves $\min_{[C'] \in (\Alt_1(C) \cap \mathcal{C}^{M-1})/\sim} \lambda_{N_2} = 1/{\binom{M}{2}}$, it should be the optimal distribution. 
\end{proof} 
\section{Proofs of Results in Section \ref{sec:achievability}}

\subsection{Proof of Proposition \ref{prop:mix_optsample}} \label{app:prf_mix_optsample}

\begin{proof}
 By the strong law of large numbers, the event $\mathcal{E} = \{\omega \in \Omega: \lim_{t \rightarrow \infty} \hat{C}_t(\omega) = C\}$ has probability $1$. For $\omega \in \mathcal{E}$, $\forall \mu > 0$, $\exists t_\mu$ such that $\forall t \ge t_\mu$, $|\hat{c}_t(\omega)_{ij} - c_{ij}| < \mu$, $\forall (i,j) \in I$. Let $\mu = \min\{p-0.5,0.5-q\}$, then $\forall t \ge t_\mu$, if $c_{ij}=p$, we have $\hat{c}_t(\omega)_{ij}-0.5 > -\mu + c_{ij}-0.5 =   -\mu + p-0.5 \ge 0$, and if $c_{ij}=q$, we have $\hat{c}_t(\omega)_{ij}-0.5 < \mu + c_{ij}-0.5 =  \mu + q-0.5 \le 0$. Then, since in Algorithm \ref{alg:proj}, we set $(\hat{c}_{t_=})_{ij} = \I \{(\hat{c}_t)_{ij} \ge 0.5 \}$ and $(\hat{c}_{t_{\ne}})_{ij}=\I \{(\hat{c}_t)_{ij} < 0.5 \}$, when solving~\eqref{eq:proj_hatC}, we recover $c_{t_=}$ as $c_=$ and $c_{t_{\ne}}$ as $c_{\ne}$ exactly and $[C_t(\omega)]=[C]$. Furthermore, 
    \begin{align*}
        |p_t-p| &= \left|\frac{\sum_{(i,j)\in I} \I \{(c_{t_=})_{ij}=1\} (\hat{c}_t)_{ij} \cdot N_{ij}(t)}{\sum_{(i,j)\in I} \I \{(c_{t_=})_{ij}=1\} N_{ij}(t)}-p \right| \\
       &=  \left|\frac{\sum_{(i,j)\in I} \I \{(c_{t_=})_{ij}=1\} (\hat{c}_t)_{ij} \cdot N_{ij}(t)}{\sum_{(i,j)\in I} \I \{(c_{t_=})_{ij}=1\} N_{ij}(t)}-\frac{\sum_{(i,j)\in I} \I \{(c_{t_=})_{ij}=1\} N_{ij}(t)}{\sum_{(i,j)\in I} \I \{(c_{t_=})_{ij}=1\} N_{ij}(t)}p\right| \\
       &= \left|\frac{\sum_{(i,j)\in I} \I \{(c_{t_=})_{ij}=1\} N_{ij}(t) \cdot ((\hat{c}_t)_{ij}-p)}{\sum_{(i,j)\in I} \I \{(c_{t_=})_{ij}=1\} N_{ij}(t)}\right| \\*
       &< \mu.
\end{align*}
   Performing a similar calculation for $q_t$ yields, $|q_t-q|<\mu$ as well. 
   
Then, for $\omega \in \mathcal{E}$ and $\mu > 0$, since by Lemma \ref{lem:sigma_cont}, $\lambda^*(\sigma;C)=\bar{g}_\sigma(p,q)$ is continuous in $x=(p,q)$ on $D$ if $[C_t(\omega)]=[C]$, there exists a $\min\{p-0.5,0.5-q\}>\eta>0$ such that for all $t\ge t_\eta$ and $y_t=(p_t,q_t)$, we have $\|y_t-x\| < \eta$ and hence $\|\lambda^*(\sigma;C_t(\omega))-\lambda^*(\sigma;C))\| < \mu/3({M \choose 2}-1)(1-\epsilon)$. This leads to
\begin{align*}
    \|\lambda_\epsilon^*(\sigma;C_t(\omega))-\lambda_\epsilon^*(\sigma;C))\| &= \left\|(1-\epsilon)\lambda^*(\sigma;C_t(\omega))+\epsilon \frac{1}{|I|}-(1-\epsilon)\lambda^*(\sigma;C))-\epsilon \frac{1}{|I|} \right\| \\ 
    &=(1-\epsilon) \|\lambda^*(\sigma;C_t(\omega))-\lambda^*(\sigma;C))\| \\
    &< \frac{\mu}{3 \big({M \choose 2}-1 \big)}. 
\end{align*}
For ease of notation, we drop the $\omega$ term as it is implicitly implied by $C_t$. By  \cite[Lemma 17]{garivier2016optimal}, there exists a $t_\mu \ge t_\eta$ such that for all $t \ge t_\mu$, $\|\frac{N(t)}{t}-\lambda_\epsilon^*(\sigma;C))\| < \mu$. 

\end{proof}


\begin{lemma}
\label{lem:sigma_cont}
    For fixed $[C]$, let $C \in [C]$ and a domain $D=(0.5,1] \times [0,0.5)$. Define $\bar{g}: D \rightarrow \mathscr{P}(\mathcal{P}_{I})$ whereby
    \begin{align*}
\bar{g}_\sigma(p,q)\coloneqq \argmax_{\lambda \in \mathcal{P}_{I}} \min_{C' \in \min(C)/\sim} h(\lambda, C';p,q)-\sigma R(\lambda),
    \end{align*}
where $R(\lambda) = \frac{1}{2}\|\lambda\|^2$ is strictly convex and $\sigma > 0$.
Then, $\bar{g}_\sigma$ is continuous on $D$. 
\end{lemma}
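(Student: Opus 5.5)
The plan is to apply Berge's maximum theorem in its strictly-concave (single-valued) form. Write
\[
F(\lambda;p,q) \coloneqq \min_{C' \in \min(C)/\sim} h(\lambda,C';p,q) - \sigma R(\lambda).
\]
The feasible set $\P_I$ is compact and nonempty, and it does not depend on $(p,q)$. So the constraint correspondence is constant, hence continuous. If I can show that $F$ is jointly continuous on $\P_I \times D$, Berge's theorem says the argmax correspondence $\bar g_\sigma$ is upper hemicontinuous with nonempty compact values. If I can also show that the argmax is a singleton for each $(p,q)$, upper hemicontinuity of a singleton-valued correspondence is ordinary continuity of the induced function. That is the claim.

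\textbf{Joint continuity.} For a fixed equivalence class $[C']$, Lemma~\ref{lem:pqstar} gives the inner infimum over the parameters $(p'',q'')$ in closed form:
\[
\alpha - b_1\log(1-q'') - b_2\log q'' - b_3\log(1-p'') - b_4\log p''.
\]
Here $\alpha$ and $b_1,\dots,b_4$ are affine in $\lambda$, with coefficients that are continuous functions of $(p,q)$ (entropy-type terms such as $p\log p$). The optimizers $p'' = \max\{1/2,\, b_4/(b_3+b_4)\}$ and $q'' = \min\{1/2,\, b_2/(b_1+b_2)\}$ are continuous wherever $b_3+b_4>0$, respectively $b_1+b_2>0$.

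The delicate points are those where a denominator vanishes or $q''=0$. I would avoid the explicit formula there and argue instead that
\[
(\lambda,p,q) \mapsto \inf_{p'',q''} \sum_{ij} \lambda_{ij}\,\KL(v_{ij},v''_{ij})
\]
is continuous in its own right, as follows.
\begin{itemize}
\item \emph{Upper semicontinuity.} It is an infimum of functions that are continuous in $(\lambda,p,q)$, provided $(p'',q'')$ is kept in the interior where every KL term is finite.
\item \emph{Lower semicontinuity.} Each $d(x,y)$ is jointly lower semicontinuous, and near any $(\lambda,p,q)$ with $(p,q)\in D$ the minimizers can be confined to a compact set.
\end{itemize}
The minimum over the finitely many classes in $\min(C)/\sim$ preserves continuity, and $\sigma R$ is continuous. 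So $F$ is jointly continuous on $\P_I\times D$.

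\textbf{Uniqueness of the maximizer.} For fixed $C'$, the map $\lambda \mapsto h(\lambda,C')$ is linear in $\lambda$. An infimum over $(p'',q'')$ of linear functions is concave, and a finite minimum of concave functions is concave. Hence $\lambda\mapsto \min_{C'} h(\lambda,C')$ is concave. Subtracting the strictly convex $\sigma R$ makes $F(\cdot\,;p,q)$ strictly concave on the convex compact set $\P_I$, so the maximizer exists and is unique. Combining this with Berge's theorem gives continuity of $\bar g_\sigma$ on $D$.

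\textbf{Main obstacle.} I expect the joint continuity step near the boundary to be the hardest part: the regime $q\to 0$ (with $p$ possibly equal to $1$), where $\log q''$ blows up, and configurations where some $b_k$ vanish. There I would first try to bound the minimizing $(p'',q'')$ away from the degenerate endpoints whenever the corresponding $b$-weights are positive. When a weight is zero, the corresponding term drops out of $h$ entirely, so it contributes nothing and causes no discontinuity.
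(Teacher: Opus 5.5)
Your proposal is correct and follows the paper's own route: Berge's maximum theorem on the fixed compact set $\P_I$, applied to a jointly continuous objective, with single-valuedness upgrading upper hemicontinuity to continuity. You are in fact more careful than the paper, which only asserts joint continuity from the closed form of Lemma~\ref{lem:pqstar} and takes uniqueness of the maximizer for granted, whereas you justify uniqueness via concavity of the inner infimum plus strict convexity of $R$, and you treat the boundary cases through semicontinuity.
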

\begin{proof}
From the definition of $h(\lambda,C')$, 
\begin{align*}
   h(\lambda,C') &=
     d\left(p,\min  \left\{0.5,  \frac{\lambda_{N_1}}{\lambda_{N_1}+\lambda_{N_2}'}p + \frac{\lambda_{N_2}'}{\lambda_{N_1}+\lambda_{N_2}'}q \right\}\right)  \lambda_{N_1} \\
   &\qquad + d\left(q,\min \left\{0.5,  \frac{\lambda_{N_1}}{\lambda_{N_1}+\lambda_{N_2}'}p + \frac{\lambda_{N_2}'}{\lambda_{N_1}+\lambda_{N_2}'}q \right\}\right) \lambda'_{N_2} \\
   &\qquad +  d\left(q, \max \left\{0.5,  \frac{\lambda_{N_1}'}{\lambda_{N_1}'+\lambda_{N_2}}p + \frac{\lambda_{N_2}}{\lambda_{N_1}'+\lambda_{N_2}}q \right\}\right)\lambda_{N_2} \\
   &\qquad+  d\left(p, \max \left\{0.5,  \frac{\lambda_{N_1}'}{\lambda_{N_1}'+\lambda_{N_2}}p + \frac{\lambda_{N_2}}{\lambda_{N_1}'+\lambda_{N_2}}q \right\}\right)\lambda'_{N_1},
\end{align*}
where $\lambda_{N_1} = \sum_{(i,j)\in \Lambda_{N_1}(C')} \lambda_{ij}$, $\lambda_{N_2} = \sum_{(i,j)\in \Lambda_{N_2}(C')} \lambda_{ij}$, $\lambda'_{N_1} = \sum_{(i,j)\in N_p} \lambda_{ij}-\lambda_{N_1}$ and $\lambda'_{N_2} = \sum_{(i,j)\in N_q} \lambda_{ij}-\lambda_{N_2}$, it should be clear that $(\lambda, p,q)\rightarrow h(\lambda,C';p,q)$ is continuous on $\P_{I}\times D$ for fixed $C'$. Then, since we are taking a finite minimum operation on $h(\lambda,C';p,q)$ over $\min(C)/\sim$ and $R(\lambda)$ is continuous, $\min_{C' \in \min(C)/\sim} h(\lambda, C';p,q)- \sigma R(\lambda)$ is similarly continuous. Furthermore, as $\P_{I}$ does not depend on $p$ or $q$ and is compact, we can apply Berge's Maximum Theorem \cite{berge1997topological} to show that $\bar{g}_\sigma(p,q)$ is upper-hemicontinuous and hence, continuous since it is single-valued.
\end{proof}

\color{black}
\subsection{Proof of Proposition \ref{prop:stopping_rule_delta}} \label{app:stopping_rule_delta}

\begin{proof} We simply bound the error probabilities as follows,
\begin{align*} 
\Pr (\tau_\delta < \infty, C_{\tau_\delta} \not\sim C) &\le \Pr (\exists t \in \mathbb{N}: C_t \not\sim C, Z(t) > \beta(t,\delta)) \\
&= \Pr (\exists t \in \mathbb{N}: C \in \Alt(C_t), Z(t) > \beta(t,\delta)) \\
&\le \Pr \left(\exists t \in \mathbb{N}: \sum_{(i,j)\in I} N_{ij}(t)d((\hat{c}_t)_{ij},c_{ij}) > \beta(t,\delta)\right) \\
&\le \delta,
\end{align*}
where the last inequality is due to  \cite[Theorem 7]{kaufmann2021mixture}, with $x=\ln(1/\delta)$. 
\end{proof}

\subsection{Proof of Theorem \ref{thm:upper_bound}}
\label{app:prf_stopping_rule}

\begin{proof}
First, by Proposition~\ref{prop:mix_optsample},
under the D-tracking rule, for all $(i,j) \in I$, $N_{ij}(t) \ge (\sqrt{t} - {\binom{M}{2}}/2)_+ -1$ and by  \cite[Lemma 17]{garivier2016optimal}, $\forall \mu > 0$, $\exists t'_\mu \ge t_\mu$ such that
\begin{align*}
\sup_{t \ge t_\mu} \max_{(i,j)\in I} |\lambda_\epsilon^*(\sigma;C_t)_{ij}-\lambda_\epsilon^*(\sigma;C)_{ij}| < \mu \implies
\sup_{t \ge t'_\mu} \max_{(i,j)\in I} \left|\frac{N_{ij}(t)}{t}-\lambda_\epsilon^*(\sigma;C)_{ij} \right| \le 3\left({\binom{M}{2}} -1 \right)\mu. 
\end{align*}

Let $0 < \mu < \min\{p-0.5,0.5-q\}$, $R(\lambda):= \|\lambda\|^2/2$, and define the event 
\begin{align*}
\Ev_T(\mu)\coloneqq \bigcap_{t=h(T)}^\infty \left\{\omega \in \Omega: \max_{(i,j)\in I} |\hat{c}_t(\omega)_{ij}-c_{ij}|<\mu\right\}
\end{align*}
and functions 
\begin{align*}
g(\tilde{\lambda}, \tilde{C}) &\coloneqq \inf_{C' \in \Alt(C)} \sum_{(i,j)\in I} \tilde{\lambda}_{ij} d(\tilde{c}_{ij},c'_{ij})-\sigma R(\tilde{\lambda}) \\
\hat{M}(t) &\coloneqq \inf_{C' \in \Alt(C)} \sum_{(i,j)\in I} \frac{N_{ij}(t)}{t}d((\hat{c}_t)_{ij},c'_{ij})-\sigma R\left(\frac{N_{ij}(t)}{t} \right) = g\left(\frac{N(t)}{t}, \hat{C}_t\right) \\
C_\mu(C) &\coloneqq \inf_{\substack{\tilde{C}: \max_{(i,j) \in I} |\tilde{c}_{ij}-c_{ij}|<\mu \\ \tilde{\lambda}: \max_{(i,j)\in I} |\tilde{\lambda}_{ij}-\lambda_\epsilon^*(\sigma;C)_{ij}|<3 \left({\binom{M}{2}}-1 \right)\mu}} g(\tilde{\lambda}, \tilde{C}), 
\end{align*}
for some increasing function $h(t)$. We notice here that since the $\Alt(C)$ in all of the functions defined \textbf{does not} depend on the input, it is not necessary that $\tilde{C} \in \mathcal{C}$ and we can use $\hat{C}_t$ directly. Further, $g$ is a continuous function. 

On the event $\Ev_T(\mu)$, we have that $\forall t \ge h(T)$, 
\begin{enumerate}
\item[(i)] $[C_t] = [C]$
\item[(ii)]  $\Alt(C_t) = \Alt(C)$
\item[(iii)]  $Z(t) = \inf_{C' \in \Alt(C)} \sum_{(i,j)\in I} N_{ij}(t)d((\hat{c}_t)_{ij},c'_{ij})\le \beta(t,\delta) \implies t\hat{M}(t)\le \beta(t,\delta) $
\item[(iv)]  $\exists t'_\mu \ge h(T)$ such that $\forall t \ge t'_\mu$, $\sup_{t \ge t'_\mu} \max_{(i,j)\in I} |\frac{N_{ij}(t)}{t}-\lambda_\epsilon^*(\sigma;C)_{ij}|<3 ({\binom{M}{2}}-1)\mu$ 
\item[(v)]  $\forall t \ge t'_\mu$, $\hat{M}(t) \ge C_\mu(C)$.
\end{enumerate}
Part (iii) holds since $Z(t)=t\hat{M}(t)+t\sigma R(\frac{N(t)}{t})$ and $t\sigma R(\frac{N(t)}{t})$ is positive. Part (v) follows from part (iv) and the definition of sets of $\tilde{\lambda}$ and $\tilde{C}$ that we minimize over in $C_\mu(C)$. 

Now, we can choose $T_1$ and $h(T) > T$ such that $h(T_1) \ge {\binom{M}{2}}^2$ and $T$ such that $\sqrt{T} \ge t'_\mu$. Then, on $\Ev_{T}(\mu)$, 
\begin{align*}
\min(\tau_\delta, T) &\le \sqrt{T} + \sum_{t=\sqrt{T}}^{T} \I \{ \tau_\delta > t \} \overset{\text{(iii)}}{\le} \sqrt{T} + \sum_{t=\sqrt{T}}^{T} \I \{ t\hat{M}(t) \le \beta(t,\delta) \} \\
& \overset{\text{(v)}}{\le} \sqrt{T} + \sum_{t=\sqrt{T}}^{T} \I \{ tC_\mu(C)\le \beta(T,\delta) \} \le \sqrt{T} + \frac{\beta(T,\delta)}{C_\mu(C)}. 
\end{align*}
Let $T_2(\delta) = \inf \{T \in \mathbb{N}: \sqrt{T} + \frac{\beta(T,\delta)}{C_\mu(C)} \le T \}$ and $T_3 = \max \{T_2(\delta), (t'_\mu)^2\}$, then $\tau_\delta \le T_3$. ($T-\sqrt{T}$ is an increasing function and $T_2(\delta)$ is the smallest $T$ such that on $\Ev_{T}(\mu)$, $T\ge \tau_\delta$) Hence, we have that $\forall T \ge T_3$, $\Ev_{T}(\mu) \subset \{ \tau_\delta \le T \}$ and $\Pr( \Ev^c_{T}(\mu)) \ge \Pr( \tau_\delta > T)$. 

Then, so far we have
\begin{align*}
\E_C[\tau_\delta] &= \sum_{t=1}^{T_3-1} \Pr(\tau_\delta > t) + \sum_{t=T_3}^{\infty} \Pr(\tau_\delta > t) \\ &\le T_3 +\sum_{t=T_3}^{\infty} \Pr( \Ev^c_{t}(\mu)) \\
&\le T_2(\delta) + (t'_\mu)^2 + \sum_{t=T_3}^{\infty} \Pr( \Ev^c_{t}(\mu))
\end{align*}

Now, we will tackle each term in turn, starting with providing an upper bound on $T_2(\delta)$.
Let $\xi > 0$ and $C(\xi) = \inf \{T \in \mathbb{N}: T - \sqrt{T} \ge T/(1+\xi) \}$. 
Then, using the upper bound on $\beta(t,\delta)$ from Lemma \ref{lemma:upperbound_beta}, we have
\begin{align*}
T_2(\delta) &\le C(\xi) + \inf \bigg\{ T \in \mathbb{N}: \frac{\beta(T,\delta)}{C_\mu(C)} \le \frac{T}{1+\xi}\bigg\} \\
&\le C + C(\xi) + \inf\bigg\{ T \in \mathbb{N}: \frac{\ln(\frac{DT^{\alpha}}{\delta^{\gamma}})}{C_\mu(C)} \le \frac{T}{1+\xi}\bigg\} \\
&= C + C(\xi) + \inf\bigg\{ T \in \mathbb{N}: \frac{\ln(\frac{D^{1/\alpha}T}{\delta^{\gamma/\alpha}})}{C_\mu(C)} \le \frac{T}{\alpha(1+\xi)}\bigg\} 
\end{align*}
Since 
\begin{align*}
h\left(\frac{TC_\mu(C)}{\alpha(1+\xi)}\right) = \frac{TC_\mu(C)}{\alpha(1+\xi)} - \ln \left(\frac{TC_\mu(C)}{\alpha(1+\xi)} \right),
\end{align*}
for $T$ such that $\frac{\ln\left(\frac{D^{1/\alpha}T}{\delta^{\gamma/\alpha}}\right)}{C_\mu(C)} \le \frac{T}{\alpha(1+\xi)}$, we have
\begin{align*}
\ln\left(\frac{D^{1/\alpha}T}{\delta^{\gamma/\alpha}}\right) - \ln \left(\frac{C_\mu(C)}{\alpha(1+\xi)} \right)&\le \frac{TC_\mu(C)}{\alpha(1+\xi)} -  \ln \left(\frac{C_\mu(C)}{\alpha(1+\xi)} \right) \\
\iff \ln\left(\frac{D^{1/\alpha}}{\delta^{\gamma/\alpha}}\right) - \ln \left(\frac{C_\mu(C)}{\alpha(1+\xi)} \right)&\le \frac{TC_\mu(C)}{\alpha(1+\xi)} -  \ln \left(\frac{TC_\mu(C)}{\alpha(1+\xi)} \right)\\
\iff \ln\left(\frac{\alpha(1+\xi) D^{1/\alpha}}{C_\mu(C)\delta^{\gamma/\alpha}}\right) &\le h\left(\frac{TC_\mu(C)}{\alpha(1+\xi)}\right). 
\end{align*}
Then, as $h$ is an increasing function, 
\begin{align*}
 h^{-1}\left(\ln(\frac{\alpha(1+\xi) D^{1/\alpha}}{C_\mu(C)\delta^{\gamma/\alpha}})\right) &\le \frac{TC_\mu(C)}{\alpha(1+\xi)} \quad \iff \quad \frac{\alpha(1+\xi)}{C_\mu(C)}h^{-1}\left(\ln\Big(\frac{\alpha(1+\xi) D^{1/\alpha}}{C_\mu(C)\delta^{\gamma/\alpha}}\Big)\right) \le T. 
\end{align*}
Substituting this inequality and using Proposition 8 in \cite{kaufmann2021mixture}, we have 
\begin{align*}
T_2(\delta) \le C + C(\xi) &+ \inf \left\{ T \in \mathbb{N}: \frac{\ln(\frac{D^{1/\alpha}T}{\delta^{\gamma/\alpha}})}{C_\mu(C)} \le \frac{T}{\alpha(1+\xi)}\right\} \\
= C + C(\xi) &+ \inf\left\{ T \in \mathbb{N}:  \frac{\alpha(1+\xi)}{C_\mu(C)}h^{-1}\left(\ln \bigg(\frac{\alpha(1+\xi) D^{1/\alpha}}{C_\mu(C)\delta^{\gamma/\alpha}}\bigg)\right) \le T \right\} \\
\le C + C(\xi)   &+\frac{\alpha(1+\xi)}{C_\mu(C)}h^{-1}\left(\ln\bigg(\frac{\alpha(1+\xi) D^{1/\alpha}}{C_\mu(C)\delta^{\gamma/\alpha}} \bigg)\right) + 1 \\
= C + C(\xi)   &+\frac{\alpha(1+\xi)}{C_\mu(C)}h^{-1}\left(\kappa + \ln\left(\frac{1}{\delta^{\gamma/\alpha}}\right)\right) + 1 \\
\le C + C(\xi) &+1 \\
&\hspace{-1.8cm}+\frac{\alpha(1+\xi)}{C_\mu(C)} \left[\kappa + \ln\left(\frac{1}{\delta^{\gamma/\alpha}} \right) + \ln \left( \kappa + \ln\left(\frac{1}{\delta^{\gamma/\alpha}} \right) + \sqrt{2\left(\kappa + \ln(\frac{1}{\delta^{\gamma/\alpha}}) - 1\right)} \right) \right],
\end{align*} where $\kappa \coloneqq \ln(\frac{\alpha(1+\xi) D^{1/\alpha}}{C_\mu(C)})$.

Next, we upper bound $\sum_{t=T_3}^{\infty} \Pr( \Ev^c_{t}(\mu))$. 
For $T \ge T_3$, $h(T) \ge {\binom{M}{2}}^2$ and by the Chernoff–Hoeffding theorem \cite{409cf137-dbb5-3eb1-8cfe-0743c3dc925f}, 
\begin{align*}
\Pr( \Ev^c_T(\mu)) &= \Pr \Bigg( \bigcup_{t=h(T)}^\infty \Big\{ \max_{(i,j)\in I} |(\hat{c}_t)_{ij}-c_{ij}|\ge\mu\Big\} \Bigg)\\
&\le \sum_{t=h(T)}^\infty \Pr \left(  \max_{(i,j)\in I} |(\hat{c}_t)_{ij}-c_{ij}|\ge\mu \right) \\
&\le  \sum_{t=h(T)}^\infty  \sum_{(i,j) \in I}\Pr ( |(\hat{c}_t)_{ij}-c_{ij}|\ge\mu )\\
&= \sum_{t=h(T)}^\infty  \sum_{(i,j) \in I}\Pr ( (\hat{c}_t)_{ij}-c_{ij}\ge\mu) + \Pr ( c_{ij}-(\hat{c}_t)_{ij}\ge\mu) \\
&\le \sum_{t=h(T)}^\infty  \sum_{(i,j) \in I} e^{-d(c_{ij}+\mu, c_{ij})N_{ij}(t)} + e^{-d(c_{ij}-\mu, c_{ij})N_{ij}(t)}
\end{align*}

By the D-tracking rule and the fact that $t \ge {\binom{M}{2}}^2$, $N_{ij}(t) \ge (\sqrt{t}-{\binom{M}{2}}/2)_+-1 \ge \sqrt{t} - {\binom{M}{2}}$ for $M \ge 3$. Let $d_\mu = \min \{ d(p+\mu,p),d(p-\mu,p),d(q+\mu,q),d(q-\mu,q) \}$. Then,
\begin{align*}
\Pr ( \Ev^c_T(\mu)) &\le \sum_{t=h(T)}^\infty  \sum_{(i,j) \in I} e^{-d(c_{ij}+\mu, c_{ij})N_{ij}(t)} + e^{-d(c_{ij}-\mu, c_{ij})N_{ij}(t)} \\
&\le \sum_{t=h(T)}^\infty 2 {\binom{M}{2}} e^{-d_\mu (\sqrt{t} - {\binom{M}{2}})}\\
&=2 {\binom{M}{2}} e^{d_\mu{\binom{M}{2}}} \sum_{t=h(T)}^\infty  e^{-d_\mu \sqrt{t}} \\
&\le 2 {\binom{M}{2}} e^{d_\mu{\binom{M}{2}}} \int_{t=h(T)-1}^\infty  e^{-d_\mu \sqrt{t}} dt \\
&= 4 {\binom{M}{2}} e^{d_\mu{\binom{M}{2}}}  \int_{t=\sqrt{h(T)-1}}^\infty  t e^{-d_\mu t} dt \\
&= 4 {\binom{M}{2}} e^{d_\mu{\binom{M}{2}}}  \left(\frac{\sqrt{h(T)-1}}{d_\mu}+ \frac{1}{d^2_\mu} \right)e^{-d_\mu \sqrt{h(T)-1} }
\end{align*}
Then, since $h(t)\ge t$ is an increasing function, we can consider $h(t)$ to be a subsequence of $t$ and we have
\begin{align*}
\sum_{t=T_3}^{\infty} \Pr( \Ev^c_{t}(\mu)) &\le 4 {\binom{M}{2}} e^{d_\mu{\binom{M}{2}}}  \sum_{t=T_3}^{\infty} \bigg(\frac{\sqrt{h(t)-1}}{d_\mu}+ \frac{1}{d^2_\mu}\bigg)e^{-d_\mu \sqrt{h(t)-1} } \\
&\le \frac{4 {\binom{M}{2}} e^{d_\mu{\binom{M}{2}}} }{d_\mu} \sum_{t=T_3}^{\infty} \left(\sqrt{t-1}+ \frac{1}{d_\mu} \right)e^{-d_\mu \sqrt{t-1} }. 
\end{align*}
Since the infinite sum on the right hand side is convergent, so is $\sum_{t=T_3}^{\infty} \Pr( \Ev^c_{t}(\mu))$. 

Finally, we have 
\begin{align*}
\limsup_{\delta \rightarrow 0}\frac{\E_C[\tau_\delta]}{\ln(1/\delta)} &\le \limsup_{\delta \rightarrow 0}\frac{T_2(\delta)}{\ln(1/\delta)} \\
&\le \frac{\alpha(1+\xi)}{C_\mu(C)}\limsup_{\delta \rightarrow 0}\frac{ \kappa + \frac{\gamma}{\alpha}\ln(1/\delta) + \ln \left( \kappa + \frac{\gamma}{\alpha}\ln(1/\delta) + \sqrt{2\left(\kappa + \frac{\gamma}{\alpha} \ln(1/\delta) - 1\right)} \right) }{\ln(1/\delta)} \\
&= \frac{\gamma(1+\xi)}{C_\mu(C)}
\end{align*}
By letting $\mu$ and $\xi$ go to zero, using the continuity of $g$, we obtain our upper bound
\begin{align*}
\limsup_{\delta \rightarrow 0}\frac{\E_C[\tau_\delta]}{\ln(1/\delta)} \le  \gamma D^*_\epsilon(\sigma;C)^{-1}, 
\end{align*} where $\gamma > 1$ is a constant that can be made arbitrarily close to 1. 

Since $\inf_{C' \in \Alt(C)} \sum_{(i,j)\in I}{\lambda_\epsilon^*(\sigma;C)}_{ij} d(c_{ij},c'_{ij}) >0$ for all $\sigma>0$, there must exist a small enough $\sigma$ such that $D_\epsilon^*(\sigma;C)>0$.
\end{proof}

We define the functions required to fully describe $\beta(t,\delta)$ below,
\begin{align*}
h(u) &= u - \ln (u), \quad u \ge 1 \\ 
\text{for some $z \in [1,e]$, }\tilde{h}_z(x) &= \begin{cases}
e^{1/h^{-1}(x)}h^{-1}(x) \text{, if $x \ge h(1/\ln(z))$} \\
z(x-\ln \ln (z)) \text{, otherwise}
\end{cases}\\ 
\zeta(s) &= \sum_{n=1}^\infty n^{-s}\\
\mathcal{C}_{\mathrm{exp}}(x) &= 2 \tilde{h}_{3/2}\left( \frac{h^{-1}(1+x) + \ln(2 \zeta (2))}{2}\right) \\
\beta(t,\delta) &= 3 \sum_{(i,j)\in I} \ln (1+ \ln N_{ij}(t)) + {\binom{M}{2}} \mathcal{C}_{\mathrm{exp}}\bigg(\frac{\ln(1/\delta)}{{\binom{M}{2}}} \bigg). 
\end{align*}
It is clear that $\beta(t,\delta)$ is increasing in $t$. 

\begin{lemma}
\label{lemma:upperbound_beta}
There exists constants $\alpha, C, D > 0$ and $\gamma > 1$ such that $\forall t \ge C$, $\delta \in (0,1)$, 
\begin{align*}
\beta(t,\delta) \le \ln\bigg(\frac{Dt^{\alpha}}{\delta^{\gamma}}\bigg).  
\end{align*}
\end{lemma}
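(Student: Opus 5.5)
We bound the two summands of $\beta(t,\delta)$ separately and then combine the bounds into a single logarithm. For the first summand, every count satisfies $N_{ij}(t)\le t$. For $t\ge C_1:=e$ we also have $1+\ln N_{ij}(t)\le 1+\ln t\le 2\ln t\le 2t$. Hence
\begin{align*}
3\sum_{(i,j)\in I}\ln\bigl(1+\ln N_{ij}(t)\bigr)\le 3\binom{M}{2}\ln(2t)=\ln\Bigl(2^{3\binom{M}{2}}\,t^{3\binom{M}{2}}\Bigr).
\end{align*}
This already has the required form $\ln(D_1t^{\alpha})$ with $\alpha=3\binom{M}{2}$. A more careful bound would give $\ln\ln t$ growth, but any polynomial power of $t$ suffices here.

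The second summand, $\binom{M}{2}\,\mathcal{C}_{\mathrm{exp}}\bigl(\ln(1/\delta)/\binom{M}{2}\bigr)$, does not depend on $t$. It therefore suffices to show that $\mathcal{C}_{\mathrm{exp}}(x)\le \gamma' x + c_0$ for all $x\ge 0$, for some constants $\gamma'\ge 1$ and $c_0$. The main step is to establish this asymptotic-linearity property. We would proceed in three steps.
\begin{enumerate}
\item[(a)] Since $h(u)=u-\ln u$ with $u\ge 1$, we have $h^{-1}(y)\ge 1$, and $h^{-1}(y)=y+\ln h^{-1}(y)$. From this, $h^{-1}(y)\le y+\ln y+\sqrt{2(y-1)}$, which is the inequality already used via \cite[Proposition 8]{kaufmann2021mixture}. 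Consequently $h^{-1}(y)\le (1+\eta)y+c_\eta$ for every $\eta>0$.
\item[(b)] On the first branch, $\tilde h_z(x)=e^{1/h^{-1}(x)}h^{-1}(x)$. Because $h^{-1}(x)\ge 1$, the prefactor satisfies $e^{1/h^{-1}(x)}\le e$. More sharply, $e^{1/u}u\le u+c$ for all $u\ge 1$. On the second branch, $\tilde h_z$ is affine in $x$ and is only used on the bounded set $x<h(1/\ln z)$. In both cases $\tilde h_{3/2}(x)\le (1+\eta)x+c'_\eta$.
\item[(c)] Composing these bounds gives
\begin{align*}
\mathcal{C}_{\mathrm{exp}}(x)\le 2\Bigl[(1+\eta)\tfrac{(1+\eta)(1+x)+c_\eta+\ln(2\zeta(2))}{2}+c'_\eta\Bigr]\le (1+\eta)^2x+c''_\eta.
\end{align*}
\end{enumerate}
Multiplying by $\binom{M}{2}$ and substituting $x=\ln(1/\delta)/\binom{M}{2}$, the second summand is at most $(1+\eta)^2\ln(1/\delta)+\binom{M}{2}c''_\eta$.

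Adding the two bounds gives
\begin{align*}
\beta(t,\delta)\le \ln\bigl(D\,t^{\alpha}\,\delta^{-\gamma}\bigr),
\end{align*}
with $\gamma=(1+\eta)^2>1$, $\alpha=3\binom{M}{2}$, $D=2^{3\binom{M}{2}}e^{\binom{M}{2}c''_\eta}$, and $C=e$. This holds for all $t\ge C$ and $\delta\in(0,1)$. Because $\eta>0$ is arbitrary, $\gamma$ can be taken arbitrarily close to $1$, which is what the proof of Theorem~\ref{thm:upper_bound} uses.

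The main obstacle is step (b), controlling $\tilde h_{3/2}$ uniformly over all $x\ge 0$. The piecewise definition requires checking both branches and the crossover point $h(1/\ln(3/2))$. One also needs care that the additive constants stay finite when $x$ is small; this is harmless, since any bounded range contributes only to $D$.
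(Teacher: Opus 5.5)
Your argument is correct, but it reaches the linear bound on $\mathcal{C}_{\mathrm{exp}}$ by a different route than the paper. The paper never works with the piecewise definition of $\tilde h_z$. It uses two variational formulas: $\tilde h_z(x)=\min_{y\in[1,z]} y\,(x-\ln\ln y)$ (Lemma~29 of \cite{kaufmann2021mixture}) and $h^{-1}(x)=\inf_{z\ge 1} z\bigl(x-1+\ln\frac{z}{z-1}\bigr)$. It then evaluates them at fixed feasible points $y=\gamma_1\in(1,3/2]$ and $z=\gamma_2>1$. This gives $\mathcal{C}_{\mathrm{exp}}(x)\le \gamma_1\gamma_2\,x+\mathrm{const}$ in two lines, with $\gamma=\gamma_1\gamma_2$. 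So the step you identify as the main obstacle (the two branches of $\tilde h_{3/2}$ and the crossover at $h(1/\ln(3/2))$) never arises there.

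Your version instead combines explicit estimates with a case split. One estimate is $h^{-1}(y)\le y+\ln y+\sqrt{2(y-1)}$, a weakening of the bound $h^{-1}(y)\le y+\ln\bigl(y+\sqrt{2(y-1)}\bigr)$ that the paper uses in the proof of Theorem~\ref{thm:upper_bound}. The other is $u(e^{1/u}-1)\le e-1$ for $u\ge 1$. Your route is more elementary and self-contained, and it gives the same conclusion that $\gamma$ can be taken arbitrarily close to $1$.

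For the $t$-dependent summand, the paper applies Jensen's inequality to the concave map $x\mapsto\ln(1+\ln x)$ using $\sum_{(i,j)}N_{ij}(t)=t$. It then relaxes $\ln(1+\ln(t/|I|))\le 1+\ln(t/|I|)$ and lands on the same exponent $\alpha=3\binom{M}{2}$ you get from the cruder $N_{ij}(t)\le t$. So your simpler bound loses nothing for this lemma.

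One detail to fix: as in the paper, $C$ must also guarantee $N_{ij}(t)\ge 1$ for every pair. Otherwise $\ln(1+\ln N_{ij}(t))$ is undefined and your monotonicity step $\ln(1+\ln N_{ij}(t))\le\ln(1+\ln t)$ does not apply. Since every pair is sampled once at the start, $C=\max\{e,\binom{M}{2}\}$ works in place of $C=e$.
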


\begin{proof}
We start by noticing a few properties of $h$ and $h^{-1}$. Firstly, that since $h$ is defined on $u \ge 1$, $h^{-1}(u) \ge 1$. Then, by Lemma 29 in \cite{kaufmann2021mixture}, we have, for $x \ge 0$, $z \in [1,e]$, 
\begin{align*}
\tilde{h}_z(x) = \min_{y \in [1,z]} y(x-\ln \ln(y))
\end{align*}
and by the variational representation of $h^{-1}(x)$, for $\gamma_2 > 1$
\begin{align*}
h^{-1}(x) = \inf_{z \ge 1} z\left( x -1+\ln(\frac{z}{z-1})\right) \le \gamma_2\left(x - 1 + \ln (\frac{\gamma_2}{\gamma_2-1})\right).
\end{align*}
Combining these results, for $\gamma_1 \in (1,3/2]$, we have
\begin{align*}
\mathcal{C}_{\mathrm{exp}}\bigg(\frac{\ln(1/\delta)}{{\binom{M}{2}}}\bigg) &= 2 \tilde{h}_{3/2}\left( \frac{h^{-1}(1+\frac{\ln(1/\delta)}{{\binom{M}{2}}}) + \ln(2 \zeta (2))}{2}\right) \\
&= 2\min_{y \in [1,3/2]} y\left(\frac{h^{-1}(1+\frac{\ln(1/\delta)}{{\binom{M}{2}}}) + \ln(2 \zeta (2))}{2}-\ln \ln(y)\right) \\
&\le 2 \cdot \gamma_1\left(\frac{1}{2}h^{-1}(1+\frac{\ln(1/\delta)}{{\binom{M}{2}}}) +\frac{1}{2} \ln(2 \zeta (2)) - \ln \ln(\gamma_1)\right) \\
&=\gamma_1\left(h^{-1}(1+\frac{\ln(1/\delta)}{{\binom{M}{2}}}) + \ln(2 \zeta (2)) - 2 \ln \ln(\gamma_1)\right) \\
&\le \gamma_1\left(  \gamma_2 \left(\frac{\ln(1/\delta)}{{\binom{M}{2}}} + \ln (\frac{\gamma_2}{\gamma_2-1}) \right) + \ln(2 \zeta (2))- 2 \ln \ln(\gamma_1) \right)\\
&=  \frac{\gamma_1 \gamma_2 \ln(1/\delta)}{{\binom{M}{2}}} + \gamma_1 \gamma_2 \ln (\frac{\gamma_2}{\gamma_2-1}) +\gamma_1 \ln(2 \zeta (2))- 2\gamma_1 \ln \ln(\gamma_1)
\end{align*}
and 
\begin{align*}
{\binom{M}{2}} \mathcal{C}_{\mathrm{exp}}\bigg(\frac{\ln(1/\delta)}{{\binom{M}{2}}}\bigg)  &\le \gamma_1 \gamma_2\ln(1/\delta) + \underbrace{{\binom{M}{2}}\left(\gamma_1 \gamma_2 \ln (\frac{\gamma_2}{\gamma_2-1}) +\gamma_1 \ln(2 \zeta (2))- 2\gamma_1 \ln \ln(\gamma_1) \right)}_{\coloneqq a>0}. 
\end{align*}
Then, $\forall \delta \in (0,1)$, let $D_1 > e^a$ such that 
\begin{align*}
{\binom{M}{2}} \mathcal{C}_{\mathrm{exp}}(\frac{\ln(1/\delta)}{{\binom{M}{2}}})  &\le \gamma_1 \gamma_2\ln(1/\delta) + \ln(D_1)
\end{align*}
Next, we look to finding an upper bound on $3 \sum_{(i,j)\in I} \ln (1+ \ln N_{ij}(t))$ and notice that $\ln(1+\ln x)$ is concave on $x \ge e^{-2}$. Then, for $\min_{(i,j)\in I}  N_{ij}(t) \ge 1$, by Jensen's inequality \cite{grimmett2001} we have,
\begin{align*}
\sum_{(i,j)\in I} \frac{1}{|I|} \ln (1+ \ln N_{ij}(t)) &\le \ln (1+ \ln \sum_{(i,j)\in I} \frac{1}{|I|}  N_{ij}(t)) \\
&= \ln (1+ \ln t / |I|)\\
&\le 1+ \ln t / |I|. 
\end{align*}
Then, since
\begin{align*}
3 \sum_{(i,j)\in I} \ln (1+ \ln N_{ij}(t)) \le 3|I| (1+ \ln t / |I|) \le 3|I|  - 3|I| \ln(I)+ 3|I|\ln(t),
\end{align*}
we can choose $D_2 > e^{3|I|  - 3|I| \ln(I)}$ and $C$ such that $\forall t \ge C, \min_{(i,j)\in I}  N_{ij}(t) \ge 1$ to obtain 
\begin{align*}
3 \sum_{(i,j)\in I} \ln (1+ \ln N_{ij}(t)) \le \ln(D_2)+ 3|I|\ln(t). 
\end{align*}
Putting these inequalities together, we have $\forall \delta \in (0,1),\forall t \ge C$ there is a $ D = D_1 D_2 > 0$ such that for $ \gamma_1 \in (1,3/2], \gamma_2 > 1$, 
\begin{align*}
\beta(t,\delta) &= 3 \sum_{(i,j)\in I} \ln (1+ \ln N_{ij}(t)) + {\binom{M}{2}} \mathcal{C}_{\mathrm{exp}}\bigg(\frac{\ln(1/\delta)}{{\binom{M}{2}}} \bigg) \\
&\le \ln(D_2)+ 3|I|\ln(t) + \gamma_1 \gamma_2\ln(1/\delta) + \ln(D_1) \\
&= \ln\bigg(\frac{Dt^{3{\binom{M}{2}}}}{\delta^{\gamma_1 \gamma_2}}\bigg) \coloneqq \ln\bigg(\frac{Dt^{\alpha}}{\delta^{\gamma}}\bigg)
\end{align*}
\end{proof}
\color{black}

\subsection{Proof of Theorem \ref{thm:mix_ALTupper_bound}} \label{app:mix_ALTupper_bound}

\begin{proof}
First, under the D-tracking rule, for all $(i,j) \in I$, $N_{ij}(t) \ge (\sqrt{t} - {\binom{M}{2}}/2)_+ -1$ and by  \cite[Lemma 17]{garivier2016optimal}, $\forall \mu > 0$, $\exists t'_\mu \ge t_\mu$ such that
\begin{align*}
\sup_{t \ge t_\mu} \max_{(i,j)\in I} \left|\lambda_\epsilon^*(\sigma;C_t)_{ij}-\lambda_\epsilon^*(\sigma;C)_{ij} \right| < \mu \implies
\sup_{t \ge t'_\mu} \max_{(i,j)\in I} \left|\frac{N_{ij}(t)}{t}-\lambda_\epsilon^*(\sigma;C)_{ij}\right| \le 3\left({\binom{M}{2}} -1\right)\mu. 
\end{align*}

Let $0 < \mu < \min\{p-0.5,0.5-q\}$ and $R(\lambda)= \frac{1}{2}\|\lambda\|^2$. Define the event 
\begin{align*}
\Ev_T(\mu)\coloneqq \bigcap_{t=h(T)}^\infty \left\{\omega \in \Omega: \max_{(i,j)\in I} |\hat{c}_t(\omega)_{ij}-c_{ij}|<\mu\right\}
\end{align*}
and functions 
\begin{align*}
g(\tilde{\lambda}, \tilde{C}) &\coloneqq \Big[ \min_{(i',j')\in I} \tilde{\lambda}_{i'j'} \Big]\cdot \inf_{C' \in \Alt(C)} \sum_{(i,j)\in I}  d(\tilde{c}_{ij},c'_{ij}) -\sigma R(\tilde{\lambda}) \\
\hat{M}(t) &\coloneqq \Big[\min_{(i',j')\in I} \frac{N_{i'j'}(t)}{t}\Big] \cdot \inf_{C' \in \Alt(C)} \sum_{(i,j)\in I} d((\hat{c}_t)_{ij},c'_{ij})-\sigma R \left(\frac{N(t)}{t} \right) = g\left(\frac{N(t)}{t}, \hat{C}_t\right) \\
C_\mu(C) &\coloneqq \inf_{\substack{\tilde{C}: \max_{(i,j) \in I} |\tilde{c}_{ij}-c_{ij}|<\mu \\ \tilde{\lambda}: \max_{(i,j) \in I} |\tilde{\lambda}_{ij}-\lambda_\epsilon^*(\sigma;C)_{ij}|<3 ({\binom{M}{2}}-1)\mu}} g(\tilde{\lambda}, \tilde{C}), 
\end{align*}
for some increasing function $h(t)$. We observe that $g$ is still a continuous function. 

On the event $\Ev_T(\mu)$, we have that $\forall t \ge h(T)$, 
\begin{enumerate}
\item $[C_t] = [C]$
\item $\Alt(C_t) = \Alt(C)$
\item $Z(t) = \big[\min_{(i',j')\in I} N_{i'j'}(t)\big] \cdot\inf_{C' \in \Alt(C)}  \sum_{(i,j)\in I} d((\hat{c}_t)_{ij},c'_{ij}) \le \beta(t,\delta) \implies t\hat{M}(t)\le \beta(t,\delta) $
\item $\exists t'_\mu \ge h(T)$ such that $\forall t \ge t'_\mu$, 
$ \max_{(i,j)\in I} |\frac{N_{ij}(t)}{t}-\lambda_\epsilon^*(\sigma;C)_{ij}| \le 3({\binom{M}{2}} -1)\mu$
\item $\forall t \ge t'_\mu$, $\hat{M}(t) \ge C_\mu(C)$.
\end{enumerate}
3. holds since $Z(t)=t\hat{M}(t)+t\sigma R(\frac{N(t)}{t})$ and $t\sigma R(\frac{N(t)}{t})$ is positive. 
5. follows from 4. and the definition of sets of $\tilde{\lambda}$ and $\tilde{C}$ that we minimize over in $C_\mu(C)$. 

Now, we can choose $T_1$ and $h(T) > T$ such that $h(T_1) \ge {\binom{M}{2}}^2$ and $T$ such that $\sqrt{T} \ge t'_\mu$. Then, on $\Ev_{T}(\mu)$, 
\begin{align*}
\min(\tilde{\tau}_\delta, T) &\le \sqrt{T} + \sum_{t=\sqrt{T}}^{T} \I \{ \tilde{\tau}_\delta > t \} \overset{3.}{\le} \sqrt{T} + \sum_{t=\sqrt{T}}^{T} \I \{ t\hat{M}(t) \le \beta(t,\delta) \} \\
& \overset{5.}{\le} \sqrt{T} + \sum_{t=\sqrt{T}}^{T} \I \{ tC_\mu(C)\le \beta(T,\delta) \} \le \sqrt{T} + \frac{\beta(T,\delta)}{C_\mu(C)}. 
\end{align*}
Let $T_2(\delta) = \inf \{T \in \mathbb{N}: \sqrt{T} + \frac{\beta(T,\delta)}{C_\mu(C)} \le T \}$ and $T_3 = \max \{T_2(\delta), (t_\mu)^2\}$, then $\hat{\tau}_\delta \le T_3$. ($T-\sqrt{T}$ is an increasing function) Hence, we have that $\forall T \ge T_3$, $\Ev_{T}(\mu) \subset \{ \hat{\tau}_\delta \le T \}$ and $\Pr( \Ev^c_{T}(\mu)) \ge \Pr( \hat{\tau}_\delta > T)$. 

Then, so far we have
\begin{align*}
\E_C[\tilde{\tau}_\delta] &= \sum_{t=1}^{T_3-1} \Pr(\tilde{\tau}_\delta > t) + \sum_{t=T_3}^{\infty} \Pr(\tilde{\tau}_\delta > t) \\ &\le T_3 +\sum_{t=T_3}^{\infty} \Pr( \Ev^c_{t}(\mu)) \\
&\le T_2(\delta) + (t'_\mu)^2 + \sum_{t=T_3}^{\infty} \Pr( \Ev^c_{t}(\mu)). 
\end{align*}
We can follow the same steps as Appendix \ref{app:prf_stopping_rule}, to upper bound $T_2(\delta)$ and $\sum_{t=T_3}^{\infty} \Pr( \Ev^c_{t}(\mu))$.

Then, we have 
\begin{align*}
\limsup_{\delta \rightarrow 0}\frac{\E_C[\tilde{\tau}_\delta]}{\ln(1/\delta)} \le \frac{\gamma(1+\xi)}{C_\mu(C)}.
\end{align*}
Since
\begin{align*}
    C_\mu(C) &= \inf_{\substack{\tilde{C}: \max_{(i,j) \in I} |\tilde{c}_{ij}-c_{ij}|<\mu \\ \tilde{\lambda}: \max_{(i,j) \in I} |\tilde{\lambda}_{ij}-\lambda^*(\sigma;C)_{ij}|<3 ({\binom{M}{2}}-1)\mu}} g(\tilde{\lambda}, \tilde{C})
\end{align*} and $g$ is a continuous function, as $\mu \rightarrow 0$, 
\begin{align*}
    C_\mu(C) \rightarrow 
    g(\lambda_\epsilon^*(\sigma;C),C)&=\big[ \min_{(i',j')\in I} \lambda_\epsilon^*(\sigma;C)_{i'j'} \big]\cdot \inf_{C' \in \Alt(C)} \sum_{(i,j)\in I}  d(c_{ij},c'_{ij}) -\sigma R(\lambda_\epsilon^*(\sigma;C)) \\
   & \coloneqq \tilde{D}_\epsilon^*(\sigma;C). 
\end{align*}

Since $\big[\min_{(i',j')\in I} {\lambda_\epsilon^*(\sigma;C)}_{i'j'}\big] \cdot \big[\inf_{C' \in \Alt(C)} \sum_{(i,j)\in I} d(c_{ij},c'_{ij})\big] \ge \frac{\epsilon}{|I|}\big[\inf_{C' \in \Alt(C)} \sum_{(i,j)\in I} d(c_{ij},c'_{ij})\big]>0$ for all $\sigma>0$, there must exist a small enough $\sigma$ such that $\tilde{D}_\epsilon^*(\sigma;C)>0$.

\end{proof}

\section{On the Computationally Feasible Stopping Rule} \label{app:computationally_feas}
Below we derive a closed-form expression for $\widehat{Z}(t)$, thus making it a computationally tractable and  practical alternative to the GLR-based statistic $Z(t)$ (whose computation is infeasible due to lack of a closed-form expression). 
We show that the search over $\Alt(C_t)$ in \eqref{eq:ALTstopping_stat} can be reduced to a search over finitely many equivalence classes. Additionally, we show that instead of evaluating~\eqref{eq:ALTstopping_stat} in full, it suffices to simply evaluate the maximum and minimum of $\hat{p}_n$, achieving a significant gain in efficiency. 

\begin{proposition}
\label{prop:ALTstopping_rule_simplify} Let $(i_t^* , j_t^*) \in \arg\min_{(i ,j )\in I} N_{i j } (t)$, and for a fixed alternate instance $C'\in\Alt(C_t)$, define $N'_p=N'_p(C') =\{(i,j)\in I: c'_i=c'_j\}$ and let $n = |N_p'|$ and $n'=\binom{M}{2}-n$. Additionally, let 
\begin{align*}
\hat{p}_n = \frac{1}{n}\sum_{(i,j)\in N'_p} (\hat{c}_t)_{ij},      \qquad 
\hat{p}^{\mathrm{max}}_n =\max_{\substack{ C'\in \Alt(C_t)/\sim: \\ |N_p'|=n} }\hat{p}_n, \qquad
\hat{p}^{\mathrm{min}}_n = \min_{\substack{C'\in \Alt(C_t)/\sim: \\|N'_p|=n, \hat{C}_{\mathrm{tot}}-n\hat{p}_n<0.5n'}} \hat{p}_n,
\end{align*}
  where $\hat{C}_{\mathrm{tot}} =\sum_{(i,j)\in I} (\hat{c}_t)_{ij} $.   Let $s_n(p ) = n H(p) + n' H(\frac{\hat{C}_{\mathrm{tot}}-np}{n'})$ where $H(p)=-p\log p -(1-p)\log (1-p)$ is the binary entropy of $\mathrm{Ber}(p)$. Then, the computationally feasible stopping statistic $\widehat{Z}(t)$ can be simplified as 
\begin{align*}
   \widehat{Z}(t) = 
    N_{i_t^* j_t^*} (t) \cdot \begin{cases}
    \displaystyle\min_{n \in [{\binom{M}{2}}]} s_n(\hat{p}^{\mathrm{max}}_n)- \sum_{(i,j) \in I} H((\hat{c}_t)_{ij}), & \text{for } \hat{C}_{\mathrm{tot}} > \frac{{\binom{M}{2}}}{2} \\
    \displaystyle\min_{n \in [{\binom{M}{2}}]} \min\{ s_n(\hat{p}^{\mathrm{max}}_n), s_n(\hat{p}^{\mathrm{min}}_n) \}- \sum_{(i,j) \in I} H((\hat{c}_t)_{ij}), & \text{for } \hat{C}_{\mathrm{tot}} \le \frac{{\binom{M}{2}}}{2}
    \end{cases}.
\end{align*}
\end{proposition}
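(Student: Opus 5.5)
The plan is to evaluate the inner infimum in \eqref{eq:ALTstopping_stat} in two nested stages, following the strategy used for $D^*(C)$ in Lemma~\ref{lem:pqstar}. First, fix an equivalence class $[C']\in\Alt(C_t)/\sim$ and minimize over its parameters $(p',q')$ with $1\ge p'>1/2>q'\ge 0$. Second, minimize over the finitely many classes. Since the factor $N_{i_t^*j_t^*}(t)$ does not depend on $C'$, we only need to analyse $\inf_{C'\in\Alt(C_t)}\sum_{(i,j)\in I}\KL((\widehat v_t)_{ij},v'_{ij})$.

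For the first stage, expand each Bernoulli KL term as $d(x,y)=-H(x)-x\log y-(1-x)\log(1-y)$. This gives the term $-\sum_{(i,j)\in I}H((\hat c_t)_{ij})$, which is common to every alternative. It also gives a cross-entropy part $-\sum_{N'_p}[\hat c\log p'+(1-\hat c)\log(1-p')]-\sum_{I\setminus N'_p}[\hat c\log q'+(1-\hat c)\log(1-q')]$. This part depends on the data only through $n\hat p_n$ and $\hat C_{\mathrm{tot}}-n\hat p_n$. As in the derivative computation \eqref{eqn:dhdx}--\eqref{eqn:dhdy}, the unconstrained minimizers are $p'=\hat p_n$ and $q'=(\hat C_{\mathrm{tot}}-n\hat p_n)/n'$. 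At these values the cross-entropy equals $s_n(\hat p_n)=nH(\hat p_n)+n'H\big(\frac{\hat C_{\mathrm{tot}}-n\hat p_n}{n'}\big)$. If a constraint binds (say $\hat p_n\le 1/2$), the infimum is approached at the boundary value $1/2$, as in the case analysis of Lemma~\ref{lem:pqstar}. I would treat these boundary cases separately and check that the resulting value is dominated by one of the admissible extremal candidates described below.

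For the second stage, group the classes by $n=|N'_p|$. For fixed $n$, $s_n(p)$ is concave in $p$, being a sum of concave entropies of affine functions of $p$. Its minimum over the finite set of attainable values $\{\hat p_n\}$ is therefore attained at an extreme point, either $\hat p^{\max}_n$ or $\hat p^{\min}_n$. The case split on $\hat C_{\mathrm{tot}}$ comes from deciding which extremes remain admissible. When $\hat C_{\mathrm{tot}}>\binom{M}{2}/2$, I would argue by monotonicity that the minimizing side is always $\hat p^{\max}_n$. Concretely, the derivative $s_n'(p)=n\log\frac{1-p}{p}-n\log\frac{1-q}{q}$, with $q$ the induced value, has a fixed sign on the feasible region. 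When $\hat C_{\mathrm{tot}}\le\binom{M}{2}/2$, both endpoints must be kept. The extra condition $\hat C_{\mathrm{tot}}-n\hat p_n<0.5n'$ defining $\hat p^{\min}_n$ is exactly the requirement that the induced $q'$ lie below $1/2$.

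Taking the minimum over $n$ and multiplying by $N_{i_t^*j_t^*}(t)$ gives the formula. I expect the main obstacle to be the boundary and feasibility bookkeeping: showing that classes whose unconstrained optimum violates $p'>1/2>q'$ never beat the listed candidates, and justifying the asymmetry between the two regimes of $\hat C_{\mathrm{tot}}$. My first attempt would be the sign analysis of $s_n'$ combined with the monotonicity argument of Lemma~\ref{lem:inc_z}.
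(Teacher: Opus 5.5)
Your outline is the paper's proof almost step for step. You expand each $d(\cdot,\cdot)$ as cross-entropy minus $H$, and take the clipped per-class optimizers $p'=\max\{1/2,\hat p_n\}$, $q'=\min\{1/2,\hat q_n\}$ with $\hat q_n=(\hat C_{\mathrm{tot}}-n\hat p_n)/n'$. Then, for fixed $n$, you use that $s_n$ rises and then falls (the paper via the derivative, you via concavity), and split on $\hat C_{\mathrm{tot}}$.

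The step you postpone as ``boundary and feasibility bookkeeping'' is where the argument breaks, and it cannot be closed, because the identity as stated is false. The paper's proof has the same hole: after deriving the clipped optimizers, it evaluates the candidates with the unclipped $s_n$. The per-class infimum of the cross-entropy part is
\[
F_n(\hat p_n)=n\,\phi_+(\hat p_n)+n'\,\phi_-(\hat q_n),\qquad
\phi_+(x)=\begin{cases}H(x), & x\ge 1/2,\\ \ln 2, & x<1/2,\end{cases}\qquad
\phi_-(y)=\begin{cases}H(y), & y\le 1/2,\\ \ln 2, & y>1/2,\end{cases}
\]
and $s_n<F_n$ whenever a constraint binds. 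Nothing in the definitions of $\hat p^{\mathrm{max}}_n$ and $\hat p^{\mathrm{min}}_n$ forces $\hat p_n>1/2$, nor $\hat q_n<1/2$ for $\hat p^{\mathrm{max}}_n$. So the candidate values need not be attained by any alternative.

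A concrete counterexample: let $C_t$ be the all-singletons clustering with $(\hat c_t)_{ij}=c<1/2$ for every pair. Then $\hat p^{\mathrm{max}}_n=\hat p^{\mathrm{min}}_n=c$ for every attainable $n$ (where defined), and the induced $q$ is also $c$. Every term of the formula equals $\binom{M}{2}H(c)-\sum_{(i,j)\in I}H((\hat c_t)_{ij})=0$, so the formula returns $0$. Yet every alternative puts some pair into a common cluster, which forces $p'>1/2$. Hence $\widehat Z(t)=N_{i_t^*j_t^*}(t)\,d(c,1/2)>0$, approached by merging a single pair. For $M=3$, $c=0.3$, this is $0$ versus $\approx 0.082\,N_{i_t^*j_t^*}(t)$.

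Your two reductions also fail on their own terms. First, concavity of $s_n$ sends the minimum to an endpoint of the \emph{whole} attainable set, i.e.\ $\hat p^{\mathrm{max}}_n$ or the unrestricted minimum, not $\hat p^{\mathrm{min}}_n$. Discarding classes with $\hat q_n\ge 1/2$ is legitimate only for the clipped objective. Second, your derivative $s_n'(p)=n\log\frac{1-p}{p}-n\log\frac{1-q}{q}$ is $\le 0$ whenever $p\ge q$. That covers the whole feasible region $p\ge 1/2\ge q$, for \emph{every} value of $\hat C_{\mathrm{tot}}$, so it cannot produce the asymmetry between the two regimes.

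Carried through correctly, your monotonicity idea gives the true statement. Since $\phi_+$ is non-increasing, $\phi_-$ is non-decreasing, and $\hat q_n$ is decreasing in $\hat p_n$, $F_n$ is non-increasing in $\hat p_n$. Hence, in both regimes,
\[
\widehat Z(t)=N_{i_t^*j_t^*}(t)\Big[\min_{n}F_n\big(\hat p^{\mathrm{max}}_n\big)-\sum_{(i,j)\in I}H\big((\hat c_t)_{ij}\big)\Big].
\]
Here $n$ ranges over the values of $|N'_p|$ actually attained in $\Alt(C_t)$. This includes $n=0$, with $F_0=\binom{M}{2}\phi_-(\hat C_{\mathrm{tot}}/\binom{M}{2})$, when the all-singletons clustering is an alternative; $[\binom{M}{2}]$ omits this value and includes unattainable ones.

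What survives of the stated formula is an inequality. Since $s_n\le F_n$, and by concavity of $H$, $s_n(x)\le\binom{M}{2}H(\hat C_{\mathrm{tot}}/\binom{M}{2})\le F_0$, the formula is a lower bound on $\widehat Z(t)$. That still makes a stopping rule built on it $\delta$-correct, but it is not the claimed equality.
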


\begin{proof}
As usual, we first define sets to group the terms in our objective function in the alternate stopping rule
\begin{align*}
N'_p =N'_p(C')= \{ (i,j)\in I: c'_{i} = c'_j\}; \hspace{0.5cm} N'_q = N'_q(C')= \{ (i,j)\in I: c'_{i} \ne c'_j\}.
\end{align*}
Then, we simplify the objective function by expanding the KL divergence
\begin{align}
\label{eq:alt_stopping_inf}
\nonumber
\widehat{Z}(t) &= \inf_{C' \in \Alt(C_t)}\Big[\min_{(i',j')\in I} N_{i'j'} (t) \Big] \sum_{(i,j)\in I} d((\hat{c}_t)_{ij},c'_{ij})\\ \nonumber
&=N_{i^*_t,j^*_t}(t) \inf_{C' \in \Alt(C_t)} \sum_{(i,j)\in I} d((\hat{c}_t)_{ij},c'_{ij}) \\ \nonumber
&= N_{i^*_t,j^*_t}(t)\inf_{C' \in \Alt(C_t)} \sum_{(i,j)\in N'_p} d((\hat{c}_t)_{ij},p')+ \sum_{(i,j)\in N'_q} d((\hat{c}_t)_{ij},q') \\ \nonumber
&
\begin{multlined}[0.8\linewidth]
 =N_{i^*_t,j^*_t}(t) \inf_{C' \in \Alt(C_t)} \sum_{(i,j)\in N'_p} (\hat{c}_t)_{ij} \log \frac{(\hat{c}_t)_{ij}}{p'} +(1-(\hat{c}_t)_{ij}) \log \frac{1-(\hat{c}_t)_{ij}}{1-p'} \\ \nonumber+\sum_{(i,j)\in N'_q}  (\hat{c}_t)_{ij} \log \frac{(\hat{c}_t)_{ij}}{q'} +(1-(\hat{c}_t)_{ij}) \log \frac{1-(\hat{c}_t)_{ij}}{1-q'} 
\end{multlined} \\ \nonumber
& \begin{multlined}[0.9\linewidth]= N_{i^*_t,j^*_t}(t) \inf_{C' \in \Alt(C_t)} - \sum_{(i,j)\in N'_p} (\hat{c}_t)_{ij} \log p' -(1-(\hat{c}_t)_{ij}) \log (1-p') \\ \nonumber
-\sum_{(i,j)\in N'_q} (\hat{c}_t)_{ij} \log q' -(1-(\hat{c}_t)_{ij}) \log (1-q') - \sum_{(i,j) \in I} H((\hat{c}_t)_{ij})
\end{multlined} \\
&=  N_{i^*_t,j^*_t}(t) \inf_{C' \in \Alt(C_t)} s(C') - \sum_{(i,j) \in I} H((\hat{c}_t)_{ij}),
\end{align}
where $p'$ and $q'$ are the probabilities associated with instance $C'$ and $s(C')=- \sum_{(i,j)\in N'_p} (\hat{c}_t)_{ij} \log p' -(1-(\hat{c}_t)_{ij}) \log (1-p')-\sum_{(i,j)\in N'_q} (\hat{c}_t)_{ij} \log q' -(1-(\hat{c}_t)_{ij}) \log (1-q')$. 
For ease of notation, we drop the primes for $N'_p$, $N'_q$, $p'$ and $q'$. It should make sense that within an equivalence class, $[C'] \in \Alt(C_t)/\sim$, the optimal $p$ and $q$ would be the empirical average of $(\hat{c}_t)_{ij}$ in $N_p$ and $N_q$ respectively. For completeness, we solve for them below. Differentiating $s$ with respect to $p$, we obtain
\begin{align*}
    \frac{ds}{dp} =|N_p|\left( -\frac{\sum_{(i,j)\in N_p} (\hat{c}_t)_{ij}}{|N_p|}\frac{1}{p} + \left(1-\frac{\sum_{(i,j)\in N_p} (\hat{c}_t)_{ij}}{|N_p|}\right)\frac{1}{1-p}\right). 
\end{align*}

\underline{Case 1:} $\frac{\sum_{(i,j)\in N_p} (\hat{c}_t)_{ij}}{|N_p|} < 0.5 < p$\\
Then, we have
\begin{align*}
    \frac{\sum_{(i,j)\in N_p} (\hat{c}_t)_{ij}}{|N_p|}\frac{1}{p} < 1 ,\quad\quad 
    \left(1-\frac{\sum_{(i,j)\in N_p} (\hat{c}_t)_{ij}}{|N_p|}\right)\frac{1}{1-p} > 1
\end{align*}
and $\frac{ds}{dp} > 0$ for $0.5< p < 1$. Hence, $p=0.5$. \\

\underline{Case 2:} $\frac{\sum_{(i,j)\in N_p} (\hat{c}_t)_{ij}}{|N_p|} > 0.5$\\
If $\frac{\sum_{(i,j)\in N_p} (\hat{c}_t)_{ij}}{|N_p|} > p > 0.5$, then, 
\begin{align*}
    \frac{\sum_{(i,j)\in N_p} (\hat{c}_t)_{ij}}{|N_p|}\frac{1}{p} > 1 ,\quad\quad 
    \left(1-\frac{\sum_{(i,j)\in N_p} (\hat{c}_t)_{ij}}{|N_p|}\right)\frac{1}{1-p} < 1
\end{align*}
and $\frac{ds}{dp} < 0$. \\
If $p > \frac{\sum_{(i,j)\in N_p} (\hat{c}_t)_{ij}}{|N_p|} > 0.5$, then, we have case 1 and 
$\frac{ds}{dp} > 0$. Hence, $p = \frac{\sum_{(i,j)\in N_p} (\hat{c}_t)_{ij}}{|N_p|}$. \\
Combining both cases, we have $p = \max \{0.5, \frac{\sum_{(i,j)\in N_p} (\hat{c}_t)_{ij}}{|N_p|}\}$. 

Similarly for $q$,
\begin{align*}
    \frac{ds}{dq} =|N_q|\left( -\frac{\sum_{(i,j)\in N_q} (\hat{c}_t)_{ij}}{|N_q|}\frac{1}{q} + \left(1-\frac{\sum_{(i,j)\in N_q} (\hat{c}_t)_{ij}}{|N_q|}\right)\frac{1}{1-q}\right). 
\end{align*}

\underline{Case 1:} $\frac{\sum_{(i,j)\in N_q} (\hat{c}_t)_{ij}}{|N_q|} > 0.5 > q$ \\
Then, we have
\begin{align*}
    \frac{\sum_{(i,j)\in N_q} (\hat{c}_t)_{ij}}{|N_q|}\frac{1}{q} > 1 ,\quad\quad 
    \left(1-\frac{\sum_{(i,j)\in N_q} (\hat{c}_t)_{ij}}{|N_q|}\right)\frac{1}{1-q} < 1
\end{align*}
and $\frac{ds}{dq} < 0$ for $0.5> q > 0$. Hence, $q=0.5$. \\

\underline{Case 2:} $\frac{\sum_{(i,j)\in N_q} (\hat{c}_t)_{ij}}{|N_q|} < 0.5$\\
If $\frac{\sum_{(i,j)\in N_q} (\hat{c}_t)_{ij}}{|N_q|} < q < 0.5$, then, 
\begin{align*}
    \frac{\sum_{(i,j)\in N_q} (\hat{c}_t)_{ij}}{|N_q|}\frac{1}{q} < 1 ,\quad\quad 
    \left(1-\frac{\sum_{(i,j)\in N_q} (\hat{c}_t)_{ij}}{|N_q|}\right)\frac{1}{1-q} > 1
\end{align*}
and $\frac{ds}{dq} > 0$. \\
If $ 0.5> \frac{\sum_{(i,j)\in N_q} (\hat{c}_t)_{ij}}{|N_q|} > q$, then, we have case 1 and 
$\frac{ds}{dq} < 0$. Hence, $q = \frac{\sum_{(i,j)\in N_q} (\hat{c}_t)_{ij}}{|N_q|}$. \\
Combining both cases once more, we have $q = \min \{0.5, \frac{\sum_{(i,j)\in N_q} (\hat{c}_t)_{ij}}{|N_q|}\}$.

Next, we simplify $s$ after substituting in our optimal $p$ and $q$. Fix $|N_p|$ and let $n\coloneqq |N_p|$, $n'\coloneqq |N_q| = {\binom{M}{2}}-n$, $\hat{p} \coloneqq\sum_{(i,j)\in N_p} (\hat{c}_t)_{ij} $, $\hat{C}_{\mathrm{tot}} \coloneqq\sum_{(i,j)\in I} (\hat{c}_t)_{ij} $, $\hat{q} \coloneqq \sum_{(i,j)\in N_q} (\hat{c}_t)_{ij} = \hat{C}_{\mathrm{tot}} - \hat{p}$. Then, if $\frac{\sum_{(i,j)\in N_p} (\hat{c}_t)_{ij}}{|N_p|}\ge 0.5$ and $\frac{\sum_{(i,j)\in N_q} (\hat{c}_t)_{ij}}{|N_q|}\le0.5$, $p = \frac{\hat{p}}{n}$ and $q = \frac{\hat{C}_{\mathrm{tot}}-\hat{p}}{n'}$. Since we fixed $n$, the only dependent variable is $\hat{p}$ and we can write $s$ as
\begin{align*}
-s_n(\hat{p}) &= \sum_{(i,j)\in N_p} (\hat{c}_t)_{ij} \log p +(1-(\hat{c}_t)_{ij}) \log (1-p) +\sum_{(i,j)\in N_q} (\hat{c}_t)_{ij} \log q +(1-(\hat{c}_t)_{ij}) \log (1-q) \\
&= \hat{p} \log \frac{\hat{p}}{n} + (n - \hat{p}) \log \left(1- \frac{\hat{p}}{n}\right) + (\hat{C}_{\mathrm{tot}}-\hat{p})\log \frac{\hat{C}_{\mathrm{tot}}-\hat{p}}{n'} \\
&\quad + (n'-\hat{C}_{\mathrm{tot}}+\hat{p})\log\left(1- \frac{\hat{C}_{\mathrm{tot}}-\hat{p}}{n'}\right) \\
&= \hat{p} \log \frac{\hat{p}}{n} + (n - \hat{p}) \log \left(\frac{n-\hat{p}}{n}\right) + (\hat{C}_{\mathrm{tot}}-\hat{p})\log \frac{\hat{C}_{\mathrm{tot}}-\hat{p}}{n'} \\
&\quad + (n'-\hat{C}_{\mathrm{tot}}+\hat{p})\log\left(\frac{n'-\hat{C}_{\mathrm{tot}}+\hat{p}}{n'}\right). 
\end{align*}
Then evaluating $\frac{-ds_n}{d\hat{p}}$, 
\begin{align*}  
&\begin{multlined}[0.8\linewidth]  -\frac{ds_n}{d\hat{p}}=  \log \frac{\hat{p}}{n} + \hat{p}\cdot\frac{n}{\hat{p}}\cdot\frac{1}{n} -  \log \left(\frac{n-\hat{p}}{n}\right) -(n - \hat{p})\cdot \frac{n}{n-\hat{p}}\cdot \frac{1}{n} \\ - \log \frac{\hat{C}_{\mathrm{tot}}-\hat{p}}{n'}- (\hat{C}_{\mathrm{tot}}-\hat{p})\cdot\frac{n'}{\hat{C}_{\mathrm{tot}}-\hat{p}} \cdot \frac{1}{n'} + \log\left(\frac{n'-\hat{C}_{\mathrm{tot}}+\hat{p}}{n'}\right)+ (n'-\hat{C}_{\mathrm{tot}}+\hat{p})\cdot\frac{n'}{n'-\hat{C}_{\mathrm{tot}}+\hat{p}} \cdot \frac{1}{n'}
    \end{multlined} \\
    & \phantom{-\frac{ds_n}{d\hat{p}}}=  \log \frac{\hat{p}}{n} + 1 -  \log \left(\frac{n-\hat{p}}{n}\right)- 1 - \log \frac{\hat{C}_{\mathrm{tot}}-\hat{p}}{n'}- 1 + \log\left(\frac{n'-\hat{C}_{\mathrm{tot}}+\hat{p}}{n'}\right)+1 
    \\
    & \phantom{-\frac{ds_n}{d\hat{p}}}=  \log \frac{\hat{p}}{n-\hat{p}} + \log\left(\frac{n'-\hat{C}_{\mathrm{tot}}+\hat{p}}{\hat{C}_{\mathrm{tot}}-\hat{p}}\right) \\
    & \phantom{-\frac{ds_n}{d\hat{p}}}= \log\left(\frac{\hat{p} \{n'-\hat{C}_{\mathrm{tot}}+\hat{p}\}}{(n-\hat{p}) \{\hat{C}_{\mathrm{tot}}-\hat{p}\}}\right).  
\end{align*}
Then for $\hat{p} < \frac{n \hat{C}_{\mathrm{tot}}}{{\binom{M}{2}}}$, $\frac{ds_n}{d\hat{p}} > 0$
and for $\hat{p} \ge \frac{n \hat{C}_{\mathrm{tot}}}{{\binom{M}{2}}}$, $\frac{ds_n}{d\hat{p}} \le 0$. Since $s_n(\hat{p})$ is an increasing, then decreasing function with a stationary point at $\hat{p} = \frac{n \hat{C}_{\mathrm{tot}}}{{\binom{M}{2}}}$, the infimum would occur at either the largest value or smallest value for $1> \frac{\hat{p}}{n} > 0.5$. 

Now, we can simplify the infimum in~\eqref{eq:alt_stopping_inf} as follows,
\begin{align}
\label{eq:simplify_s}
\nonumber
    \inf_{C' \in \Alt(C_t)} s(C') &=  \inf_{n \in [{\binom{M}{2}}]} \inf_{\hat{p}:|N_p|=n} s_n(\hat{p}) \\
    &= \begin{cases}
    \inf_{n \in [{\binom{M}{2}}]} s_n(\max_{|N_p|=n} \hat{p}), & \text{for } \hat{C}_{\mathrm{tot}} > \frac{{\binom{M}{2}}}{2} \\
    \inf_{n \in [{\binom{M}{2}}]} \min\{ s_n(\max_{|N_p|=n} \hat{p}), s_n(\min_{|N_p|=n, \hat{C}_{\mathrm{tot}}-\hat{p}<0.5n'} \hat{p}) \}, & \text{for } \hat{C}_{\mathrm{tot}} \le \frac{{\binom{M}{2}}}{2} 
    \end{cases}. 
\end{align}
Before we go through each case in detail, we establish a property of the objective function. Since $q = \min \{0.5, \frac{\sum_{(i,j)\in N_q} (\hat{c}_t)_{ij}}{|N_q|}\}$, if $\frac{\sum_{(i,j)\in N_q} (\hat{c}_t)_{ij}}{|N_q|}=\frac{\hat{C}_{\mathrm{tot}}-\hat{p}}{n'} > 0.5$, then
\begin{align}
\label{eq:ineq1}
\nonumber
    -s_n(\hat{p}) &= \hat{p} \log \frac{\hat{p}}{n} + (n - \hat{p}) \log \left(\frac{n-\hat{p}}{n}\right) + (\hat{C}_{\mathrm{tot}}-\hat{p})\log \frac{\hat{C}_{\mathrm{tot}}-\hat{p}}{n'} \\
&\quad + (n'-\hat{C}_{\mathrm{tot}}+\hat{p})\log\left(\frac{n'-\hat{C}_{\mathrm{tot}}+\hat{p}}{n'}\right) \\ 
    \nonumber
    &= \hat{p} \log \frac{\hat{p}}{n} + (n - \hat{p}) \log \left(\frac{n-\hat{p}}{n}\right) + 0.5n'\log 0.5 + (n'-0.5n')\log\left(1-0.5\right) \\
    &= \hat{p} \log \frac{\hat{p}}{n} + (n - \hat{p}) \log \left(\frac{n-\hat{p}}{n}\right) + n'\log 0.5. 
\end{align}
For $\hat{q} =\hat{C}_{\mathrm{tot}}-\hat{p} 
\le 0.5n'$, since $x \log x + (1-x) \log (1-x) > \log 0.5$ for $ x \in [0,1]$, 
\begin{align}
\label{eq:ineq2}
\nonumber
&(\hat{C}_{\mathrm{tot}}-\hat{p})\log \frac{\hat{C}_{\mathrm{tot}}-\hat{p}}{n'} + (n'-\hat{C}_{\mathrm{tot}}+\hat{p})\log\left(\frac{n'-\hat{C}_{\mathrm{tot}}+\hat{p}}{n'}\right) \\
&= n' \frac{\hat{q}}{n'} \log \frac{\hat{q}}{n'} + (n'- n' \frac{\hat{q}}{n'})\log\left(\frac{n'-\hat{q}}{n'}\right) \\
\nonumber
&= n' \frac{\hat{q}}{n'} \log \frac{\hat{q}}{n'} + n'(1- \frac{\hat{q}}{n'})\log\left(1-\frac{\hat{q}}{n'}\right) \\
\nonumber
&= n' \left( \frac{\hat{q}}{n'} \log \frac{\hat{q}}{n'} + (1- \frac{\hat{q}}{n'})\log\left(1-\frac{\hat{q}}{n'}\right) \right) \\
&\ge n' \log 0.5. 
\end{align}
Combining the inequalities~\eqref{eq:ineq1} and~\eqref{eq:ineq2}, for $\hat{p}'$ such that $\frac{\hat{C}_{\mathrm{tot}}-\hat{p}'}{n'} > 0.5$, and $\hat{p}$ such that $\frac{\hat{C}_{\mathrm{tot}}-\hat{p}}{n'} \le 0.5$, the following statement holds true
\begin{align*}
    -s_n(\hat{p}') \le -s_n(\hat{p}) \implies  s_n(\hat{p}') \ge s_n(\hat{p}).  
\end{align*}
Hence, going back to~\eqref{eq:simplify_s}, when we are in the case where $\hat{C}_{\mathrm{tot}} > \frac{{\binom{M}{2}}}{2}$, if $\frac{\hat{p}}{n} \le 0.5$,
\begin{align*}
    \hat{C}_{\mathrm{tot}} > \frac{{\binom{M}{2}}}{2} = 0.5n + 0.5n' \implies   \hat{q} = \hat{C}_{\mathrm{tot}} - \hat{p} > \hat{C}_{\mathrm{tot}} - 0.5n >  0.5n'. 
\end{align*}
Then, $s_n\left(\max_{|N_p|=n} \hat{p}\right) \le s_n(\hat{p}')$, for all $\hat{p}' \le 0.5n$, even if $\hat{p}' \le  \frac{n \hat{C}_{\mathrm{tot}}}{{\binom{M}{2}}}$. \\
On the other hand, in the case where $\hat{C}_{\mathrm{tot}} \le \frac{{\binom{M}{2}}}{2}$, it may be possible that both $\frac{\hat{p}}{n} \le 0.5$ and $\frac{\hat{q}}{n'} \le 0.5$. Therefore, to minimize $s_n(\hat{p})$, we should find the smallest $\hat{p}$ such that $\frac{\hat{C}_{\mathrm{tot}}-\hat{p}}{n'}=\frac{\hat{q}}{n'} \le 0.5$. Replacing $\hat{p}$ with $n \hat{p}_n$ and $N_p$ (resp. $N_q$) with $N'_p$ (resp. $N'_q$) completes the proof.  
\end{proof}

\section{On the Rank of the Active Clustering Problem}
\label{app:sec:rank-of-active-clustering}

In this section, we show that given $M$ items, the rank of the problem of identifying the underlying clustering of the items under the fixed-confidence regime is equal to $M-1$. Before we present the formal arguments, we reproduce here the definition of rank of a pure exploration problem from \cite{kaufmann2021mixture}.

\begin{definition}\cite[Definition~20]{kaufmann2021mixture}
    \label{def:rank-of-pure-exploration-problem}
    Fix constants $d, S, \bar{M}, R \in \mathbb{N}$. A sequential identification problem specified by a partition $\mathcal{O}=\bigcup_{s=1}^{S} \mathcal{O}_s$, where $O_s \subseteq \mathbb{R}^d$ for all $s \in [S]$, is said to have rank $R$ if for every $s \in [S]$,
    \begin{equation}
        \mathcal{O} \setminus \mathcal{O}_s = \bigcup_{v=1}^{\bar{M}} \bigg\{\boldsymbol{\lambda} \in \mathbb{R}^d: (\lambda_{k_1^{s,v}}, \ldots, \lambda_{k_R^{s,v}}) \in \mathcal{L}_{s, v}\bigg\}
        \label{eq:rank-definition}
    \end{equation}
    for a family of indices $k_r^{s,v} \in [d]$ and open sets $\mathcal{L}_{s,v}$ indexed by $r \in [R]$, $s \in [S]$, and $v \in [\bar{M}]$. In other words, the problem has rank $R$ if for each $s$, the set $\mathcal{O} \setminus \mathcal{O}_s$ is a finite union of sets that are each defined in terms of only $R$-tuples.
\end{definition}

In our context, we have
\[
\mathcal{O} = \mathcal{C} =  \bigcup_{K=1}^{M} \mathcal{C}^K = \bigcup_{K=1}^{M} \ \bigcup_{C \in \mathcal{C}^K} [C]. 
\]
Note that equivalence classes of the form $[C]$ partition the space $\mathcal{C}$ of all problem instances and represent all the possible cluster assignments of $M$ items with $K\in [M]$ clusters. 
Hence, the constant $\bar{M}$ is equal to the total number of possible partitions of $M$ items, popularly known as the Bell number \cite{moser1955asymptotic}.

If $C \in \mathcal{C}^K$, then the analogue of \eqref{eq:rank-definition} in our context is 
\[
\mathcal{C} \setminus [C] = \underbrace{\left(\bigcup_{\substack{\ell \in [M]: \\ \ell \neq K}} \ \bigcup_{C' \in \mathcal{C}^\ell} [C']\right)}_{\coloneqq  \ \mathcal{C}_1} \cup \underbrace{\left(\bigcup_{\substack{C' \in \mathcal{C}^K: \\ [C'] \neq [C]}} [C']\right)}_{\coloneqq \  \mathcal{C}_2}.
\]
We then note that each $C' \in \mathcal{C}_1$ is specified by $M-\ell$ ones in its associated $C_{=
}^{'}$ matrix for some $\ell \neq K$, and each $C' \in \mathcal{C}_2$ is specified by $M-K$ ones in its associated $C_{=}^{'}$ matrix. Thus, the rank of the problem is given by
\[
R = \max\{M-\ell: \ell \in [M]\} = M-1.
\]
In contrast, the rank of a best arm identification problem is $2$, even in the setting of multi-objective best arm identification, as noted in \cite{chen25b}. Thus, for large values of $M$, the rank of the crowdsourced clustering problem is exceedingly larger than that of best arm identification.

\bibliographystyle{IEEEtran}
\bibliography{alt-2026}

\end{document}